\documentclass{article}
\usepackage{fedhv_arxiv_preprint,times}

\usepackage{microtype}
\microtypesetup{expansion=false}
\usepackage{graphicx}
\usepackage{booktabs}
\usepackage{bm}
\usepackage{tabularx}
\usepackage{float}
\usepackage{xcolor}
\usepackage{array}
\usepackage{hyperref}
\hypersetup{
  hidelinks,
  pdftitle={FedHV: Low-Overhead Hypervolume Weighting for Federated Multi-Objective Optimization},
  pdfauthor={Amirardalan Dehghanpour; Seyed Mohammad Azimi-Abarghouyi; Christopher G. Brinton}
}
\usepackage{url}
\usepackage{subcaption}
\usepackage[section]{placeins}
\usepackage{dblfloatfix}
\usepackage{float}
\usepackage{tikz}
\usetikzlibrary{arrows.meta,positioning}
\definecolor{fedhvblue}{HTML}{EAF2F8}
\definecolor{fedhvgreen}{HTML}{E9F7F2}
\definecolor{fedhvline}{HTML}{315D73}
\newcommand{\thetav}{\bm{\theta}}

\floatstyle{ruled}
\newfloat{algorithm}{tbp}{loa}
\floatname{algorithm}{Algorithm}
\newcounter{algline}
\newenvironment{algorithmic}[1][]{%
  \setcounter{algline}{0}%
  \begin{list}{\scriptsize\arabic{algline}:}{\usecounter{algline}
  \setlength{\leftmargin}{2.5em}\setlength{\labelsep}{0.6em}
  \setlength{\itemsep}{1.0pt}\setlength{\parsep}{0pt}
  \setlength{\topsep}{2pt}}}{\end{list}}
\newcommand{\STATE}{\item}
\newcommand{\FOR}[1]{\item \textbf{for} #1 \textbf{do}}
\newcommand{\FORALL}[1]{\item \textbf{for all} #1}
\newcommand{\ENDFOR}{\item \textbf{end for}}

\usepackage{amsmath}
\usepackage{amssymb}
\usepackage{mathtools}
\usepackage{amsthm}

\usepackage[capitalize,noabbrev]{cleveref}

\def\wv{\bm{w}}

\newcommand{\sg}[1]{\operatorname{sg}\!\left(#1\right)}

\theoremstyle{plain}
\newtheorem{theorem}{Theorem}[section]
\newtheorem{proposition}[theorem]{Proposition}
\newtheorem{lemma}[theorem]{Lemma}
\newtheorem{corollary}[theorem]{Corollary}
\theoremstyle{definition}
\newtheorem{definition}[theorem]{Definition}
\newtheorem{assumption}[theorem]{Assumption}
\theoremstyle{remark}
\newtheorem{remark}[theorem]{Remark}

\title{FedHV: Low-Overhead Hypervolume Weighting for Federated Multi-Objective Optimization}

\author{
\textbf{Amirardalan Dehghanpour}%
\thanks{Corresponding author:
\texttt{ardalandehghan0@gmail.com}}\\
\textnormal{Independent Researcher}\\[1.2ex]
\textbf{Seyed Mohammad Azimi-Abarghouyi}\\
\textnormal{Chalmers University of Technology}\\
\textnormal{\texttt{azimimo@chalmers.se}}\\[1.2ex]
\textbf{Christopher G. Brinton}\\
\textnormal{Purdue University}\\
\textnormal{\texttt{cgb@purdue.edu}}
}

\begin{document}
\maketitle
\fancyhead[L]{\fedhvpreprintstatus}

\begin{abstract}
Task-wise federated multi-objective optimization (FedMOO) trains a shared model for competing prediction objectives under heterogeneous data, partial participation, and communication constraints. Existing methods commonly derive task weights from gradient or update geometry. This requires task-specific information or iterative server-side optimization. We introduce \textbf{FedHV}, which maps reference-relative objective slacks to closed-form inverse-slack weights. Each client optimizes one weighted loss and returns objective estimates with its model update. The protocol adds exactly $2m$ auxiliary scalars per participating client, yielding $\Theta(d+m)$ total per-client communication, compared with the $\Theta(md)$ task-specific communication of FSMGDA, and requires no additional synchronization stage. We analyze the resulting one-round-delayed weights under client heterogeneity, multi-step local updates, partial participation, and finite-sample objective reports. Under a fixed-horizon positive-slack reference condition, with the prescribed horizon-dependent step size and vanishing report error, FedHV achieves an $\mathcal{O}(T^{-1/2})$ rate for the average squared log-hypervolume gradient norm; persistent report error determines the resulting stationarity neighborhood. The same bound controls the squared Pareto-stationarity residual. Across six Dirichlet-partitioned non-IID settings from four vision benchmark families and three training seeds, FedHV exceeds FSMGDA and FedCMOO in mean accuracy in five settings. Among these methods and uniform scalarization, it achieves the highest worst-task accuracy in four settings and improves the difficult CIFAR-10 objective in both \textsc{CIFAR10--MNIST} settings.
\end{abstract}

\section{Introduction}
\label{sec:introduction}

Federated learning (FL) trains models from decentralized data without sharing clients' raw examples. In many applications, however, one shared model must serve several prediction tasks whose objectives may conflict. Optimizing their average can favor easier tasks and leave a difficult objective with poor performance. \emph{Task-wise federated multi-objective optimization} (FedMOO) instead seeks a shared model that balances multiple global task objectives \citep{yang2023fmol,askin2025fedcmoo}. This setting differs from client-wise robust or fair FL, where the objectives represent clients or demographic groups rather than prediction tasks \citep{mohri2019agnostic,li2020fair,hu2022federated}.

Adapting task trade-offs is difficult in FL because the server observes information from only a subset of heterogeneous clients, while multiple local updates introduce client drift. FSMGDA constructs a federated multi-gradient direction from objective-specific client updates, leading to $\Theta(md)$ per-client communication for $m$ tasks and a $d$-dimensional model \citep{yang2023fmol}. FedCMOO reduces this communication through randomized low-rank estimation, but it still estimates task-update geometry and solves an iterative projected weight problem at the server \citep{askin2025fedcmoo}. Objective values provide a lower-dimensional alternative: clients can report $m$ scalar losses, allowing the server to adapt task weights without constructing or communicating task-specific $d$-dimensional updates.

We propose \textbf{FedHV} (\emph{Federated Hypervolume}), which forms task weights directly from these objective reports. Given global task losses $f_i(\theta)$ and fixed reference losses $r_i$, FedHV uses
\[
w_i(\theta)\propto\bigl(r_i-f_i(\theta)\bigr)^{-1},
\]
so that tasks closer to their references receive greater weight. This inverse-slack rule follows from centralized single-solution hypervolume maximization \citep{miranda2016single}. FedHV obtains the reference vector from a short uniform-weight calibration phase, adds a positive margin, and freezes it before adaptive training.

At each round, selected clients evaluate their task losses at the common pre-update model and optimize one scalar loss using weights computed from the preceding round's reports. The new reports accompany the model updates and determine the next-round weights. FedHV therefore adds exactly $2m$ auxiliary scalars per participating client without an intermediate synchronization stage. The one-round delay is also central to the analysis: the implemented weights depend on noisy reports at the preceding global model rather than the current objectives. Our proof controls this discrepancy through the report error and the inter-round objective change, together with stochastic-gradient noise, partial-participation heterogeneity, and multi-step local drift.

\paragraph{Contributions.}
\begin{itemize}
    \item \textbf{Protocol.} We introduce a task-wise FedMOO method that computes closed-form inverse-slack weights from low-dimensional objective reports. FedHV uses one weighted local trajectory, adds $2m$ auxiliary scalars per participating client, and requires neither task-update geometry nor an additional synchronization stage.
    
    \item \textbf{Analysis.} We establish convergence for the one-round-delayed protocol under partial participation, client heterogeneity, stochastic multi-step local updates, and finite-sample objective reports. With vanishing report error and the prescribed step-size schedule, the average squared log-hypervolume gradient norm decreases as $\mathcal{O}(T^{-1/2})$. We also relate this quantity to the Pareto-stationarity residual and characterize reference-relative worst-task slack control.
    
    \item \textbf{Evaluation.} Across six Dirichlet-partitioned non-IID settings, FedHV exceeds FSMGDA and FedCMOO in mean accuracy in five settings and achieves the highest worst-task accuracy among all four evaluated methods in four. Ablations examine the effects of objective-report budget, client participation, and local computation.
\end{itemize}

\section{Related Work}
\label{sec:related-work}

\paragraph{Pareto-oriented multi-objective optimization.}
Multi-objective optimization seeks solutions for which no objective can be improved without degrading another. MGDA computes a minimum-norm convex combination of task gradients and uses the resulting direction to approach Pareto-stationary points \citep{desideri2012mgda,sener2018multi}. Pareto MTL extends this perspective by targeting several regions of the Pareto front rather than producing only one compromise solution \citep{lin2019pareto}. These methods explicitly define their updates through task-gradient geometry and Pareto-stationarity conditions.

\paragraph{Gradient balancing in multi-task learning.}
A related class of multi-task learning methods modifies task gradients to improve shared-model training without necessarily constructing an MGDA direction. GradNorm balances gradient magnitudes and relative training rates \citep{chen2018gradnorm}; PCGrad removes components of mutually conflicting gradients \citep{yu2020pcgrad}; and CAGrad balances average descent against gradient conflict \citep{liu2021cagrad}. Although these methods address negative transfer among tasks, their update rules differ from methods explicitly designed around the Pareto-stationarity residual. They also require separate task-gradient information and were developed primarily for centralized optimization \citep{hu2023scalarization}.

\paragraph{Federated multi-objective optimization.}
FMOL introduces FMGDA and FSMGDA for nonconvex federated multi-objective problems \citep{yang2023fmol}. FSMGDA supports multiple local steps but constructs its federated direction from objective-specific client updates, resulting in communication that scales with both the number of tasks and the model dimension. FedCMOO reduces this cost using randomized low-rank estimation of task-update geometry and then optimizes the task weights through projected server-side iterations \citep{askin2025fedcmoo}. FedHV does not estimate update geometry. It forms weights from an $m$-dimensional objective report and uses a single weighted local trajectory, replacing task-specific model information with objective values while retaining the usual model-update exchange.

\paragraph{Hypervolume-based optimization.}
Hypervolume is a reference-dependent and Pareto-compliant measure of joint objective quality \citep{zitzler2007hypervolume,auger2009theory,guerreiro2021hypervolume}. It has been optimized for a single neural-network solution and for learned Pareto sets or parameterized solution families \citep{miranda2016single,deist2021hypervolume,zhang2023hypervolume}. For a single solution, \citet{miranda2016single} derive the log-hypervolume objective and its inverse-slack weighting rule. FedHV uses this centralized identity as its scalarization rule; its contribution lies in the federated protocol and analysis. In particular, the weights are estimated from stochastic finite-sample client reports, held fixed during multiple local steps, and applied one round after the corresponding objectives are observed. Our analysis accounts jointly for this delay, partial participation, client heterogeneity, and local-update drift.

\section{FedHV}
\label{sec:method}

\subsection{Task-wise federated multi-objective optimization}
\label{subsec:fedmoo-setup}

We consider $N$ equal-size clients and $m$ prediction objectives.
Client $k$ has local risk $f_i^{(k)}(\theta)$ for task $i$, and the
corresponding global risk is
$f_i(\theta)=N^{-1}\sum_{k=1}^{N}f_i^{(k)}(\theta)$. Task-wise FedMOO
seeks a shared model $\theta\in\mathbb{R}^d$ that balances
\begin{equation}
\bm F(\theta)
=
\bigl(f_1(\theta),\ldots,f_m(\theta)\bigr)^\top .
\label{eq:moo}
\end{equation}
Let
$\Delta_m:=\{\lambda\in\mathbb{R}_{\geq0}^m:
\sum_i\lambda_i=1\}$. Under the standard definition
\citep{desideri2012mgda,sener2018multi}, $\theta^\star$ is
\emph{Pareto-stationary} if some $\lambda\in\Delta_m$ satisfies
\begin{equation}
\sum_{i=1}^{m}\lambda_i\nabla f_i(\theta^\star)
=
\mathbf{0}.
\label{eq:pareto-stationarity}
\end{equation}

\subsection{Log-hypervolume weighting}
\label{subsec:hv-weights}

Let $\bm r=(r_1,\ldots,r_m)^\top$ be a fixed vector of reference losses
and define the reference-relative slack
$s_i(\theta):=r_i-f_i(\theta)$. On the domain where all slacks are
positive, FedHV minimizes the negative log-hypervolume
\begin{equation}
\Phi(\theta)
=
-\sum_{i=1}^{m}\log\bigl(r_i-f_i(\theta)\bigr).
\label{eq:loghv}
\end{equation}
The ideal weights computed from the true global objectives are
\begin{equation}
w_i^\star(\theta)
=
\frac{\bigl(r_i-f_i(\theta)\bigr)^{-1}}
{\sum_{j=1}^{m}\bigl(r_j-f_j(\theta)\bigr)^{-1}},
\qquad i=1,\ldots,m.
\label{eq:inverse-slack-weights}
\end{equation}
Writing
$c(\theta):=\sum_j(r_j-f_j(\theta))^{-1}$ gives
\begin{equation}
\nabla\Phi(\theta)
=
c(\theta)
\sum_{i=1}^{m}
w_i^\star(\theta)\nabla f_i(\theta).
\label{eq:weighted-gradient-identity}
\end{equation}
Tasks closer to their references therefore receive greater weight, and
normalization changes only the positive scale of the resulting
gradient. The inverse-slack identity is due to
\citet{miranda2016single}; FedHV uses it within a federated protocol.

The reference is constructed before adaptive training using
uniform-weight calibration. Each two-task benchmark family uses one
$20$-round calibration run. For \textsc{CIFAR10--MNIST}, the reference
obtained under the $\alpha=0.30$ partition is reused for both reported
settings. The \textsc{CelebA5} reference is constructed from three
independent $20$-round runs, corresponding to $60$ sequential-round
equivalents, and is shared across both \textsc{CelebA5} settings and all
adaptive-training seeds. Adaptive models are reinitialized rather than
continued from the calibration runs. Calibration is therefore a
one-time preprocessing cost and does not add synchronization to the
adaptive rounds. Appendix~\ref{app:reference-selection} gives the
construction, margins, and resulting references.

\subsection{Federated protocol}
\label{subsec:fedhv-protocol}

At round $t$, the server holds the global model $\theta^t$ and weight
vector $\wv^t$. The initial weights are
$\wv^0=\frac{1}{m}\mathbf{1}_m$; subsequent weights are computed from
the preceding aggregate objective report. The server samples
$S_t\subseteq\{1,\ldots,N\}$ and broadcasts $(\theta^t,\wv^t)$.

Before local optimization, each selected client $k\in S_t$ estimates
$\widehat{\bm F}_k^{\,t}$ at the common pre-update model $\theta^t$.
The report may use $B_{\mathrm{rep}}$ minibatches or the client's
complete local dataset. Client $k$ then performs $K$ local SGD steps on
the scalarized loss
\begin{equation}
L_k^t(\theta;B)
=
\sum_{i=1}^{m}w_i^t\,\ell_i(\theta;B).
\label{eq:local-weighted-loss}
\end{equation}
The server computes $\wv^t$ once before local training, and each client
holds it fixed throughout all $K$ steps. Consequently, clients neither
recompute the weights nor differentiate through the weight map during
local SGD. This design permits one local training trajectory and avoids
the task-specific gradient calculations that would be required to
update the weights within the round.

Let
$\Delta\theta_k^t:=\theta_{k,K}^t-\theta^t$ denote the resulting model
update. Client $k$ returns
$(\Delta\theta_k^t,\widehat{\bm F}_k^{\,t})$ in one uplink, and the
server averages both quantities:
\begin{equation}
\theta^{t+1}
=
\theta^t
+
\eta_g\frac{1}{|S_t|}
\sum_{k\in S_t}\Delta\theta_k^t,
\qquad
\widehat{\bm F}^{\,t}
=
\frac{1}{|S_t|}
\sum_{k\in S_t}\widehat{\bm F}_k^{\,t}.
\label{eq:fedhv-aggregation}
\end{equation}

The server forms the next weight vector by applying a numerical floor
to the estimated slacks:
\begin{equation}
\begin{aligned}
\widehat{s}_i^t
&:=
\max\left\{
r_i-\widehat f_i^{\,t},
\rho_{\mathrm{clip}}
\right\},
\qquad \rho_{\mathrm{clip}}>0,\\
w_i^{t+1}
&:=
\frac{(\widehat{s}_i^t)^{-1}}
{\sum_{j=1}^{m}(\widehat{s}_j^t)^{-1}}.
\end{aligned}
\label{eq:fedhv_weights}
\end{equation}
We denote this stabilized mapping by
$W_{\rho_{\mathrm{clip}}}(\widehat{\bm F};\bm r)$. The floor is a
numerical failsafe for noisy or anomalous reports: it keeps the
implemented weights finite but neither asserts nor enforces positivity
of the true slacks $r_i-f_i(\theta)$. The convergence analysis instead
requires the true objectives to remain inside a positive-slack domain.
\begingroup
\setlength{\textfloatsep}{6pt plus 0pt minus 0pt}
\begin{figure}[t]
\centering
\captionsetup{font=small,skip=0.5pt}

\begin{minipage}[t]{0.56\linewidth}
\vspace{0pt}
\centering
\begin{tikzpicture}[
  scale=0.86,
  transform shape,
  node distance=1.0mm,
  every node/.style={font=\scriptsize},
  box/.style={
    draw=fedhvline,
    line width=0.6pt,
    rounded corners=2pt,
    fill=fedhvblue,
    text width=4.45cm,
    align=center,
    inner xsep=3pt,
    inner ysep=2pt
  },
  clients/.style={box, fill=fedhvgreen},
  flow/.style={
    -{Latex[length=1.6mm,width=1.1mm]},
    line width=0.6pt,
    draw=fedhvline
  }
]

\node[box] (server) {
  \textbf{1. Server broadcast}\\
  Send $\theta^t$ and $\wv^t$, where
  $\wv^0=\frac{1}{m}\mathbf{1}_m$ and
  $\wv^t=W_{\rho_{\mathrm{clip}}}
  (\widehat{\bm F}^{\,t-1};\bm r)$ for $t\geq1$
};

\node[clients, below=of server] (client) {
  \textbf{2. Selected clients in parallel}\\
  Evaluate $\widehat{\bm F}_k^{\,t}$ at $\theta^t$;
  run $K$ SGD steps on $\sum_i w_i^t\ell_i$; return
  $(\Delta\theta_k^t,\widehat{\bm F}_k^{\,t})$
};

\node[box, below=of client] (aggregate) {
  \textbf{3. Server aggregation}\\
  Average the model updates and reports to obtain
  $\theta^{t+1}$ and $\widehat{\bm F}^{\,t}$
};

\node[box, below=of aggregate] (weight) {
  \textbf{4. Next-round weights}\\
  Form $\widehat{\bm s}^{\,t}$ and set
  $w_i^{t+1}\propto(\widehat s_i^t)^{-1}$
};

\draw[flow] (server) -- (client);
\draw[flow] (client) -- (aggregate);
\draw[flow] (aggregate) -- (weight);
\draw[flow,dashed]
  (weight.west) -- ++(-0.30,0) |- (server.west)
  node[pos=0.25,left,align=right,font=\tiny]
  {next\\round};

\end{tikzpicture}
\end{minipage}
\hfill
\begin{minipage}[t]{0.41\linewidth}
\vspace{0pt}
\centering
\scriptsize
\textbf{Per-round method comparison}\par\vspace{1pt}

\renewcommand{\arraystretch}{1.05}
\setlength{\tabcolsep}{0.6pt}

\begin{tabularx}{\linewidth}{@{}
  >{\raggedright\arraybackslash}X
  >{\centering\arraybackslash}p{0.46in}
  >{\centering\arraybackslash}p{0.50in}
  >{\centering\arraybackslash}p{0.44in}@{}}
\toprule
\textbf{Property}
& \textbf{FSMGDA}
& \textbf{FedCMOO}
& \textbf{FedHV} \\
\midrule

Task-specific geometry
& $\checkmark$
& $\checkmark$
& \textbf{No} \\

Server-side weight optimization
& MinNorm
& Iter. proj.
& \textbf{Closed form} \\

Single local scalar loss
& No
& $\checkmark$
& $\mathbf{\checkmark}$ \\

Per-client communication
& $\Theta(md)$
& $\Theta(d)$
& $\mathbf{\Theta(d{+}m)}$ \\

\bottomrule
\end{tabularx}
\end{minipage}

\caption{\textbf{One FedHV communication round.}
Reports from round $t-1$ determine the weights for round $t$ and
accompany the ordinary model exchange. Right: dominant per-round
operations and per-client communication; Appendix~\ref{app:complexity}
gives exact counts.}
\label{fig:fedhv-round}
\end{figure}
\endgroup
Reports collected in round $t$ determine $\wv^{t+1}$ rather than the
weights used by the same client cohort. This one-round ordering fixes
$\wv^t$ before the current clients and minibatches are sampled, avoids
an intermediate server--client exchange, and allows the weights to
remain constant during local optimization. The weights accompany the
model downlink and the reports accompany the model-update uplink,
adding exactly $2m$ auxiliary scalars per participating client without
an additional synchronization stage. Appendix~\ref{app:fedhv-algorithm}
provides complete pseudocode.

\section{Theoretical Guarantees}
\label{sec:theory}

\subsection{Assumptions and main result}

We analyze the one-round-delayed protocol over a fixed horizon
$T\geq1$. Appendix~\ref{app:theory-assumptions} defines the probability
space and states the assumptions formally.

\paragraph{Standing assumptions.}
Clients have equal aggregation weights, and each round samples $M$ of
the $N$ clients uniformly without replacement, with participation rate
$p=M/N$. Local objectives are nonnegative and smooth. Minibatch
gradients are conditionally unbiased with variance bounded by
$\sigma_i^2$, and client gradient dissimilarity is bounded by
$\zeta_i^2$ along the analyzed trajectory. Multiple local steps
introduce the corresponding client-drift term.

\paragraph{Reference-domain condition.}
The fixed reference must satisfy
\begin{equation}
r_i-f_i(\theta)\geq\rho>0,
\qquad i\in[m],
\label{eq:main-safe-reference}
\end{equation}
at every global, local, and auxiliary point used by the fixed-horizon
proof. This condition keeps the log-hypervolume objective and its
gradient well defined. The numerical floor
$0<\rho_{\mathrm{clip}}\leq\rho$ acts only on estimated slacks and does
not establish Equation~\eqref{eq:main-safe-reference} for the true
objectives.

Appendix~\ref{app:reference-certificate} gives a conditional
high-probability certificate for preserving the safe domain. Its
intuition is as follows. Suppose the initial model has a verified
positive true-slack margin and the weighted stochastic local gradients
are almost surely bounded. A round-excursion condition then limits the
movement of every local, global, and auxiliary iterate, keeping an
entire round inside a smaller positive-slack region whenever the round
begins in the safe region. A horizon-dependent budget controls the
cumulative change in $\Phi$ and implies that the global iterates remain
safe with probability at least $1-\delta$. These conditions explain
how a safe initialization and bounded updates can preserve the
reference margin; they are sufficient conditions and are not certified
by the empirical calibration procedure alone.

Define the aggregate stochastic-gradient variance and the
finite-population participation factor by
\begin{equation}
\Sigma^2
:=
\sum_{i=1}^{m}\sigma_i^2,
\qquad
\chi_N(M)
:=
\frac{N-M}{M(N-1)}
=
\frac{1-p}{p(N-1)}.
\label{eq:main-sigma-h}
\end{equation}
The partial-participation heterogeneity term is
\[
\mathcal H_N(p;\zeta)
:=
\chi_N(M)\sum_{i=1}^{m}\zeta_i^2,
\]
which vanishes under full participation.

Define the server-report error by
\begin{equation}
\Xi_{\mathrm{loss},T}
:=
\max_{0\leq t<T}
\mathbb E\!\left[
\left\|
\widehat{\bm F}^{\,t}-\bm F(\theta^t)
\right\|_\infty^2
\right].
\label{eq:main-xi-loss}
\end{equation}
For the finite-population reporting procedure used by FedHV,
Appendix~\ref{app:theory-report} establishes
\begin{equation}
\Xi_{\mathrm{loss},T}
\leq
\sum_{i=1}^{m}
\left[
\chi_N(M)\omega_{i,T}^2+\tau_{i,T}^2
\right],
\label{eq:main-report-bound}
\end{equation}
where $\omega_{i,T}^2$ measures between-client variation in objective
$i$, and $\tau_{i,T}^2$ measures finite-sample error in the local
objective reports. This decomposition gives two direct ways to tighten
the report term. Increasing the report budget reduces
$\tau_{i,T}^2$, and evaluating each selected client's complete local
dataset makes this component zero. Increasing $M$ reduces the
client-subsampling coefficient $\chi_N(M)$, which becomes zero under
full participation. Complete local evaluation therefore removes
within-client reporting noise but does not remove the error caused by
sampling only a subset of heterogeneous clients.

For $t\geq1$, FedHV uses
\[
\wv^t
=
W_{\rho_{\mathrm{clip}}}
\bigl(\widehat{\bm F}^{\,t-1};\bm r\bigr),
\]
whereas the ideal current weight is
\[
\wv^{\star,t}
=
W_{\rho_{\mathrm{clip}}}
\bigl(\bm F(\theta^t);\bm r\bigr).
\]
The Lipschitz property of the stabilized inverse-slack map, proved in
Appendix~\ref{app:theory-weight-map}, gives
\begin{align}
\|\wv^t-\wv^{\star,t}\|_1
\leq
\frac{2}{\rho_{\mathrm{clip}}}
\Big(
&\|\widehat{\bm F}^{\,t-1}
-\bm F(\theta^{t-1})\|_\infty
\nonumber\\
&+
\|\bm F(\theta^{t-1})
-\bm F(\theta^t)\|_\infty
\Big).
\label{eq:main-stale-decomposition}
\end{align}
The first term is the preceding report error, while the second measures
the inter-round change in the global objectives. This decomposition
makes the effect of one-round staleness explicit rather than treating
the implemented weights as if they were evaluated at $\theta^t$.

\begin{theorem}[FedHV stationarity]
\label{thm:fedhv}
Under Assumptions~\ref{as:uniform-aggregation},
\ref{as:smooth}, \ref{as:hetero}, \ref{as:slack},
\ref{as:steps}, and~\ref{as:objective-reports}, let
$\eta_{\mathrm{eff}}:=\eta_g\eta_\ell K$. There exist finite constants
$A_1,\ldots,A_7$, independent of
$T$, $N$, $p$, $\eta_\ell$, $\eta_g$, and $K$, such that
\begin{align}
\frac{1}{T}
\sum_{t=0}^{T-1}
\mathbb E\|\nabla\Phi(\theta^t)\|^2
\leq{}&
\frac{A_1\Delta_\Phi}{\eta_{\mathrm{eff}}T}
+
A_2\eta_{\mathrm{eff}}\Sigma^2
+
A_3\eta_{\mathrm{eff}}\mathcal H_N(p;\zeta)
\nonumber\\
&+
A_4\eta_\ell^2(K-1)(2K-1)
+
A_5\Xi_{\mathrm{loss},T}
\nonumber\\
&+
A_6\eta_{\mathrm{eff}}^2\mathcal V
+
\frac{A_7}{T},
\label{eq:main-fedhv-bound}
\end{align}
where
\[
\Delta_\Phi
:=
\Phi(\theta^0)-\Phi_{\mathrm{lb}},
\qquad
\Phi_{\mathrm{lb}}
:=
-\sum_{i=1}^{m}\log r_i.
\]
Here, $\Phi_{\mathrm{lb}}$ is a lower bound on $\Phi$, not necessarily
its global minimum, and $\mathcal V$ bounds the inter-round displacement
that enters the stale-weight error. Appendix
\ref{app:theory-convergence} specifies $\mathcal V$ and the constants
$A_1,\ldots,A_7$.
\end{theorem}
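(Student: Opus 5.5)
The plan is a standard smooth nonconvex descent argument for $\Phi$, where the update direction is a scaled, noisy, drift-perturbed, stale version of $\nabla\Phi$.

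\textbf{Step 1 (properties of $\Phi$ on the safe domain).} Using the reference-domain condition \eqref{eq:main-safe-reference} and smoothness, we establish the following facts.
\begin{itemize}
\item On the safe domain, $c(\theta)\in[m/\max_i r_i,\; m/\rho]$, using nonnegativity of the $f_i$.
\item $\|\nabla f_i\|$ is bounded along the analyzed trajectory (this is used implicitly through the assumptions).
\item $\Phi$ is $L_\Phi$-smooth there, with $L_\Phi$ depending on $\rho$, the $L_i$, and the gradient bounds.
\item $\Phi\geq\Phi_{\mathrm{lb}}$, because $r_i-f_i\leq r_i$.
\end{itemize}

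\textbf{Step 2 (decomposing the update).} The aggregated update is
\[
\theta^{t+1}-\theta^t=-\eta_g\eta_\ell\frac1M\sum_{k\in S_t}\sum_{q<K} g_k^{t,q},
\]
with $g_k^{t,q}$ the stochastic gradient of $L_k^t$. We write the mean direction as $\eta_{\mathrm{eff}}\sum_i w_i^t\nabla f_i(\theta^t)$ plus three error terms:
\begin{enumerate}
\item a martingale noise term, giving $\Sigma^2/(MK)$-type variance (bounded crudely by $\Sigma^2$ since $\sum_i (w_i^t)^2\leq1$);
\item a partial-participation sampling term, with variance $\chi_N(M)\sum_i\zeta_i^2$;
\item a client-drift term, bounded by the usual local-SGD lemma as $\eta_\ell^2(K-1)(2K-1)$ times smoothness/heterogeneity/variance constants. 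This requires $\eta_\ell K L$ small, which Assumption~\ref{as:steps} provides.
\end{enumerate}
Crucially, $\wv^t$ is $\mathcal F_t$-measurable because it is fixed before sampling, so the noise and sampling terms have conditional mean zero.

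\textbf{Step 3 (descent inequality).} Apply the descent lemma:
\[
\Phi(\theta^{t+1})\leq\Phi(\theta^t)+\langle\nabla\Phi(\theta^t),\theta^{t+1}-\theta^t\rangle+\tfrac{L_\Phi}{2}\|\theta^{t+1}-\theta^t\|^2 .
\]
Write
\[
\sum_i w_i^t\nabla f_i=\frac{\nabla\Phi}{c(\theta^t)}+\sum_i(w_i^t-w_i^{\star,t})\nabla f_i ,
\]
noting that the clip is inactive for the true slacks since $\rho_{\mathrm{clip}}\leq\rho$, so $\wv^{\star,t}$ equals the ideal weights \eqref{eq:inverse-slack-weights}. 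The inner-product term then yields $-\eta_{\mathrm{eff}}c^{-1}\|\nabla\Phi\|^2$. The error terms are handled by Young's inequality, costing half of this descent. The staleness term is bounded by $G^2\|\wv^t-\wv^{\star,t}\|_1^2$, and we then use \eqref{eq:main-stale-decomposition}:
\begin{itemize}
\item the report part gives $\Xi_{\mathrm{loss},T}$ after squaring and taking expectation;
\item the objective change part is bounded by $G\|\theta^t-\theta^{t-1}\|$, whose second moment is bounded by $\eta_{\mathrm{eff}}^2\mathcal V$, with $\mathcal V$ collecting gradient, variance, and heterogeneity bounds.
\end{itemize}
The round $t=0$ uses the uniform weights $\wv^0$, whose mismatch with $\wv^{\star,0}$ is at most $2$ in $\ell_1$. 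This produces an $O(1)$ one-off error, which after averaging contributes $A_7/T$.

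\textbf{Step 4 (telescoping and division).} Telescope over $t<T$, bound $\Phi(\theta^T)\geq\Phi_{\mathrm{lb}}$, and divide by $\eta_{\mathrm{eff}}T/(2\max c^{-1})$. This gives the stated bound, with constants depending only on $m,\rho,r,L_i,G$, and hence independent of $T,N,p,\eta_\ell,\eta_g,K$.

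The main obstacle is Step 3's staleness coupling. The staleness error enters multiplied by $\eta_{\mathrm{eff}}$ in the inner product and is then divided back out, so the report term appears unscaled as $A_5\Xi$ and the displacement term as $\eta_{\mathrm{eff}}^2\mathcal V$. Keeping it this way, rather than creating a recursive dependence on $\|\nabla\Phi(\theta^{t-1})\|^2$, requires bounding $\|\theta^t-\theta^{t-1}\|^2$ by an a priori constant $\mathcal V$ instead of via the gradient. That is the choice I would make first.
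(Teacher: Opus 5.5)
Your proposal is correct and is essentially the paper's proof. The shared steps are: the colinearity $\nabla\Phi=c^t\nabla L_{\wv^{\star,t}}$; a bias--variance--drift decomposition that relies on the stale weights being $\mathcal F_t$-measurable; Young's inequality on the drift bias and on $e_t=\nabla L_{\wv^t}(\theta^t)-\nabla L_{\wv^{\star,t}}(\theta^t)$; the Lipschitz weight-map bound, with the inter-round displacement controlled a priori by $\eta_{\mathrm{eff}}^2\mathcal V$; a one-off $t=0$ term giving $A_7/T$; and telescoping against $\Phi_{\mathrm{lb}}$. The paper bounds drift through a deterministic virtual trajectory, but a standard recursion anchored at $\theta^t$ also works under its heterogeneity assumption.

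Two points you leave implicit are explicit in the paper. First, the gradient bound is not assumed; it is derived from nonnegativity and smoothness as $\|\nabla f_i\|^2\le 2L_if_i\le 2L_ir_i$. Second, in the quadratic descent term, $\|\nabla L_{\wv^t}(\theta^t)\|^2$ must be split into $2\|\nabla\Phi\|^2/c_{\min}^2+2\|e_t\|^2$ and absorbed via $\eta_{\mathrm{eff}}\le\bar\eta_{\mathrm{eff}}$. If you instead bounded it a priori by $G_{\max}^2$ there, as you do for the staleness displacement, an $\eta_{\mathrm{eff}}G_{\max}^2$ term would remain that does not fit the stated bound.
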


The bound separates the initial objective gap, stochastic-gradient
variance, heterogeneity under partial participation, multi-step local
drift, report error, and inter-round weight staleness. The
$A_7/T$ term accounts for the initial uniform weight vector. In
particular, the nonvanishing neighborhood is not an unspecified error
floor: its reporting component can be reduced through a larger report
budget or greater client participation, subject to the corresponding
communication and evaluation costs.

\begin{corollary}[Rate and stationarity neighborhood]
\label{cor:fedhv-rate-main}
Under the conditions of Theorem~\ref{thm:fedhv}, fix $\eta_g>0$ and,
for a target horizon $T$, choose
\[
\eta_\ell
=
\Theta\!\left(
\frac{1}{\eta_gK\sqrt{T}}
\right)
\]
within the stability range of Assumption~\ref{as:steps}. Then
\begin{equation}
\frac{1}{T}
\sum_{t=0}^{T-1}
\mathbb E\|\nabla\Phi(\theta^t)\|^2
\leq
\mathcal O(T^{-1/2})
+
A_5\Xi_{\mathrm{loss},T}.
\label{eq:main-fedhv-rate}
\end{equation}
Thus, vanishing report error yields an $\mathcal O(T^{-1/2})$ ergodic
rate, whereas persistent report error determines a stationarity
neighborhood. The schedule depends on the target horizon and differs
from the fixed step sizes used in the experiments. Appendix
\ref{app:theory-convergence} gives the explicit rate and best-iterate
statement.
\end{corollary}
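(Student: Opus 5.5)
The corollary follows from Theorem~\ref{thm:fedhv} by substituting the step-size schedule into Equation~\eqref{eq:main-fedhv-bound} and checking the order of each term. With $\eta_g$ fixed and $\eta_\ell=\Theta(1/(\eta_gK\sqrt{T}))$, the effective step is $\eta_{\mathrm{eff}}=\eta_g\eta_\ell K=\Theta(T^{-1/2})$. The constants $A_1,\ldots,A_7$ do not depend on $T$, $\eta_\ell$, $\eta_g$, $K$, $N$ or $p$, and neither do $\Delta_\Phi$, $\Sigma^2$ and $\mathcal H_N(p;\zeta)$. We must still confirm from Appendix~\ref{app:theory-convergence} that $\mathcal V$ is bounded uniformly in $T$ under Assumption~\ref{as:steps}; this is the step I expect to need care. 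Since $\mathcal V$ bounds the inter-round displacement normalized by $\eta_{\mathrm{eff}}$, it should be a fixed combination of gradient bounds, $\Sigma^2$ and heterogeneity, rather than something that grows with the horizon.

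The terms are then handled one by one.
\begin{itemize}
\item $A_1\Delta_\Phi/(\eta_{\mathrm{eff}}T)=\Theta(T^{-1/2})$.
\item $A_2\eta_{\mathrm{eff}}\Sigma^2$ and $A_3\eta_{\mathrm{eff}}\mathcal H_N(p;\zeta)$ are both $\mathcal O(T^{-1/2})$.
\item The drift term satisfies $A_4\eta_\ell^2(K-1)(2K-1)\le 2A_4\eta_\ell^2K^2=2A_4\eta_{\mathrm{eff}}^2/\eta_g^2=\mathcal O(T^{-1})$, because $\eta_g$ is fixed.
\item $A_6\eta_{\mathrm{eff}}^2\mathcal V=\mathcal O(T^{-1})$.
\item $A_7/T=\mathcal O(T^{-1})$.
\end{itemize}
Since $T\ge1$ gives $T^{-1}\le T^{-1/2}$, all of these collapse into a single $\mathcal O(T^{-1/2})$. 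The report term $A_5\Xi_{\mathrm{loss},T}$ is carried through unchanged, which gives Equation~\eqref{eq:main-fedhv-rate}.

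The step size must also lie in the stability range of Assumption~\ref{as:steps}. That range is presumably an upper bound on $\eta_\ell$, or on $\eta_{\mathrm{eff}}$ and $\eta_\ell K$, in terms of smoothness constants and $\rho$. Because the schedule decreases in $T$, it satisfies the bound for all $T\ge T_0$ once the hidden constant is small enough. For $T<T_0$ we can instead take the minimum of the schedule and the stability cap. This changes only $T$-independent constants, since the left-hand side is finite for the finitely many small $T$.

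For the interpretation, suppose $\Xi_{\mathrm{loss},T}\to0$. Equation~\eqref{eq:main-report-bound} shows this happens, for example, when $\tau_{i,T}^2\to0$ and $\chi_N(M)\omega_{i,T}^2\to0$. If this decay is at least as fast as $T^{-1/2}$, the rate is $\mathcal O(T^{-1/2})$. Otherwise the limit of $A_5\Xi_{\mathrm{loss},T}$ defines the stationarity neighborhood. The best-iterate statement follows because the minimum over $t$ of $\mathbb E\|\nabla\Phi(\theta^t)\|^2$ is at most the average; equivalently, a uniformly sampled iterate achieves the same bound.
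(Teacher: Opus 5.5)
Your proposal is correct and follows the paper's argument: Corollary~\ref{cor:fedhv-rate-explicit} substitutes $\eta_\ell=\eta_0/(\eta_g K\sqrt{T})$, so that $\eta_{\mathrm{eff}}=\eta_0/\sqrt{T}$, and keeps $A_5\Xi_{\mathrm{loss},T}$ separate. It settles your concern about $\mathcal V$ by writing it as $\mathcal V_T$, whose only step-dependent part carries the factor $\eta_0^2/(\eta_g^2K^2T)$, and it obtains the best-iterate statement from min $\le$ average, as you do. One simplification: choosing $\eta_0\le\min\{\eta_g c_1/L_w,\bar\eta_{\mathrm{eff}}\}$ makes the schedule satisfy Assumption~\ref{as:steps} for every $T\ge1$ at once, so your separate treatment of small $T$ is unnecessary.
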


The proof first applies
Equation~\eqref{eq:main-stale-decomposition} to separate report error
from inter-round objective change. A local-SGD
bias--variance--drift decomposition then controls the expected update
and its second moment. Applying smoothness of $\Phi$ to the resulting
one-round descent inequality and summing over $t$ yields
Theorem~\ref{thm:fedhv}; Appendix~\ref{app:theory-proofs} gives the
complete argument.

\subsection{Relation to Pareto stationarity}

Define the Pareto-stationarity residual
\[
\mathcal P(\theta)
:=
\min_{\lambda\in\Delta_m}
\left\|
\sum_{i=1}^{m}
\lambda_i\nabla f_i(\theta)
\right\|.
\]

\begin{proposition}[Quantitative Pareto relation]
\label{prop:pareto-main}
Let
\[
c(\theta)
:=
\sum_{i=1}^{m}
\frac{1}{r_i-f_i(\theta)},
\qquad
c_{\min}
:=
\sum_{i=1}^{m}
\frac{1}{r_i}.
\]
On the safe region,
\begin{equation}
\mathcal P(\theta)
\leq
\frac{\|\nabla\Phi(\theta)\|}{c(\theta)}
\leq
\frac{\|\nabla\Phi(\theta)\|}{c_{\min}}.
\label{eq:main-pareto-bound}
\end{equation}
\end{proposition}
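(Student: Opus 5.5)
The plan is to exhibit an explicit feasible $\lambda$ in the definition of $\mathcal P(\theta)$, namely the ideal inverse-slack weights $\lambda=\wv^\star(\theta)$ from Equation~\eqref{eq:inverse-slack-weights}. We then bound the minimum by the value at this choice.

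First, we check feasibility. On the safe region every slack satisfies $r_i-f_i(\theta)\geq\rho>0$, so each $w_i^\star(\theta)$ is well defined and positive. The weights sum to one by construction, hence $\wv^\star(\theta)\in\Delta_m$. Consequently
\[
\mathcal P(\theta)\leq\Bigl\|\sum_{i=1}^m w_i^\star(\theta)\nabla f_i(\theta)\Bigr\|.
\]
Second, we invoke the weighted-gradient identity~\eqref{eq:weighted-gradient-identity}. It gives $\sum_i w_i^\star(\theta)\nabla f_i(\theta)=\nabla\Phi(\theta)/c(\theta)$, and $c(\theta)>0$ because it is a sum of positive terms. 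Taking norms yields the first inequality, $\mathcal P(\theta)\leq\|\nabla\Phi(\theta)\|/c(\theta)$. The identity itself follows from differentiating $-\log(r_i-f_i)$ to get $\nabla f_i/(r_i-f_i)$, then factoring out $c(\theta)$.

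Third, we show $c(\theta)\geq c_{\min}$. The standing assumption states that local objectives are nonnegative, so the global risks satisfy $f_i(\theta)\geq0$. Hence $0<r_i-f_i(\theta)\leq r_i$, which implies $r_i>0$ and $1/(r_i-f_i(\theta))\geq1/r_i$ for each $i$. Summing over $i$ gives $c(\theta)\geq c_{\min}>0$. Therefore $1/c(\theta)\leq1/c_{\min}$, and the second inequality follows.

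The only delicate point is this last step. It relies on nonnegativity of the $f_i$, which is part of Assumption~\ref{as:smooth} as summarized in the standing assumptions, and on the positivity of the slacks to keep every reciprocal well defined and order-reversing. Everything else is a direct substitution into the definition of $\mathcal P$.
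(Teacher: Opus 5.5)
Your proposal is correct and follows essentially the same route as the paper's proof: take the normalized inverse-slack weights $\wv^\star(\theta)\in\Delta_m$ as a feasible choice in the definition of $\mathcal P(\theta)$, use $\sum_i w_i^\star(\theta)\nabla f_i(\theta)=\nabla\Phi(\theta)/c(\theta)$, and then bound $c(\theta)\geq c_{\min}$ via $0<r_i-f_i(\theta)\leq r_i$, which uses nonnegativity of the losses. Your explicit checks that the slacks are positive and that $c(\theta)>0$ are appropriate, and they match the assumptions the paper relies on.
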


The normalized inverse-slack weights belong to $\Delta_m$ and are
therefore a feasible choice in the definition of
$\mathcal P(\theta)$. Nonnegative losses also imply
$c(\theta)\geq c_{\min}$. Consequently,
$\nabla\Phi(\theta)=\mathbf{0}$ implies Pareto stationarity, and
Theorem~\ref{thm:fedhv} controls the corresponding squared Pareto
residual. FedHV obtains this relation while each client differentiates
only the scalar loss in Equation~\eqref{eq:local-weighted-loss}; the
algorithm does not construct per-task gradient geometry, test
pairwise gradient conflicts, or solve a minimum-norm problem.
Appendix~\ref{app:theory-pareto} gives the exact and averaged forms.

\subsection{Hypervolume-induced worst-task slack control}

\begin{proposition}[Minimum-slack control]
\label{prop:slack-floor-main}
Let $s_i(\theta):=r_i-f_i(\theta)>0$, suppose
$s_j(\theta)\leq S_j$ for all $j\in[m]$, and define
\[
R_{\max}(\bm S)
:=
\max_{i\in[m]}
\prod_{j\neq i}S_j.
\]
Then
\begin{equation}
\min_{i\in[m]}s_i(\theta)
\geq
\frac{e^{-\Phi(\theta)}}{R_{\max}(\bm S)}.
\label{eq:main-slack-floor}
\end{equation}
For nonnegative losses, one may take $S_j=r_j$.
\end{proposition}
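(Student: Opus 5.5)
The argument exponentiates the definition of $\Phi$ and then isolates the smallest slack. On the safe region, Equation~\eqref{eq:loghv} gives
\[
e^{-\Phi(\theta)}=\prod_{j=1}^{m}s_j(\theta),
\]
and every factor is strictly positive. Let $i^\star\in\arg\min_i s_i(\theta)$. Split the product as
\[
e^{-\Phi(\theta)}=s_{i^\star}(\theta)\prod_{j\neq i^\star}s_j(\theta).
\]

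The upper bounds $0<s_j(\theta)\leq S_j$ control the remaining factors. Since all factors are positive, the bounds can be multiplied termwise to obtain
\[
\prod_{j\neq i^\star}s_j(\theta)\leq\prod_{j\neq i^\star}S_j\leq R_{\max}(\bm S).
\]
The second inequality holds because $R_{\max}(\bm S)$ is a maximum over all choices of the excluded index. This also means the unknown identity of the minimizing task never needs to be known.

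Combining the two displays gives $e^{-\Phi(\theta)}\leq s_{i^\star}(\theta)\,R_{\max}(\bm S)$. The product $\prod_{j\neq i^\star}s_j(\theta)$ is positive, so $R_{\max}(\bm S)>0$ and we may divide by it. This yields Equation~\eqref{eq:main-slack-floor}. In the edge case $m=1$ the product is empty and equals $1$, and the bound reduces to the identity $s_1=e^{-\Phi}$.

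For the final claim, nonnegative losses give $s_j(\theta)=r_j-f_j(\theta)\leq r_j$, so $S_j=r_j$ is admissible. The positivity $r_j>0$ follows from $r_j\geq s_j(\theta)>0$.

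There is no real obstacle here. The only points that need care are the positivity used to multiply the inequalities and to divide by $R_{\max}$, and the use of the maximum over $i$ to avoid knowing which task attains the minimum slack.
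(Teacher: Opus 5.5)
Your proof is correct and is essentially the paper's argument. The paper writes $\log s_i(\theta)=-\Phi(\theta)-\sum_{j\neq i}\log s_j(\theta)\geq-\Phi(\theta)-\log R_{\max}(\bm S)$ for every $i$, then exponentiates and minimizes over $i$; this is the logarithmic form of your product split at the minimizing index.
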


Hence, if $\Phi(\theta^t)\leq B$, every task satisfies
\[
f_i(\theta^t)
\leq
r_i-\frac{e^{-B}}{R_{\max}(\bm S)}.
\]
This is a reference-relative loss-space guarantee and does not by
itself imply a bound on worst-task test accuracy. Appendix
\ref{app:barrier-control} gives the complete derivation.

\section{Experiments}
\label{sec:experiments}

\subsection{Experimental Setup}
\label{sec:experimental-setup}

\paragraph{Benchmarks.}
We use six non-IID settings from the four benchmark families adopted by
FedCMOO~\citep{askin2025fedcmoo}: \textsc{MultiMNIST},
\textsc{MNIST--FMNIST}, \textsc{CIFAR10--MNIST}, and
\textsc{CelebA5} \citep{lecun1998gradient,xiao2017fashion,
krizhevsky2009learning,liu2015deep}. The settings vary Dirichlet
concentration, participation, and, for \textsc{CelebA5}, whether client
allocation uses all attribute labels or only the first. Appendix
\ref{app:experimental-details} gives the complete constructions and
partitions.

\paragraph{Training protocol and baselines.}
Within each setting, methods share the architecture, client partition,
participation schedule, round budget, and number of local steps. We
compare FedHV with FSMGDA~\citep{yang2023fmol},
FedCMOO~\citep{askin2025fedcmoo}, and Uniform, which fixes $w_i=1/m$
in the same single-trajectory update as FedHV. We report mean and sample
standard deviation over seeds $\{0,42,2026\}$ on one fixed partition.
Appendix~\ref{app:experimental-details} gives the architectures,
optimization settings, and other configurations used
by the implementations.

\paragraph{References and evaluation metrics.}
FedHV freezes a calibration-derived training reference
$\bm r^{\mathrm{train}}$; every method uses
$\bm r^{\mathrm{eval}}=3\mathbf{1}_m$ for evaluation hypervolume. We
report $\mathrm{MeanAcc}=\frac{1}{m}\sum_i a_i$ and
$\mathrm{WorstAcc}=\min_i a_i$; Appendix~\ref{app:loss-hv-results}
reports mean loss and $\mathrm{HV}=\prod_i(r_i^{\mathrm{eval}}-\ell_i)$.

Across the reported standard runs, all retained server-aggregated objective reports have positive raw reference slack and the numerical slack floor is never activated; Appendix~\ref{app:reference-selection} reports the minimum observed margins. This is a report-level diagnostic rather than a certificate of the true-slack condition in \eqref{eq:main-safe-reference}.

\begin{table*}[!b]
\centering
\caption{Final MeanAcc and WorstAcc (\%), reported as mean $\pm$
sample standard deviation over three training runs on one fixed client
partition. Bold denotes the best result and underlining denotes the
second-best result among the four methods. Rankings are computed from
the unrounded means.}
\label{tab:main-results}
\scriptsize
\setlength{\tabcolsep}{2.0pt}
\resizebox{\textwidth}{!}{%
\begin{tabular}{lcccccccc}
\toprule
& \multicolumn{2}{c}{Uniform}
& \multicolumn{2}{c}{FSMGDA}
& \multicolumn{2}{c}{FedCMOO}
& \multicolumn{2}{c}{FedHV} \\
\cmidrule(lr){2-3}
\cmidrule(lr){4-5}
\cmidrule(lr){6-7}
\cmidrule(lr){8-9}
Setting
& Mean $\uparrow$
& Worst $\uparrow$
& Mean $\uparrow$
& Worst $\uparrow$
& Mean $\uparrow$
& Worst $\uparrow$
& Mean $\uparrow$
& Worst $\uparrow$ \\
\midrule
\textsc{MultiMNIST}
& \underline{$90.17\pm0.18$}
& $88.14\pm0.34$
& $88.78\pm1.38$
& $86.15\pm2.53$
& $89.89\pm0.16$
& \underline{$88.79\pm0.57$}
& \textbf{\boldmath$90.72\pm0.65$}
& \textbf{\boldmath$88.94\pm0.61$} \\
\textsc{MNIST--FMNIST}
& \underline{$86.41\pm0.51$}
& $76.98\pm0.77$
& $84.84\pm0.87$
& $75.58\pm1.05$
& \textbf{\boldmath$86.69\pm0.18$}
& \textbf{\boldmath$78.58\pm0.66$}
& $86.35\pm0.45$
& \underline{$77.03\pm0.70$} \\
\textsc{CelebA5}-A
& \textbf{\boldmath$88.08\pm0.11$}
& \underline{$86.30\pm0.19$}
& $86.98\pm0.06$
& $85.77\pm0.08$
& $86.60\pm0.31$
& $85.00\pm0.29$
& \underline{$88.06\pm0.21$}
& \textbf{\boldmath$86.44\pm0.16$} \\
\textsc{CelebA5}-B
& \textbf{\boldmath$87.94\pm0.07$}
& \textbf{\boldmath$86.32\pm0.05$}
& $85.88\pm0.07$
& $84.70\pm0.05$
& $84.68\pm1.94$
& $82.11\pm3.55$
& \underline{$87.88\pm0.05$}
& \underline{$86.32\pm0.11$} \\
\textsc{CIFAR10--MNIST}-A
& $72.72\pm0.11$
& $51.33\pm0.18$
& \underline{$72.76\pm0.22$}
& \underline{$51.96\pm0.55$}
& $71.54\pm0.48$
& $47.98\pm1.15$
& \textbf{\boldmath$73.18\pm0.14$}
& \textbf{\boldmath$53.03\pm0.35$} \\
\textsc{CIFAR10--MNIST}-B
& \underline{$75.33\pm0.20$}
& \underline{$56.24\pm0.32$}
& $73.10\pm0.66$
& $51.85\pm1.00$
& $74.88\pm0.51$
& $54.23\pm0.88$
& \textbf{\boldmath$75.46\pm0.15$}
& \textbf{\boldmath$57.55\pm0.20$} \\
\bottomrule
\end{tabular}
}
\end{table*}

\subsection{Comparison with Federated Multi-Objective Baselines}
\label{sec:main-comparison}

Table~\ref{tab:main-results} shows that FedHV exceeds FSMGDA and
FedCMOO in MeanAcc and WorstAcc in five of six settings; FedCMOO leads
these methods on \textsc{MNIST--FMNIST}. Among all four methods, FedHV
has the highest MeanAcc in three settings and WorstAcc in four, whereas
Uniform has the highest MeanAcc on both \textsc{CelebA5} settings and
WorstAcc on \textsc{CelebA5}-B.
The largest and most consistent separation occurs on the mixed-difficulty
\textsc{CIFAR10--MNIST} benchmark. Using the reported means, FedHV raises
WorstAcc from the next-best $51.96\%$ to $53.03\%$ in setting A and from
$56.24\%$ to $57.55\%$ in setting B, gains of $1.07$ and $1.31$
percentage points. The corresponding MeanAcc margins are smaller
($0.42$ and $0.13$ points), indicating that the main benefit is improved
performance on the weaker task rather than a uniform increase across both
objectives.

\begin{figure*}[t]
    \centering
    \includegraphics[width=0.98\textwidth]
    {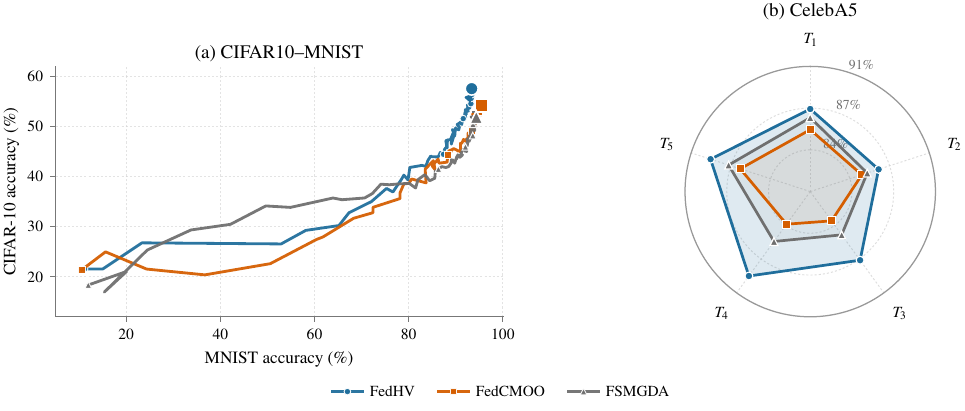}
    \caption{\textbf{Task-level accuracy, averaged over three runs.}
    \textbf{(a)} Evaluation trajectories on \textsc{CIFAR10--MNIST}-B
    ($\alpha=0.30$); the terminal cross shows one sample standard
    deviation per task. \textbf{(b)} Mean accuracy for the five
    \textsc{CelebA5}-B objectives ($\alpha=0.30$); the radial axis spans
    $80$--$91\%$. Appendix~\ref{app:per-task-results} gives numerical
    values and uncertainty.}
    \label{fig:main-task-profiles}
\end{figure*}

Figure~\ref{fig:main-task-profiles} clarifies these trade-offs. On
\textsc{CIFAR10--MNIST}-B, FedHV attains higher final CIFAR-10 accuracy
than FSMGDA and FedCMOO but lower MNIST accuracy than FedCMOO; its
aggregate gain therefore comes from the harder objective, not uniform
improvement. On \textsc{CelebA5}-B, FedHV remains comparatively
balanced across the five attribute groups. Complete per-task results,
including the alternate partition, are in
Appendix~\ref{app:per-task-results}.

FedHV obtains these results while each client trains on one scalar loss and
communicates only an $m$-dimensional objective report in addition to its model
update. Table~\ref{tab:main-results} and Figure~\ref{fig:main-task-profiles}
show that objective-value weighting can match or exceed the
geometry-based methods without task-specific model updates or a server-side
weight solver.

Accuracy and loss-space metrics do not rank the methods identically.
FedHV has the lowest MeanLoss and largest evaluation HV on
\textsc{MultiMNIST}; FedCMOO leads on \textsc{MNIST--FMNIST}; and
Uniform leads on both \textsc{CelebA5} settings. On
\textsc{CIFAR10--MNIST}, FSMGDA leads these metrics at $\alpha=0.10$
and FedCMOO at $\alpha=0.30$, although FedHV has the highest mean and
worst-task accuracy in both settings. These results indicate a
loss--accuracy trade-off rather than dominance across all criteria;
Appendix~\ref{app:loss-hv-results} gives the complete values. Beyond task accuracy, FedHV  reduces the observed computational and communication overhead of multi-objective optimization. Empirical profiling on \textsc{CIFAR10--MNIST} (detailed in Appendix~\ref{app:complexity}) confirms that FedHV reduces end-to-end training time by $24.7\%$ compared to FedCMOO and $32.9\%$ compared to FSMGDA, while exhibiting lower observed across-seed runtime variability ($\pm5.69$s vs.\ $\pm33.31$s). Furthermore, by replacing task-update geometry with low-dimensional scalar reports, FedHV eliminates asymmetric uplink bottlenecks, effectively cutting the required client uplink payload in half compared to geometry-based baselines. Crucially, these end-to-end efficiency gains fully account for FedHV's one-time preprocessing calibration cost.

\subsection{Ablation Studies}
\label{sec:ablations}

On \textsc{CIFAR10--MNIST}-B, we vary one factor from the default $(B_{\mathrm{rep}},M,K)=(\mathrm{all},10,10)$: $B_{\mathrm{rep}}\in\{1,2,5,\mathrm{all}\}$, $M\in\{5,10,20,30\}$, or $K\in\{1,5,10,20\}$.

\begin{figure*}[t]
    \centering
    \includegraphics[width=0.98\textwidth]{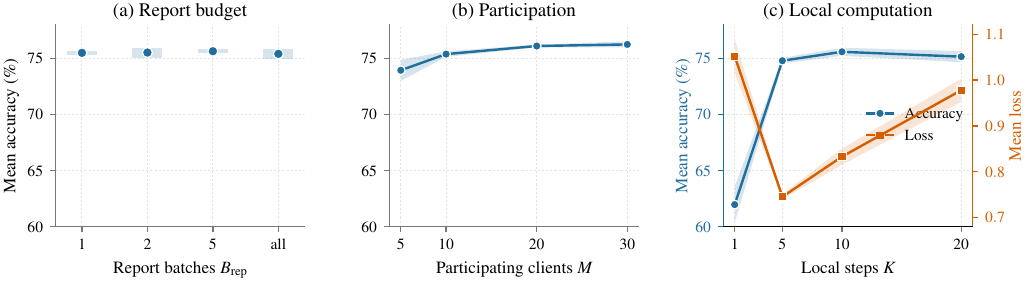}
    \caption{\textbf{FedHV ablations on \textsc{CIFAR10--MNIST}-B.} Mean $\pm$ std.\ dev.\ over three runs for \textbf{(a)} report budget $B_{\mathrm{rep}}$, \textbf{(b)} participating clients $M$, and \textbf{(c)} local steps $K$. Panel (c) includes MeanLoss on the right axis.}
    \label{fig:main-ablations}
\end{figure*}

Sparse reporting suffices: final mean accuracy varies by $\le0.23\%$ across report budgets, confirming that FedHV's low overhead does not require full local dataset evaluation. Increasing participation $M$ improves accuracy but raises per-round cost. For local computation, $K=1$ makes little progress, while increasing $K$ beyond $10$ degrades MeanLoss without accuracy gains, exhibiting a classic local-progress versus drift trade-off. Detailed numerical values are provided in Appendix~\ref{app:ablation-results}.
\section{Discussion}
\label{sec:discussion}

FedHV achieves adaptive task weighting without task-update geometry, iterative server-side solvers, or extra synchronization. Empirically, it excels under imbalanced task difficulty---improving worst-task accuracy on \textsc{CIFAR10--MNIST}---and selects a robust, distinct trade-off across evaluation metrics. Furthermore, the observed ablation trends are qualitatively consistent with the error, participation, and drift terms characterized in Theorem~\ref{thm:fedhv}. Theoretically, our convergence analysis relies on a fixed reference maintaining strictly positive true slacks; Appendix~\ref{app:reference-certificate} details sufficient conditions for this reference safety, leaving the broader impacts of calibration margins and objective scaling for future exploration.
\section*{AI Use Statement}

Generative AI tools were used to assist several stages of the research and writing workflow. Specifically, they were used for literature retrieval and the discovery of related work; brainstorming and refining research ideas; assisting with the implementation of FedHV and the evaluated baseline methods; and supporting the design and implementation of the experimental pipeline, including non-IID client-data partitioning. They were additionally used to review selected sections of the manuscript and suggest revisions to improve their organization, clarity, and readability. All suggested revisions were evaluated and finalized by the authors.

For the theoretical component, generative AI tools assisted in exploring proof strategies, drafting and refining proofs of lemmas and theoretical claims, and improving the presentation and wording of certain assumptions.

All cited works were manually inspected and verified by the authors. All AI-assisted code was reviewed and tested, and the baseline implementations were checked against their original methodological descriptions. The authors independently verified the mathematical assumptions, derivations, lemmas, and proof steps. All figures were generated from the authors' experimental outputs, and their underlying numerical values were checked for correctness. Generative AI was not used to generate synthetic datasets or to fabricate or alter experimental results. The authors reviewed all AI-assisted material and take full responsibility for the final content of the paper.
\section*{Ethics Statement}
This work uses established public benchmark datasets and does not collect new human-subject data. CelebA contains facial images and appearance-related attributes and may reflect demographic and representational biases in its original collection and annotations. Consequently, the reported results should not be interpreted as evidence of demographic fairness or deployment readiness. Our federated experiments simulate decentralized training and do not provide formal privacy guarantees; in particular, objective reports may reveal aggregate task-level information. The proposed method is intended as an optimization technique rather than a complete privacy or fairness mechanism.

\section*{Reproducibility Statement}
Appendix~\ref{app:full-theory} presents the formal assumptions, auxiliary results, conditional reference-safety certificate, explicit convergence analysis, and complete proofs of the theoretical claims. Appendix~\ref{app:experimental-details} documents the benchmark and objective construction, model architectures, federated data partitioning, training and evaluation protocol, baseline implementations, method-specific learning rates, FedHV hyperparameters, reference-calibration procedure, and random seeds. Appendix~\ref{app:complete-results} reports the complete loss, hypervolume, and task-level results; learning curves; ablation studies; weight trajectories; additional diagnostics; and uncertainty across training seeds. Appendix~\ref{app:complexity} provides analytical communication and computational complexity comparisons together with empirical wall-clock and communication measurements. Finally, Appendix~\ref{app:fedhv-algorithm} gives the complete round-level pseudocode for FedHV. The client-partition seed, training seeds, optimization settings, participation and local-update configurations, objective-report settings, and frozen training references used in the experiments are explicitly reported.

\bibliography{example_paper}
\bibliographystyle{iclr2027_conference}

\newpage
\appendix
\onecolumn

\section{Full Theoretical Development}\label{app:full-theory}

{
\subsection{Probability Space and Round Filtration}\label{app:theory-filtration}
\paragraph{Objective and round-wise notation.}
We analyze FedHV under the log-hypervolume objective introduced in
Section~\ref{sec:method}. Throughout this section, we denote the slack of
objective \(i\) by
\[
s_i(\theta)
:=
r_i-f_i(\theta),
\]
and write
\[
\bm F(\theta)
:=
\bigl(f_1(\theta),\ldots,f_m(\theta)\bigr)^\top .
\]
Throughout the theoretical section, \(\bm F(\theta)\) denotes this
global objective vector, while \(f_i(\theta)\) denotes its scalar
components. Likewise, \(\bm r\) is the fixed reference vector and
\(r_i\) is its \(i\)-th component.
The reference vector \(\bm r\) is assumed to be chosen safely so that all
true slacks remain positive along the optimization trajectory.

For every round \(t\), define the ideal current hypervolume weights
\[
w_i^{\star,t}
:=
w_i^\star(\theta^t)
=
\frac{
\bigl(r_i-f_i(\theta^t)\bigr)^{-1}
}{
\sum_{j=1}^m
\bigl(r_j-f_j(\theta^t)\bigr)^{-1}
},
\qquad i=1,\ldots,m.
\]
We use \(\wv^{\star,t}\) to denote the corresponding vector of ideal
weights. These weights are used only as an analytical reference.

The actual weight vector used by FedHV in communication round \(t\)
is denoted by \(\wv^t\). Following Algorithm~\ref{alg:fedhv},
\[
\wv^0=\frac{1}{m}\mathbf{1}_m,
\]
and, for \(t\geq1\),
\[
\wv^t
=
W_{\rho_{\mathrm{clip}}}\!\left(\widehat{\bm F}^{\,t-1};\bm r\right),
\]
where \(W_{\rho_{\mathrm{clip}}}\) denotes the stabilized inverse-slack normalization
defined in \eqref{eq:fedhv_weights}.
Hence, \(\wv^t\) is computed from objective estimates collected in
round \(t-1\), whereas \(\wv^{\star,t}\) is the ideal hypervolume
weight associated with the current model \(\theta^t\).
}
{
\paragraph{Round history and client sampling.}
Let \(\mathcal{F}_t\) denote the sigma-field generated by all randomness
revealed before the participating client set \(S_t\) is sampled in
round \(t\). In particular, the current global model \(\theta^t\), all
objective estimates collected through round \(t-1\), and the stale
weight vector \(\wv^t\) are \(\mathcal{F}_t\)-measurable.

Conditioned on \(\mathcal{F}_t\), the participating set \(S_t\) is
sampled uniformly without replacement from the \(N\) clients, with
\[
|S_t|=M=pN,
\]
and is independent of the current-round minibatch randomness.
Consequently, because \(\wv^t\) is fixed conditional on
\(\mathcal{F}_t\), for any collection of client vectors
\(\{a_k(\wv^t)\}_{k=1}^N\) that is
\(\mathcal{F}_t\)-measurable,
\[
\mathbb{E}
\left[
\left.
\frac{1}{M}
\sum_{k\in S_t}
a_k(\wv^t)
\right|
\mathcal{F}_t
\right]
=
\frac{1}{N}
\sum_{k=1}^N
a_k(\wv^t).
\]
Let \(\mathcal U_t^{\mathrm{rep}}\) denote all randomness used to form
the objective reports after \(S_t\) is selected and before local SGD
starts. For local step \(s\), define the within-round filtration
\[
\mathcal{G}_{t,s}
:=
\sigma\left(
\mathcal{F}_t,S_t,\mathcal U_t^{\mathrm{rep}},
\{B_{k,u}^t:k\in S_t,\;0\leq u<s\}
\right),
\]
where \(B_{k,u}^t\) denotes the minibatch used at local step \(u\) in
round \(t\). The current local iterates
\(\theta_{k,s}^t\) and the weight vector \(\wv^t\) are
\(\mathcal{G}_{t,s}\)-measurable.
Conditionally on \(\mathcal G_{t,s}\), the current minibatch
\(B_{k,s}^t\) is sampled afresh with replacement from the fixed
empirical distribution on \(\mathcal D_k\). In particular, it is
independent of previous local minibatches and report randomness
conditional on the fixed client dataset. The next-round history
\(\mathcal F_{t+1}\) contains the current client sample,
\(\mathcal U_t^{\mathrm{rep}}\), all minibatches, client updates, and
reported objective estimates. No independence between objective-report
errors and client sampling is required by the objective-estimation
bound.
}
\subsection{Formal Assumptions}\label{app:theory-assumptions}
For the convergence result, fix a deterministic target horizon
\(T\geq1\). All trajectory-wise assumptions and all convergence
statements below concern the communication rounds
\(t=0,\ldots,T-1\).

\begin{assumption}[Uniform client averaging]
\label{as:uniform-aggregation}
For the theoretical result, all clients have equal aggregation weight,
that is, \(n_k=n/N\) and
\[
f_i(\theta)
=
\frac{1}{N}\sum_{k=1}^{N}f_i^{(k)}(\theta).
\]
The server uses the simple average of the selected-client updates and
objective reports, and \(S_t\) is sampled uniformly without replacement
with \(|S_t|=M=pN\). Thus, conditional on \(\mathcal F_t\), every
selected-client average of an \(\mathcal F_t\)-measurable client vector
is an unbiased estimator of its population average. The theorem does
not cover unequal client sizes or data-weighted aggregation.
\end{assumption}

\begin{assumption}[Local smoothness and stochastic gradients]
\label{as:smooth}
For every client \(k\in[N]\) and objective \(i\in[m]\), the local
objective \(f_i^{(k)}\) is nonnegative and \(L_i\)-smooth; that is, for all
\(\theta,\theta'\in\mathbb{R}^d\),
\begin{equation}
\label{eq:local_smoothness}
\left\|
\nabla f_i^{(k)}(\theta)
-
\nabla f_i^{(k)}(\theta')
\right\|
\le
L_i
\left\|
\theta-\theta'
\right\|.
\end{equation}
Consequently, the global objective
\[
f_i(\theta)
=
\frac{1}{N}
\sum_{k=1}^{N}
f_i^{(k)}(\theta)
\]
is also \(L_i\)-smooth.

For every round \(t\), local step \(s\), participating client
\(k\in S_t\), and objective \(i\), let \(B_{k,s}^t\) be the minibatch
sampled and used at that step. Specifically, \(B_{k,s}^t\) contains
\(b\) indices drawn independently and uniformly with replacement from
\(\{1,\ldots,n_k\}\), and these draws are fresh conditional on
\(\mathcal G_{t,s}\). The same minibatch is used for all objectives,
which allows their stochastic gradients to be correlated. If
\(g_i^{(k)}(\theta;B)\) denotes the average of the per-example
gradients over \(B\), then for the
\(\mathcal G_{t,s}\)-measurable local iterate
\(\theta_{k,s}^t\), conditional unbiasedness follows from the sampling
rule:
\begin{equation}
\label{eq:stochastic_gradient_unbiased}
\mathbb{E}
\left[
g_i^{(k)}\bigl(\theta_{k,s}^t;B_{k,s}^t\bigr)
\mid
\mathcal{G}_{t,s}
\right]
=
\nabla f_i^{(k)}\bigl(\theta_{k,s}^t\bigr),
\end{equation}
This identity is exact for the stated sampling protocol. We assume
only the usual uniform conditional variance bound: there
exist finite constants \(\sigma_i^2\) such that
\begin{equation}
\label{eq:stochastic_gradient_variance}
\mathbb{E}
\left[
\left\|
g_i^{(k)}\bigl(\theta_{k,s}^t;B_{k,s}^t\bigr)
-
\nabla f_i^{(k)}\bigl(\theta_{k,s}^t\bigr)
\right\|^2
\mid
\mathcal{G}_{t,s}
\right]
\le
\sigma_i^2.
\end{equation}
The task-gradient noises produced from the same minibatch may be
arbitrarily correlated across objectives; no independence across
objectives is assumed. The fresh-sampling rule is stronger than the
martingale-difference property needed in the proofs; the proofs do not
condition on future iterates and do not require independence of all
noise variables across local steps.
\end{assumption}

\begin{assumption}[Trajectory-wise client heterogeneity]\label{as:hetero}
For each objective $i\in[m]$, define the population gradient-dissimilarity
function
\[
\mathsf H_i(\theta)
:=
\frac{1}{N}\sum_{k=1}^N
\left\|
\nabla f_i^{(k)}(\theta)-\nabla f_i(\theta)
\right\|^2.
\]
There exists a finite constant $\zeta_i^2$ such that, almost surely,
for every round $t=0,\ldots,T-1$,
\[
\mathsf H_i(\theta^t)\leq\zeta_i^2,
\qquad
\mathsf H_i(v_{t,s})\leq\zeta_i^2,
\quad s=0,\ldots,K,
\]
where $v_{t,s}$ is the virtual trajectory in
\eqref{eq:virtual-trajectory}. This is a trajectory-wise condition on
the fixed horizon: it is imposed only at the global and virtual
iterates that the proof actually evaluates, not uniformly over every
\(\theta\in\mathcal D\). The actual local iterates and all connecting
segments are covered by the fixed-horizon domain condition in
Assumption~\ref{as:slack}. Their deviation from the virtual trajectory
is controlled by Lemma~\ref{lem:grad2-derived}, so no separate
global-in-$\mathcal D$ heterogeneity bound is required at arbitrary
local points.
\end{assumption}

\begin{assumption}[Safe reference on the fixed analysis horizon]
\label{as:slack}
Let \(\mathcal D\subseteq\mathbb R^d\) be a convex analysis region
containing the points that appear in the convergence proof. Define the
safe subset
\[
\mathcal D_\rho
:=
\left\{
\theta\in\mathcal D:
\min_{i\in[m]}\bigl(r_i-f_i(\theta)\bigr)\geq\rho
\right\},
\qquad \rho>0.
\]
The initialization is assumed safe:
\[
\theta^0\in\mathcal D_\rho.
\]
For a communication round \(t\), let \(\mathcal B_t\) be the union of
the line segments used by the proof,
\[
\mathcal B_t
:=[\theta^t,\theta^{t+1}]
\;\cup\!\!\bigcup_{k\in S_t}\bigcup_{s=0}^{K-1}
[\theta_{k,s}^t,v_{t,s}]
\;\cup\!\!\bigcup_{s=0}^{K-1}[v_{t,s},\theta^t],
\]
and let
\[
\mathcal R_t
:=\{\theta^t,\theta^{t+1}\}
\cup\{\theta_{k,s}^t:k\in S_t,\ 0\leq s\leq K\}
\cup\{v_{t,s}:0\leq s\leq K\}
\cup\mathcal B_t.
\]
Here \([a,b]\) denotes the line segment joining \(a\) and \(b\); the
actual local and virtual iterates are defined in the convergence proof.
For every communication round \(t=0,\ldots,T-1\), assume, almost
surely,
\[
\mathcal R_t\subseteq\mathcal D_\rho.
\]
Equivalently, every global iterate, local iterate, virtual iterate,
and connecting segment used by the proof remains in the domain on the
entire deterministic horizon. This is a domain condition for the
logarithm, not an independent bounded-gradient assumption.
The same fixed reference vector
\(\bm r\) is used in every round. The implementation uses a separate
numerical clipping floor \(0<\rho_{\mathrm{clip}}\le\rho\). When the server uses
the round-\(t\) estimated global objective vector
\(\widehat{\bm F}^{\,t}\), the estimated slacks used to compute the
next task weights are stabilized according to
\begin{equation}
\label{eq:estimated_clipped_slack}
\widehat s_i^{\,t}
:=
\max
\left\{
r_i-\widehat f_i^{\,t},
\rho_{\mathrm{clip}}
\right\}.
\end{equation}
The task losses and their estimates are nonnegative:
\[
f_i(\theta)\ge 0,
\qquad
\widehat f_i^{\,t}\ge 0,
\]
and the fixed reference satisfies
\[
r_i\ge \rho>0,
\qquad
\forall i\in[m].
\]
\end{assumption}

\begin{remark}[Safe reference versus slack stabilization]
\label{rem:safe-reference}
The set \(\mathcal D_\rho\) makes explicit the domain on which the true
slacks
\(r_i-f_i(\theta)\) are bounded below by \(\rho\). This condition ensures
that the log-hypervolume objective
\[
\Phi(\theta)
=
-\sum_{i=1}^{m}
\log\!\bigl(r_i-f_i(\theta)\bigr)
\]
is well defined and smooth on every segment used on the fixed horizon.

The clipping operation in
\eqref{eq:estimated_clipped_slack} has a different role: the numerical
floor \(\rho_{\mathrm{clip}}\) stabilizes task weights computed from estimated
losses and prevents their inverse-slack factors from becoming
numerically unbounded. The safe margin \(\rho\), by contrast, is a
property of the \emph{true} slacks required by the analysis. Because
\(0<\rho_{\mathrm{clip}}\le\rho\), clipping is inactive for the true global
objective vector on \(\mathcal D_\rho\). Numerical clipping does not
enforce the safe-reference condition for the true objective slacks. Accordingly, the convergence analysis is a
fixed-horizon result under Assumption~\ref{as:slack}. The practical
fixed-reference selection used in the experiments is an empirical
calibration rule and does not by itself verify this almost-sure domain
condition; it is described in Appendix~\ref{app:reference-selection}.
Appendix~\ref{app:reference-certificate} gives separate sufficient
conditions that convert a verified initial margin into a
high-probability trajectory certificate.
\end{remark}

\subsection{Basic Properties}\label{app:theory-basic}
\begin{lemma}[Self-bounded global gradients]
\label{lem:self-bounded-gradients}
Under Assumptions~\ref{as:uniform-aggregation} and~\ref{as:smooth},
whenever \(f_i(\theta)\le r_i\),
\[
\|\nabla f_i(\theta)\|^2
\leq
2L_i f_i(\theta)
\leq
2L_i r_i
=:
G_i^2.
\]
Consequently, with \(G_{\max}:=\max_i G_i\), every global
scalarized gradient with weights in \(\Delta_m\) satisfies
\[
\left\|\sum_{i=1}^m w_i\nabla f_i(\theta)\right\|
\leq G_{\max}.
\]
This bound is derived from nonnegativity and smoothness; it is not an
independent bounded-gradient assumption. Its \(\rho\)-free form avoids
circular constants when Appendix~\ref{app:reference-certificate}
derives a safe margin rather than assuming one.
\end{lemma}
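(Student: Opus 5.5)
The first claim is the standard self-bounding inequality for nonnegative smooth functions, applied to the global objective. Smoothness of \(f_i\) with constant \(L_i\) follows from Assumption~\ref{as:smooth} together with the uniform average in Assumption~\ref{as:uniform-aggregation}. Nonnegativity of \(f_i\) follows from nonnegativity of every \(f_i^{(k)}\).

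Fix \(\theta\) and put \(\theta':=\theta-\tfrac{1}{L_i}\nabla f_i(\theta)\). The descent lemma for an \(L_i\)-smooth function gives
\[
0\le f_i(\theta')\le f_i(\theta)-\tfrac{1}{L_i}\|\nabla f_i(\theta)\|^2+\tfrac{1}{2L_i}\|\nabla f_i(\theta)\|^2 .
\]
Rearranging yields \(\|\nabla f_i(\theta)\|^2\le 2L_i f_i(\theta)\). If \(f_i(\theta)\le r_i\), the second inequality \(2L_if_i(\theta)\le 2L_ir_i=G_i^2\) is immediate.

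There is one subtlety. The descent lemma requires smoothness along the segment \([\theta,\theta']\), and nonnegativity is needed at \(\theta'\), which may lie outside any analysis region \(\mathcal D\). Assumption~\ref{as:smooth} states both properties on all of \(\mathbb R^d\), so no domain condition is needed. This is also why the bound is \(\rho\)-free, as the statement emphasizes.

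For the consequence, take \(w\in\Delta_m\) and apply the triangle inequality and convexity of the norm:
\[
\Bigl\|\sum_i w_i\nabla f_i(\theta)\Bigr\|\le\sum_i w_i\|\nabla f_i(\theta)\|\le\sum_i w_iG_i\le G_{\max}.
\]
The middle inequality uses the per-task bound, which requires \(f_i(\theta)\le r_i\) for every \(i\). I read the hypothesis as holding for all tasks here, which is the case on the safe region \(\mathcal D_\rho\).

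The only step needing care is the global validity of smoothness and nonnegativity at the auxiliary point \(\theta'\). The rest is routine.
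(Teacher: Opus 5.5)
Your proposal is correct and follows the paper's proof: the descent lemma at \(\theta-\tfrac{1}{L_i}\nabla f_i(\theta)\) combined with global nonnegativity gives the per-task bound, and the triangle (Jensen) inequality over the simplex gives the scalarized bound. The one thing the paper handles that you omit is the degenerate case \(L_i=0\), where your auxiliary point is undefined; there \(f_i\) is affine and nonnegative on all of \(\mathbb R^d\), hence constant, so \(\nabla f_i\equiv 0\) and the bound holds trivially (alternatively, apply your argument with any \(L>0\) and let \(L\downarrow 0\)).
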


{
\paragraph{Weighted stochastic-gradient noise.}
At local step \(s\) of communication round \(t\), participating client
\(k\in S_t\) uses the stale weight vector \(\wv^t\), which is
\(\mathcal{F}_t\)-measurable, and forms the weighted stochastic gradient
\[
g_{k,s}^t
:=
\sum_{i=1}^m
w_i^t\,
g_i^{(k)}
\bigl(\theta_{k,s}^t;B_{k,s}^t\bigr).
\]
The corresponding exact local weighted gradient is
\[
\nabla
L_{\wv^t}^{(k)}
\bigl(\theta_{k,s}^t\bigr)
=
\sum_{i=1}^m
w_i^t
\nabla
f_i^{(k)}
\bigl(\theta_{k,s}^t\bigr).
\]
In the implementation this is the gradient of
\[
\sum_{i=1}^m
\sg{w_i^t}\,\ell_i(\theta;B_{k,s}^t),
\]
where \(\sg{\cdot}\) denotes stop-gradient. Thus the weights are
held fixed during all \(K\) local steps and no derivative through the
weight-estimation map is taken.
We define the weighted stochastic-gradient noise by
\[
\xi_{k,s}^t
:=
g_{k,s}^t
-
\nabla
L_{\wv^t}^{(k)}
\bigl(\theta_{k,s}^t\bigr).
\]
Writing
\(\varepsilon_{i,k,s}^t
:=g_i^{(k)}(\theta_{k,s}^t;B_{k,s}^t)
-\nabla f_i^{(k)}(\theta_{k,s}^t)\), we have
\(\xi_{k,s}^t=\sum_i w_i^t\varepsilon_{i,k,s}^t\). Since
\(\wv^t\in\Delta_m\), Jensen's inequality gives
\[
\mathbb E\left[
\left.\|\xi_{k,s}^t\|^2\right|\mathcal G_{t,s}
\right]
\leq
\sum_{i=1}^m w_i^t\sigma_i^2
\leq
\Sigma^2,
\]
without requiring the task noises generated by a shared minibatch to
be uncorrelated. For the average over the participating clients, let
\(\kappa_M\) denote any valid bound satisfying
\[
\mathbb E\left[
\left.
\left\|
\frac1M\sum_{k\in S_t}\xi_{k,s}^t
\right\|^2
\right|\mathcal G_{t,s}
\right]
\leq
\kappa_M\Sigma^2.
\]
Jensen's inequality always gives \(\kappa_M=1\), which is the value
used in the main theorem. Under the additional cross-client condition
(CCO-grad) stated in Remark~\ref{rem:optional-cross-client}, the usual
federated variance reduction gives \(\kappa_M=1/M\).
}

{
\begin{lemma}[Martingale-difference weighted noise]
\label{lem:martingale-noise}
Under Assumption~\ref{as:smooth}, the weighted noise defined above
satisfies, for every \(k\in S_t\) and local step \(s\):
\[
\mathbb{E}
\left[
\xi_{k,s}^t
\mid
\mathcal{G}_{t,s}
\right]
=
\mathbf{0},
\]
and
\[
\mathbb{E}
\left[
\|\xi_{k,s}^t\|^2
\mid
\mathcal{G}_{t,s}
\right]
\leq \Sigma^2,
\qquad
\Sigma^2:=\sum_{i=1}^m\sigma_i^2.
\]
This is a consequence of the fresh conditional minibatch sampling
rule and is not an additional stochastic-gradient assumption. No
independence across objectives, clients, or local steps is required
for the martingale-difference statement.
The client-averaged bound with \(\kappa_M=1\) follows directly from
Jensen's inequality. Under (CCO-grad) in
Remark~\ref{rem:optional-cross-client}, one may sharpen this bound to
\(\kappa_M=1/M\). This optional condition is the only place where a
\(1/M\) stochastic-variance factor is used.
\end{lemma}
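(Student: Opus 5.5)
The argument conditions on $\mathcal G_{t,s}$ and uses linearity of conditional expectation. The facts it rests on are that $\wv^t$ is $\mathcal F_t$-measurable, hence $\mathcal G_{t,s}$-measurable, and that $\theta_{k,s}^t$ is $\mathcal G_{t,s}$-measurable. Since $\xi_{k,s}^t=\sum_i w_i^t\varepsilon_{i,k,s}^t$, the weights can be pulled out:
\[
\mathbb E[\xi_{k,s}^t\mid\mathcal G_{t,s}]=\sum_{i=1}^m w_i^t\,\mathbb E[\varepsilon_{i,k,s}^t\mid\mathcal G_{t,s}].
\]
Each summand vanishes by the conditional unbiasedness \eqref{eq:stochastic_gradient_unbiased}. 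That identity holds because $B_{k,s}^t$ is drawn fresh, uniformly with replacement from $\mathcal D_k$, conditional on $\mathcal G_{t,s}$. Averaging per-example gradients of $\ell_i(\theta_{k,s}^t;\cdot)$ at the fixed iterate then returns the empirical-risk gradient $\nabla f_i^{(k)}(\theta_{k,s}^t)$. This proves the martingale-difference statement. No independence across objectives, clients, or steps enters, since only conditioning on the current $\mathcal G_{t,s}$ is used.

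For the second moment, I will use that $\wv^t\in\Delta_m$ holds deterministically given $\mathcal G_{t,s}$. The entries of \eqref{eq:fedhv_weights} are positive and normalized, and $\wv^0$ is uniform. Convexity of $\|\cdot\|^2$ (Jensen for the convex combination) then gives
\[
\|\xi_{k,s}^t\|^2\le\sum_i w_i^t\|\varepsilon_{i,k,s}^t\|^2 .
\]
Taking conditional expectation and applying \eqref{eq:stochastic_gradient_variance} bounds the result by $\sum_i w_i^t\sigma_i^2\le\max_i\sigma_i^2\le\Sigma^2$. This bound allows arbitrary correlation between the task noises, which is why Jensen is used rather than expanding cross terms.

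For the client average, the bound with $\kappa_M=1$ follows by applying Jensen again over the uniform average on $S_t$, which gives $\|\frac1M\sum_{k\in S_t}\xi_{k,s}^t\|^2\le\frac1M\sum_{k\in S_t}\|\xi_{k,s}^t\|^2$. I then take $\mathcal G_{t,s}$-conditional expectation termwise. This is valid because $S_t\in\mathcal G_{t,s}$, so the index set is fixed.

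For the sharpened $\kappa_M=1/M$ under (CCO-grad), I will expand the square. The assumption is that conditional cross-client covariances vanish. The cross terms then drop, and the diagonal gives $\frac{1}{M^2}\cdot M\Sigma^2$.

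The only delicate point is measurability. One must confirm that $\wv^t$ and $S_t$ lie in $\mathcal G_{t,s}$, and that the report randomness $\mathcal U_t^{\mathrm{rep}}$ does not bias the fresh minibatch. Both facts hold by the construction of the filtration in Appendix~\ref{app:theory-filtration}, so no genuine obstacle is expected.
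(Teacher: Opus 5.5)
Your proof is correct and follows essentially the same route as the paper: you pull the $\mathcal G_{t,s}$-measurable weights out by linearity and use conditional unbiasedness for the mean, and you apply Jensen on the simplex for the per-client second moment so that cross-objective correlations never appear. For the client average you use Jensen to get $\kappa_M=1$, and you expand the square with vanishing conditional cross-client inner products under (CCO-grad) to get $\kappa_M=1/M$.
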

}

\begin{lemma}[Smoothness of the log-hypervolume objective]
\label{lem:smooth-phi}
Under Assumptions~\ref{as:smooth} and~\ref{as:slack}, together with
Lemma~\ref{lem:self-bounded-gradients}, the
log-hypervolume objective is \(L_\Phi\)-smooth on every line segment
contained in \(\mathcal D_\rho\); specifically, for any
\(\theta,\theta'\in\mathcal D\) with
\([\theta,\theta']\subseteq\mathcal D_\rho\),
\[
 \|\nabla\Phi(\theta)-\nabla\Phi(\theta')\|
 \leq L_\Phi\|\theta-\theta'\|,
\qquad
L_\Phi
\le
\sum_{i=1}^m \frac{L_i}{\rho}
+
\sum_{i=1}^m \frac{G_i^2}{\rho^2}.
\]
The proof is given in Appendix~\ref{app:proof-smooth-phi}.
\end{lemma}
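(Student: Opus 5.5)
The plan is to differentiate $\Phi$ explicitly and then bound the difference of gradients at two points, working along a segment so that every point involved stays in $\mathcal D_\rho$. Writing $s_i(\theta)=r_i-f_i(\theta)$, the chain rule gives
\[
\nabla\Phi(\theta)=\sum_{i=1}^m \frac{\nabla f_i(\theta)}{s_i(\theta)} .
\]
This is well defined on $\mathcal D_\rho$ because $s_i\geq\rho>0$ there. Fix $\theta,\theta'$ with $[\theta,\theta']\subseteq\mathcal D_\rho$. Each summand is treated separately. Adding and subtracting $\nabla f_i(\theta')/s_i(\theta)$ gives
\[
\frac{\nabla f_i(\theta)}{s_i(\theta)}-\frac{\nabla f_i(\theta')}{s_i(\theta')}
=\frac{\nabla f_i(\theta)-\nabla f_i(\theta')}{s_i(\theta)}
+\nabla f_i(\theta')\Bigl(\frac{1}{s_i(\theta)}-\frac{1}{s_i(\theta')}\Bigr).
\]

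The first term is bounded directly. Global smoothness of $f_i$ (Assumption~\ref{as:smooth}) together with $s_i(\theta)\geq\rho$ gives a bound of $L_i\|\theta-\theta'\|/\rho$.

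For the second term I write
\[
\frac{1}{s_i(\theta)}-\frac{1}{s_i(\theta')}=\frac{f_i(\theta)-f_i(\theta')}{s_i(\theta)s_i(\theta')}.
\]
The denominator is at least $\rho^2$, since both endpoints lie in $\mathcal D_\rho$. For the numerator I apply the mean value theorem along the segment:
\[
|f_i(\theta)-f_i(\theta')|\leq \sup_{u\in[\theta,\theta']}\|\nabla f_i(u)\|\,\|\theta-\theta'\|.
\]
Every $u$ on the segment lies in $\mathcal D_\rho$, so $f_i(u)\leq r_i-\rho\leq r_i$. Lemma~\ref{lem:self-bounded-gradients} then gives $\|\nabla f_i(u)\|\leq G_i$. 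The prefactor $\|\nabla f_i(\theta')\|$ is also at most $G_i$ by the same lemma. The second term is therefore at most $G_i^2\|\theta-\theta'\|/\rho^2$.

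Summing over $i$ with the triangle inequality gives the stated $L_\Phi$. The only delicate point is the use of the segment hypothesis. It is what places the mean-value points inside the safe region, so that the self-bounded gradient estimate applies and $\Phi$ is differentiable all along the path. Without it, convexity of $\mathcal D$ alone would not guarantee that the slacks stay positive between $\theta$ and $\theta'$. Apart from checking this, the argument is a routine calculus computation, and I do not expect any real obstacle.
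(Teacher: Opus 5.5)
Your proposal is correct and matches the paper's proof essentially step for step. It uses the same add-and-subtract split, bounds the first term by $L_i\|\theta-\theta'\|/\rho$, and bounds the second by $G_i^2\|\theta-\theta'\|/\rho^2$ via Lemma~\ref{lem:self-bounded-gradients}. The differences are cosmetic: you write the inverse-slack difference as a quotient whose denominator is at least $\rho^2$, where the paper cites the $1/\rho^2$-Lipschitz property of $x\mapsto 1/x$ on $[\rho,\infty)$. You also make explicit the mean-value step along the segment, which the paper leaves implicit.
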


\begin{assumption}[Stepsizes]
\label{as:steps}
Let
\[
L_w:=\sum_{i=1}^{m}L_i,
\qquad
c^t
:=
\sum_{i=1}^{m}
\bigl(r_i-f_i(\theta^t)\bigr)^{-1},
\]
and define
\[
c_{\min}
:=
\sum_{i=1}^{m}\frac{1}{r_i},
\qquad
c_{\max}
:=
\frac{m}{\rho}.
\]
Let \(\eta_\ell>0\) and \(\eta_g>0\). The local and effective
stepsizes satisfy
\begin{equation}
\label{eq:stepsize_conditions}
\eta_\ell K
\le
\frac{c_1}{L_w},
\qquad
\eta_{\mathrm{eff}}
:=
\eta_g\eta_\ell K
\le
\frac{
c_2c_{\min}^{\,2}
}{
c_{\max}L_\Phi
},
\end{equation}
where, throughout the theorem and its proof, we fix the numerical
values
\[
c_1:=\frac14,
\qquad
c_2:=\frac1{48}.
\]
Thus there is no unspecified ``sufficiently small'' constant in the
statement. The value of \(c_1\) gives
\(\eta_\ell L_ws\leq1/4\) for every \(s\leq K\), which is the
condition used in Lemma~\ref{lem:grad2-derived}. Since the numerical
coefficient \(C_0\) in Lemma~\ref{lem:update-second-moment} is
\(C_0=6\), the value \(c_2=1/(8C_0)=1/48\) leaves the explicit
absorption margin used in the one-round descent proof. Define for later
reference
\[
\bar\eta_{\mathrm{eff}}
:=\frac{c_2c_{\min}^{\,2}}{c_{\max}L_\Phi}.
\]
\end{assumption}

\begin{lemma}[Log-hypervolume stationarity implies Pareto stationarity]
\label{lem:hv-pareto}
If $\nabla\Phi(\theta^\star)=0$, then there exists
$\lambda\in\Delta_m$ such that
\[
\sum_{i=1}^m
\lambda_i\nabla f_i(\theta^\star)
=
0 .
\]
Therefore, every stationary point of the log-hypervolume objective is
Pareto-stationary.
The proof is given in
Appendix~\ref{app:proof-hv-pareto}.
\end{lemma}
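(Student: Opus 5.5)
The argument rests on the gradient identity \eqref{eq:weighted-gradient-identity} together with the positivity of the normalizing factor. Lemma~\ref{lem:hv-pareto} implicitly requires $\theta^\star$ to lie where $\Phi$ is defined and differentiable, so I will take $\theta^\star$ in the positive-slack region, for example $\mathcal D_\rho$ from Assumption~\ref{as:slack}. There every slack $s_i(\theta^\star)=r_i-f_i(\theta^\star)$ is strictly positive, hence finite.

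The first step is the chain rule applied to $\Phi(\theta)=-\sum_i\log s_i(\theta)$, which gives $\nabla\Phi(\theta)=\sum_i s_i(\theta)^{-1}\nabla f_i(\theta)$. This is valid because each $f_i$ is differentiable (indeed $L_i$-smooth by Assumption~\ref{as:smooth}) and $\log$ is differentiable on $(0,\infty)$. Define $c(\theta^\star):=\sum_j s_j(\theta^\star)^{-1}$. This is a finite sum of strictly positive terms, so $0<c(\theta^\star)<\infty$.

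The second step sets $\lambda_i:=w_i^\star(\theta^\star)=s_i(\theta^\star)^{-1}/c(\theta^\star)$, as in \eqref{eq:inverse-slack-weights}. Each $\lambda_i$ is positive, and the $\lambda_i$ sum to one, so $\lambda\in\Delta_m$. Then
\[
\sum_{i=1}^m \lambda_i\nabla f_i(\theta^\star)=\frac{1}{c(\theta^\star)}\nabla\Phi(\theta^\star)=\mathbf 0,
\]
which is exactly condition \eqref{eq:pareto-stationarity}. The second claim follows immediately.

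The only point needing care is the domain issue. The hypothesis $\nabla\Phi(\theta^\star)=0$ only makes sense where all slacks are positive, and positivity of $c$ is what allows division. I would state this explicitly rather than rely on the clipped map $W_{\rho_{\mathrm{clip}}}$; in any case clipping is inactive on $\mathcal D_\rho$ because $\rho_{\mathrm{clip}}\le\rho$. The same computation, bounding the minimum over $\Delta_m$ by the value at this $\lambda$, also yields the quantitative version in Proposition~\ref{prop:pareto-main}.
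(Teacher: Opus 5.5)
Your proof is correct and follows essentially the same argument as the paper in Appendix~\ref{app:proof-hv-pareto}: differentiate $\Phi$ to obtain $\nabla\Phi=\sum_i s_i^{-1}\nabla f_i$, take $\lambda$ to be the normalized inverse slacks (which lie in $\Delta_m$), and divide by the positive normalizer $c(\theta^\star)$. Your explicit restriction to the positive-slack domain matches the paper's assumption $s_i(\theta^\star)>0$, which it states at the start of its proof.
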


\subsection{Barrier Control of the Minimum Task Slack}
\label{app:barrier-control}

The log-hypervolume objective is also a logarithmic barrier for the
feasible box \(\{\theta:f_i(\theta)<r_i,\ i\in[m]\}\).

\begin{lemma}[Barrier level set and slack floor]
\label{lem:barrier-level-set}
Let \(s_i(\theta):=r_i-f_i(\theta)>0\), and suppose
\(s_j(\theta)\le S_j\) for constants \(S_j>0\). Define
\[
R_{\max}(\bm S):=\max_{i\in[m]}\prod_{j\ne i}S_j,
\qquad
\psi(\theta):=-\log\min_i s_i(\theta).
\]
Then
\begin{equation}
\label{eq:appendix-slack-floor}
\min_i s_i(\theta)
\ge \frac{e^{-\Phi(\theta)}}{R_{\max}(\bm S)},
\end{equation}
and
\begin{equation}
\label{eq:barrier-sandwich}
\psi(\theta)-\log R_{\max}(\bm S)
\le \Phi(\theta)
\le m\psi(\theta).
\end{equation}
Because every objective is nonnegative, \(S_j=r_j\) is always a
valid choice.
\end{lemma}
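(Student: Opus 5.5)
The proof works directly from $e^{-\Phi(\theta)}=\prod_{j=1}^m s_j(\theta)$, which holds because all slacks are positive. Let $i^\star\in\arg\min_i s_i(\theta)$. We split off the minimal factor, $e^{-\Phi(\theta)}=s_{i^\star}(\theta)\prod_{j\neq i^\star}s_j(\theta)$, and bound each remaining factor by $s_j(\theta)\le S_j$. This gives
\[
e^{-\Phi(\theta)}\le s_{i^\star}(\theta)\prod_{j\ne i^\star}S_j\le \min_i s_i(\theta)\,R_{\max}(\bm S).
\]
Dividing by $R_{\max}(\bm S)>0$ yields \eqref{eq:appendix-slack-floor}. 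Since $i^\star$ depends on $\theta$, we pass to the maximum over $i$ in $R_{\max}$ to get a uniform constant.

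For the left inequality of \eqref{eq:barrier-sandwich}, take $-\log$ of the chain above:
\[
\Phi(\theta)\ge -\log\min_i s_i(\theta)-\log R_{\max}(\bm S)=\psi(\theta)-\log R_{\max}(\bm S).
\]
For the right inequality, each term satisfies $-\log s_i(\theta)\le -\log\min_j s_j(\theta)=\psi(\theta)$, because $-\log$ is decreasing. Summing the $m$ terms gives $\Phi(\theta)\le m\psi(\theta)$.

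Finally, nonnegativity $f_j\ge0$ gives $s_j=r_j-f_j\le r_j$, and $r_j\ge\rho>0$ by Assumption~\ref{as:slack}. Hence $S_j=r_j$ is a valid choice of positive constants.

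None of these steps is a real obstacle. The only care needed is in taking logarithms: positivity of the slacks and of the $S_j$ ensures every logarithm is defined and the direction of each inequality is preserved. Proposition~\ref{prop:slack-floor-main} is exactly \eqref{eq:appendix-slack-floor}.
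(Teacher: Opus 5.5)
Your proof is correct and follows essentially the paper's argument: split $\Phi$ (equivalently $\prod_j s_j$) into the minimal-slack factor and the remaining $m-1$ factors, bound the latter by $\prod_{j\ne i}S_j\le R_{\max}(\bm S)$, and get the upper sandwich bound termwise from $-\log s_i\le\psi$. The only difference is cosmetic. You work multiplicatively with $i^\star\in\arg\min_i s_i$ and read the left sandwich inequality off the same chain, whereas the paper bounds $\log s_i$ additively for every $i$, then minimizes, and repeats the split to obtain that inequality.
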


\begin{proof}
For any \(i\), the identity
\(\sum_j\log s_j(\theta)=-\Phi(\theta)\) gives
\[
\log s_i(\theta)
=-\Phi(\theta)-\sum_{j\ne i}\log s_j(\theta)
\ge-\Phi(\theta)-\log R_{\max}(\bm S).
\]
Exponentiating and minimizing over \(i\) proves
\eqref{eq:appendix-slack-floor}. For the right inequality in
\eqref{eq:barrier-sandwich}, each \(-\log s_i\le\psi\). For the left
inequality, choose \(i^\star\in\arg\max_i(-\log s_i)\) and use
\[
\Phi(\theta)
=\psi(\theta)+\sum_{j\ne i^\star}-\log s_j(\theta)
\ge\psi(\theta)-\log R_{\max}(\bm S).
\]
\end{proof}

Taking \(\Phi(\theta)\le B\) in
\eqref{eq:appendix-slack-floor} proves
Proposition~\ref{prop:slack-floor-main}: every task obeys
\(f_i(\theta)\le r_i-e^{-B}/R_{\max}(\bm S)\). The guarantee concerns
reference-relative objective losses. It does not by itself convert to
a worst-task test-accuracy guarantee.

\subsection{Conditional Reference-Safety Certificate}
\label{app:reference-certificate}

The base convergence theorem conditions on
Assumption~\ref{as:slack}. The following separate result shows how an
initial true-slack margin can certify that domain condition with high
probability under additional bounded-update requirements. It does not
claim that the empirical calibration rule alone verifies them.

\begin{assumption}[Almost-surely bounded local updates]
\label{as:bounded-stochastic-gradient}
There is \(G_{\mathrm{sg}}\ge G_{\max}\) such that the weighted local
stochastic gradient satisfies
\(\|g_{k,s}^t\|\le G_{\mathrm{sg}}\) almost surely for every
\(t,k,s\).
\end{assumption}

\begin{lemma}[Round-excursion buffer]
\label{lem:round-excursion-buffer}
Let
\[
\Delta_{\max}:=\eta_\ell K G_{\mathrm{sg}}
\max\{1,\eta_g\},
\]
and suppose
\begin{equation}
\label{eq:round-buffer-condition}
G_{\max}\Delta_{\max}
+\frac{L_{\max}}{2}\Delta_{\max}^2
\le \frac{\rho}{2},
\qquad L_{\max}:=\max_i L_i.
\end{equation}
If \(\theta^t\in\mathcal D_\rho\), then every global, local, and
virtual iterate and every connecting segment in \(\mathcal R_t\) lies
in \(\mathcal D_{\rho/2}\) almost surely.
\end{lemma}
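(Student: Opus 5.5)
The plan is to show that every point of \(\mathcal R_t\) lies in the closed Euclidean ball \(\mathbb B(\theta^t,\Delta_{\max})\). Then a single second-order Taylor bound for each global objective \(f_i\) around \(\theta^t\) will convert that radius into a slack loss of at most \(\rho/2\). Because the ball is convex, it suffices to place the endpoints \(\theta^t,\theta^{t+1},\theta_{k,s}^t,v_{t,s}\) inside it. Every connecting segment in \(\mathcal B_t\) joins two such points, so the segments then lie inside automatically. I take \(\mathcal D\) to contain this ball, e.g.\ \(\mathcal D=\mathbb R^d\) or a convex region chosen large enough; I will state this explicitly, since membership in \(\mathcal D_{\rho/2}\) requires membership in \(\mathcal D\).

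\textbf{Step 1 (excursion radius).} For a participating client, the local recursion gives \(\theta_{k,s}^t-\theta^t=-\eta_\ell\sum_{u<s}g_{k,u}^t\). By Assumption~\ref{as:bounded-stochastic-gradient} this has norm at most \(\eta_\ell sG_{\mathrm{sg}}\le\eta_\ell KG_{\mathrm{sg}}\le\Delta_{\max}\). For the global iterate, \(\theta^{t+1}-\theta^t=\eta_g M^{-1}\sum_{k\in S_t}\Delta\theta_k^t\). Each \(\|\Delta\theta_k^t\|\le\eta_\ell KG_{\mathrm{sg}}\), so by the triangle inequality \(\|\theta^{t+1}-\theta^t\|\le\eta_g\eta_\ell KG_{\mathrm{sg}}\le\Delta_{\max}\). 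This is why \(\max\{1,\eta_g\}\) appears in \(\Delta_{\max}\).

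\textbf{Step 2 (virtual iterates).} For the virtual trajectory \eqref{eq:virtual-trajectory}, I expect one of two cases. The first case is that it is built by steps \(v_{t,s+1}=v_{t,s}-\eta_\ell(\text{averaged weighted gradient})\). If the averaged gradients are stochastic, they are bounded by \(G_{\mathrm{sg}}\). If they are exact global weighted gradients at points already known to be safe, Lemma~\ref{lem:self-bounded-gradients} bounds them by \(G_{\max}\le G_{\mathrm{sg}}\), and an induction on \(s\) keeps each new point in the ball. The second case is that \(v_{t,s}\) is an average of local iterates; it is then a convex combination of points in the ball and stays there. Either way \(\|v_{t,s}-\theta^t\|\le\eta_\ell KG_{\mathrm{sg}}\le\Delta_{\max}\).

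\textbf{Step 3 (slack bound).} Fix \(\theta\) with \(\|\theta-\theta^t\|\le\Delta_{\max}\). Since \(\theta^t\in\mathcal D_\rho\), we have \(f_i(\theta^t)\le r_i-\rho\le r_i\). Lemma~\ref{lem:self-bounded-gradients} then gives \(\|\nabla f_i(\theta^t)\|\le G_i\le G_{\max}\). The global \(f_i\) is \(L_i\)-smooth on all of \(\mathbb R^d\) by Assumption~\ref{as:smooth}, so the descent lemma yields
\[
f_i(\theta)\le f_i(\theta^t)+G_{\max}\Delta_{\max}+\tfrac{L_{\max}}{2}\Delta_{\max}^2\le r_i-\rho+\tfrac{\rho}{2}
\]
by \eqref{eq:round-buffer-condition}. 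Hence \(r_i-f_i(\theta)\ge\rho/2\) for every \(i\), so \(\theta\in\mathcal D_{\rho/2}\). This bound holds for every realization, so the conclusion is almost sure.

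\textbf{Main obstacle.} The delicate point is Step 2. The argument depends on the exact definition of \(v_{t,s}\). If that definition uses gradients at points not yet known to satisfy \(f_i\le r_i\), the self-bounding lemma cannot be applied directly. I would handle this by induction over \(s\): at each step, first place \(v_{t,s}\) in the ball, then use Step 3 to certify its slack, which makes the gradient bound available for the next step.
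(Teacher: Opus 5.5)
Your proposal is correct and follows the paper's own proof. The paper likewise shows that the closed ball of radius \(\Delta_{\max}\) about \(\theta^t\) lies in \(\mathcal D_{\rho/2}\), using the descent lemma with \(\|\nabla f_i(\theta^t)\|\le G_{\max}\); it places the local and global iterates in that ball via \(G_{\mathrm{sg}}\), handles the virtual trajectory \(v_{t,s+1}=v_{t,s}-\eta_\ell\nabla L_{\wv^t}(v_{t,s})\) by exactly the induction in your ``main obstacle'' paragraph, gets the segments from convexity of the ball, and, like you, must implicitly assume the ball lies in \(\mathcal D\) (which you helpfully make explicit).
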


\begin{proof}
For any \(\theta'\) with
\(\|\theta'-\theta^t\|\le\Delta_{\max}\), smoothness and
Lemma~\ref{lem:self-bounded-gradients} give
\[
f_i(\theta')
\le f_i(\theta^t)+G_{\max}\Delta_{\max}
+\frac{L_{\max}}2\Delta_{\max}^2
\le f_i(\theta^t)+\frac\rho2.
\]
Thus the closed ball of radius \(\Delta_{\max}\) around \(\theta^t\)
is contained in \(\mathcal D_{\rho/2}\). Assumption
\ref{as:bounded-stochastic-gradient} gives
\(\|\theta_{k,s}^t-\theta^t\|\le\eta_\ell K G_{\mathrm{sg}}\) and
\(\|\theta^{t+1}-\theta^t\|
\le\eta_g\eta_\ell K G_{\mathrm{sg}}\). The same induction for the
virtual trajectory uses
\(\|\nabla L_{\wv^t}(v_{t,s})\|\le G_{\max}\). All these points lie
in the ball, and convexity of the ball contains the required segments.
\end{proof}

\begin{lemma}[Maximal Azuma inequality]
\label{lem:maximal-azuma}
If \((Y_t)_{t\ge0}\) is a supermartingale with \(Y_0=0\) and
\(|Y_{t+1}-Y_t|\le c\) almost surely, then
\[
\mathbb P\!\left(\max_{0\le t\le T}Y_t\ge\lambda\right)
\le \exp\!\left(-\frac{\lambda^2}{2Tc^2}\right).
\]
\end{lemma}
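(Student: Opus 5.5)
The plan is the standard exponential-moment (Azuma--Hoeffding) argument, upgraded to a maximal inequality through Doob's submartingale inequality rather than a union bound over \(t\). Write the increments as \(D_{t+1}:=Y_{t+1}-Y_t\), adapted to the filtration of \((Y_t)\), written \((\mathcal F_t)\) below. The supermartingale property gives \(\mathbb E[D_{t+1}\mid\mathcal F_t]\le 0\), and by hypothesis \(|D_{t+1}|\le c\) almost surely.

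The first step is a conditional Hoeffding lemma. For any \(s>0\), a random variable \(D\) with \(|D|\le c\) and \(\mathbb E[D\mid\mathcal F]\le0\) should satisfy
\[
\mathbb E[e^{sD}\mid\mathcal F]\le e^{s^2c^2/2}.
\]
To get this, use convexity of \(x\mapsto e^{sx}\) on \([-c,c]\) to write
\[
e^{sD}\le \frac{c-D}{2c}e^{-sc}+\frac{c+D}{2c}e^{sc}.
\]
Taking conditional expectations gives \(\cosh(sc)+\frac{\mathbb E[D\mid\mathcal F]}{c}\sinh(sc)\). Since \(\sinh(sc)\ge0\) and the conditional mean is nonpositive, this is at most \(\cosh(sc)\le e^{s^2c^2/2}\). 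This is exactly where the supermartingale sign is used; only the nonpositive drift matters, not the sign of \(s\) in general.

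The second step builds the exponential process \(Z_t:=\exp(sY_t)\). By the same convexity estimate, \(Z_t\) is not itself a supermartingale, but it is a nonnegative process whose conditional growth is bounded by \(e^{s^2c^2/2}\) per step. To apply Doob I would use \(Z_t\) directly. Conditional Jensen applied to the convex increasing function \(x\mapsto e^{sx}\) with \(s>0\) does not give the submartingale property for a supermartingale, so I switch to Ville's inequality for nonnegative supermartingales. Define
\[
\widetilde Z_t:=\exp\!\bigl(sY_t-ts^2c^2/2\bigr),
\]
which is a nonnegative supermartingale by the first step, with \(\widetilde Z_0=1\). On the event \(\{\max_{t\le T}Y_t\ge\lambda\}\), some \(t\le T\) has
\[
\widetilde Z_t\ge \exp\!\bigl(s\lambda-Ts^2c^2/2\bigr),
\]
using \(t\le T\) for the compensator. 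The maximal inequality for nonnegative supermartingales, obtained via optional stopping at the first such time truncated at \(T\), then gives
\[
\mathbb P\!\left(\max_{0\le t\le T}Y_t\ge\lambda\right)\le \exp\!\bigl(-s\lambda+Ts^2c^2/2\bigr).
\]

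The final step optimizes over \(s\). Choosing \(s=\lambda/(Tc^2)\) yields \(\exp(-\lambda^2/(2Tc^2))\). For \(\lambda\le0\) the bound is trivial or vacuous, and I restrict to \(\lambda>0\).

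The main obstacle is the maximal step. A naive bound on each \(Y_t\) followed by a union bound would lose a factor \(T\). The fix is the time-uniform compensator \(Ts^2c^2/2\) combined with optional stopping, applied to the bounded stopping time \(\tau\wedge T\), where \(\tau\) is the first time \(\widetilde Z_t\) exceeds the threshold. This gives \(\mathbb E[\widetilde Z_{\tau\wedge T}]\le1\) and then Markov's inequality. I need only check that the compensator with \(t\) in place of \(T\) is dominated on the event, which holds because \(t\le T\).
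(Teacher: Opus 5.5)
Your proposal is correct and follows the same route as the paper: the conditional Hoeffding bound, which uses only \(s>0\) and the nonpositive drift, makes \(\exp(sY_t-ts^2c^2/2)\) a nonnegative supermartingale. Ville's maximal inequality (your optional stopping at \(\tau\wedge T\) followed by Markov) then bounds the probability by \(\exp(-s\lambda+Ts^2c^2/2)\), and the choice \(s=\lambda/(Tc^2)\) finishes the argument.

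One small correction: for \(\lambda<0\) the displayed inequality is not vacuous but false, since \(Y_0=0\) forces the probability to equal \(1\). So \(\lambda>0\) is an implicit hypothesis of the lemma, which is exactly the restriction you impose.
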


\begin{proof}
Hoeffding's lemma makes
\(\exp(\gamma Y_t-\gamma^2c^2t/2)\) a nonnegative
supermartingale. Ville's maximal inequality therefore bounds the
displayed probability by
\(\exp(-\gamma\lambda+\gamma^2c^2T/2)\); minimizing at
\(\gamma=\lambda/(c^2T)\) proves the result.
\end{proof}

\begin{theorem}[Conditional self-certification of reference safety]
\label{thm:reference-safety-certificate}
Assume \(s_i(\theta^0)>0\),
Assumption~\ref{as:bounded-stochastic-gradient}, and
\eqref{eq:round-buffer-condition}. Let
\(R_r:=\max_i\prod_{j\ne i}r_j\),
\(\eta_{\mathrm{eff}}:=\eta_g\eta_\ell K\), and
\[
L_\Phi(\varrho)
:=\sum_i\frac{L_i}{\varrho}
+\sum_i\frac{G_i^2}{\varrho^2}.
\]
For any \(\mathcal F_t\)-measurable weights \(\wv^t\in\Delta_m\), set
\begin{align*}
\mu(\rho)
&:=\frac{mG_{\max}}{\rho}
\left(\eta_{\mathrm{eff}}G_{\max}
+\frac12\eta_g\eta_\ell^2K^2L_wG_{\mathrm{sg}}\right)
+\frac12L_\Phi(\rho/2)\eta_{\mathrm{eff}}^2G_{\mathrm{sg}}^2,\\
D(\rho)
&:=\frac{2mG_{\max}}{\rho}
\eta_{\mathrm{eff}}G_{\mathrm{sg}}
+\frac12L_\Phi(\rho/2)\eta_{\mathrm{eff}}^2G_{\mathrm{sg}}^2.
\end{align*}
If, for \(\delta\in(0,1)\),
\begin{equation}
\label{eq:reference-safety-budget}
\Phi(\theta^0)+T\mu(\rho)
+\bigl(D(\rho)+\mu(\rho)\bigr)
\sqrt{2T\log(1/\delta)}
\le \log\frac{1}{\rho R_r},
\end{equation}
then, with probability at least \(1-\delta\),
\(\theta^t\in\mathcal D_\rho\) for every \(t\le T\) and
\(\mathcal R_t\subseteq\mathcal D_{\rho/2}\) for every \(t<T\).
Hence the domain condition of Assumption~\ref{as:slack} holds at level
\(\rho/2\) on this event.
\end{theorem}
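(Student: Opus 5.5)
The argument will be a stopping-time argument: a one-round drift bound for \(\Phi\) that holds whenever the round starts safe, followed by a maximal concentration inequality for the accumulated martingale part. Let \(\tau\) be the first round with \(\theta^\tau\notin\mathcal D_\rho\), and work with the process stopped at \(\tau\).

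\textbf{Step 1 (round containment).} On \(\{t<\tau\}\) we have \(\theta^t\in\mathcal D_\rho\). Lemma~\ref{lem:round-excursion-buffer} then places all of \(\mathcal R_t\) in \(\mathcal D_{\rho/2}\) almost surely. Lemma~\ref{lem:smooth-phi} with \(\rho/2\) in place of \(\rho\) shows that \(\Phi\) is \(L_\Phi(\rho/2)\)-smooth on \([\theta^t,\theta^{t+1}]\). The descent lemma gives
\[
\Phi(\theta^{t+1})\le\Phi(\theta^t)+\langle\nabla\Phi(\theta^t),\theta^{t+1}-\theta^t\rangle+\tfrac12L_\Phi(\rho/2)\eta_{\mathrm{eff}}^2G_{\mathrm{sg}}^2 .
\]
The last term uses \(\|\theta^{t+1}-\theta^t\|\le\eta_{\mathrm{eff}}G_{\mathrm{sg}}\), which follows from Assumption~\ref{as:bounded-stochastic-gradient}.

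\textbf{Step 2 (increment bounds).} Set \(X_t:=\Phi(\theta^{t+1})-\Phi(\theta^t)\). I need an almost-sure bound and a conditional-mean bound.
\begin{itemize}
\item For the almost-sure bound, use \(\|\nabla\Phi(\theta^t)\|\le c^t\max_i\|\nabla f_i\|\le mG_{\max}/\rho\). This gives \(|X_t|\le D(\rho)\); the factor \(2\) in \(D\) leaves slack for the two-sided bound.
\item For the conditional mean, write \(\theta^{t+1}-\theta^t=-\eta_g\eta_\ell\frac1M\sum_{k\in S_t}\sum_s g_{k,s}^t\) and compare each \(g_{k,s}^t\) with \(\nabla L^{(k)}_{\wv^t}(\theta^t)\). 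Unbiasedness (Lemma~\ref{lem:martingale-noise}) handles the noise. Smoothness with \(\|\theta_{k,s}^t-\theta^t\|\le s\eta_\ell G_{\mathrm{sg}}\) yields a drift term summing to \(\tfrac12\eta_\ell^2K^2L_wG_{\mathrm{sg}}\). Uniform sampling (Assumption~\ref{as:uniform-aggregation}) replaces the client average by \(\nabla L_{\wv^t}(\theta^t)\), whose norm is at most \(G_{\max}\).
\end{itemize}
Together these give \(\mathbb E[X_t\mid\mathcal F_t]\le\mu(\rho)\). I bound the inner product crudely by the norm product \(\frac{mG_{\max}}{\rho}\cdot(\cdots)\) and do not use the descent sign, so any \(\mathcal F_t\)-measurable \(\wv^t\in\Delta_m\) is allowed.

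\textbf{Step 3 (supermartingale and Azuma).} Define \(Y_t:=\sum_{u<t\wedge\tau}(X_u-\mu(\rho))\). This is a supermartingale with \(Y_0=0\) and increments bounded by \(D(\rho)+\mu(\rho)\). Lemma~\ref{lem:maximal-azuma} with \(\lambda=(D+\mu)\sqrt{2T\log(1/\delta)}\) shows that, with probability at least \(1-\delta\), for all \(t\le T\),
\[
\Phi(\theta^{t\wedge\tau})\le\Phi(\theta^0)+T\mu(\rho)+\lambda\le\log\frac{1}{\rho R_r},
\]
where the last inequality is \eqref{eq:reference-safety-budget}.

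\textbf{Step 4 (closing via the barrier).} Lemma~\ref{lem:barrier-level-set} with \(S_j=r_j\) gives
\[
\min_i s_i(\theta^{t\wedge\tau})\ge e^{-\Phi}/R_r\ge\rho .
\]
Suppose \(\tau\le T\) on this event. By Lemma~\ref{lem:round-excursion-buffer}, \(\theta^{\tau}\) has positive slacks, since it lies in \(\mathcal D_{\rho/2}\) because \(\theta^{\tau-1}\) was safe. The barrier bound evaluated at \(\theta^\tau\) then shows \(\theta^\tau\in\mathcal D_\rho\), contradicting the definition of \(\tau\). Hence \(\tau>T\). Since every \(\theta^t\) for \(t\le T\) is safe, Step 1 gives \(\mathcal R_t\subseteq\mathcal D_{\rho/2}\) for \(t<T\). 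The base case is \(s_i(\theta^0)>0\) together with the budget, which forces \(\Phi(\theta^0)\le\log(1/(\rho R_r))\) and hence \(\theta^0\in\mathcal D_\rho\).

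\textbf{Main obstacle.} The hardest part is Step 2's conditional-mean bound. The smoothness and gradient constants must be evaluated only at points known to lie in \(\mathcal D_{\rho/2}\); this is why the inner-product coefficient uses \(mG_{\max}/\rho\) at \(\theta^t\) and the curvature term uses \(L_\Phi(\rho/2)\). A second delicate point is the bookkeeping around the stopping time, so that \(Y\) remains an honest supermartingale with bounded increments after \(\tau\).
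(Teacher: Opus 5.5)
Your proposal is correct and follows the paper's proof essentially step for step: round containment via Lemma~\ref{lem:round-excursion-buffer}, the conditional-mean bound $\mu(\rho)$ and almost-sure increment bound $D(\rho)$, the stopped supermartingale with Lemma~\ref{lem:maximal-azuma}, and the barrier level set, which forces $\tau>T$. The differences are cosmetic: you stop at the first exit from $\mathcal D_\rho$ rather than at the first time $\Phi(\theta^t)>\log(1/(\rho R_r))$, and you bound the increment with the two-sided descent lemma rather than the $2mG_{\max}/\rho$ gradient bound along the segment. Your explicit check that $\Phi(\theta^\tau)$ is well defined, using $\theta^{\tau-1}\in\mathcal D_\rho$ and $\tau\ge1$, is slightly more careful than the paper's.
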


\begin{proof}
Let \(B:=\log(1/(\rho R_r))\). By
Lemma~\ref{lem:barrier-level-set}, \(\{\Phi\le B\}\subseteq
\mathcal D_\rho\). Define the stopping time
\(\tau:=\min\{t\ge0:\Phi(\theta^t)>B\}\), with
\(\min\varnothing=\infty\). Condition
\eqref{eq:reference-safety-budget} implies \(\tau\ge1\).

On \(\{t<\tau\}\), Lemma~\ref{lem:round-excursion-buffer} places the
whole round in \(\mathcal D_{\rho/2}\). Smoothness of \(\Phi\) on
that set, the identity
\[
\mathbb E[\Delta^t\mid\mathcal F_t]
=-\eta_\ell K\nabla L_{\wv^t}(\theta^t)+b_t,
\qquad
\Delta^t:=\frac1M\sum_{k\in S_t}\Delta\theta_k^t,
\]
and local smoothness give
\[
\|b_t\|
\le\frac12L_wG_{\mathrm{sg}}\eta_\ell^2K^2,
\qquad
\|\Delta^t\|\le\eta_\ell K G_{\mathrm{sg}}.
\]
Together with
\(\|\nabla\Phi(\theta^t)\|\le mG_{\max}/\rho\), these bounds yield
\begin{align*}
\mathbb E[\Phi(\theta^{t+1})-\Phi(\theta^t)\mid\mathcal F_t]
&\le\mu(\rho),\\
|\Phi(\theta^{t+1})-\Phi(\theta^t)|
&\le D(\rho)
\end{align*}
on \(\{t<\tau\}\). The second inequality uses the sharper bound
\(\|\nabla\Phi\|\le2mG_{\max}/\rho\) along the safe segment.

Now set
\(X_t:=\Phi(\theta^{t\wedge\tau})\) and
\(Y_t:=X_t-X_0-(t\wedge\tau)\mu(\rho)\). Then \((Y_t)\) is a
supermartingale with increments bounded by
\(D(\rho)+\mu(\rho)\). Lemma~\ref{lem:maximal-azuma} and
\eqref{eq:reference-safety-budget} imply, with probability at least
\(1-\delta\),
\[
\Phi(\theta^{t\wedge\tau})
\le\Phi(\theta^0)+T\mu(\rho)
+\bigl(D(\rho)+\mu(\rho)\bigr)
\sqrt{2T\log(1/\delta)}
\le B
\]
simultaneously for all \(t\le T\). If \(\tau\le T\), substituting
\(t=\tau\) contradicts the definition of \(\tau\). Thus
\(\tau>T\), which proves the claimed safety statements.
\end{proof}

\begin{remark}[Scope of the certificate]
The theorem starts from a true-slack-safe initialization and requires
almost-surely bounded local stochastic gradients plus the explicit
buffer and budget inequalities. These conditions can in principle be
checked after selecting a calibration margin, but they are not verified
numerically for the present experiments. The result therefore narrows
the theory--practice gap without claiming that the reported calibration
heuristic automatically certifies Assumption~\ref{as:slack}. Moreover,
Theorem~\ref{thm:fedhv} retains its original expectation statement;
the high-probability safety event is not used as an implicit
conditioning step in that theorem.
\end{remark}

\paragraph{Finite-population participation factor and heterogeneity.}
Using the number of participating clients
\[
M:=|S_t|=pN,
\]
define
\[
\chi_N(M)
:=
\frac{N-M}{M(N-1)}
=
\frac{1-p}{p(N-1)}.
\]
The gradient-side partial-participation term is
\[
\mathcal{H}_N(p;\zeta)
:=
\chi_N(M)\sum_{i=1}^m \zeta_i^2 .
\]
In particular,
\[
\mathcal{H}_N(p;\zeta)
\le
\frac{1-p}{p}
\sum_{i=1}^m \zeta_i^2 .
\]
Here $\zeta_i^2$ is the trajectory-wise envelope from
Assumption~\ref{as:hetero} on the fixed analysis horizon; it is not a
supremum over all points in \(\mathcal D\).

{
\subsection{Weight-Map Stability}\label{app:theory-weight-map}
\noindent\textbf{Notation (HV weight map and stale weights).}
For any nonnegative objective vector
\(z=(z_1,\ldots,z_m)^\top\), define the stabilized inverse-slack
weight map \(W_{\rho_{\mathrm{clip}}}(\cdot;\bm r)\) componentwise by
\[
[W_{\rho_{\mathrm{clip}}}(z;\bm r)]_i
:=
\frac{
\bigl(\max\{r_i-z_i,\rho_{\mathrm{clip}}\}\bigr)^{-1}
}{
\sum_{j=1}^m
\bigl(\max\{r_j-z_j,\rho_{\mathrm{clip}}\}\bigr)^{-1}
},
\qquad i=1,\ldots,m.
\]
Under Assumption~\ref{as:slack} and
\(0<\rho_{\mathrm{clip}}\le\rho\), clipping is inactive for the true global
objective vector \(\bm F(\theta)\). Hence, the ideal hypervolume
weight vector can be written as
\[
\wv^\star(\theta)
=
W_{\rho_{\mathrm{clip}}}(\bm F(\theta);\bm r).
\]
In particular,
\[
\wv^{\star,t}
:=
\wv^\star(\theta^t)
=
W_{\rho_{\mathrm{clip}}}(\bm F(\theta^t);\bm r).
\]

The actual FedHV weight used in round \(t\) is
\[
\wv^0=\frac{1}{m}\mathbf{1}_m,
\qquad
\wv^t
=
W_{\rho_{\mathrm{clip}}}(\widehat{\bm F}^{\,t-1};\bm r),
\quad t\geq1.
\]
Thus, \(\wv^t\) is measurable with respect to the round history
\(\mathcal F_t\) and is fixed before \(S_t\) is sampled.

We write \(\|\cdot\|_\infty\) for the sup norm on
\(\mathbb{R}^m\) and \(\|\cdot\|_1\) for the \(\ell_1\) norm.

Under Assumptions~\ref{as:slack} and~\ref{as:steps} and the
nonnegativity of the losses,
\[
c^t\in[c_{\min},c_{\max}].
\]
}

{
\begin{lemma}[Sharp calculus of the stabilized HV weight map]
\label{lem:weight-stab}
For \(a\in(0,\infty)^m\), define
\(V_i(a):=a_i^{-1}/\sum_j a_j^{-1}\) and
\(a_{\min}:=\min_i a_i\). Then
\begin{equation}
\label{eq:weight-derivative-bounds}
\sum_{i,j}\left|\frac{\partial V_i}{\partial a_j}\right|
\le\frac{2}{a_{\min}},
\qquad
\sum_{i,j,k}\left|\frac{\partial^2V_i}
{\partial a_j\partial a_k}\right|
\le\frac{8}{a_{\min}^2}.
\end{equation}
Consequently, for all \(z,z'\in\mathbb R^m\),
\begin{equation}
\label{eq:weight-map-global-sharp}
\|W_{\rho_{\mathrm{clip}}}(z;\bm r)
-W_{\rho_{\mathrm{clip}}}(z';\bm r)\|_1
\le\frac{2}{\rho_{\mathrm{clip}}}\|z-z'\|_\infty.
\end{equation}
Thus the notation used below may take
\(C(\rho_{\mathrm{clip}},\bm r):=2/\rho_{\mathrm{clip}}\), which is
dimension-free and independent of \(\bm r\). If the whole segment
\([z,z']\) has unclipped slacks at least
\(\widetilde\rho\ge\rho_{\mathrm{clip}}\), the sharper local constant
\(2/\widetilde\rho\) holds. If
\(\widetilde\rho>\rho_{\mathrm{clip}}\), then, with
\(J_W(z'):=\partial_zW_{\rho_{\mathrm{clip}}}(z';\bm r)\),
\begin{equation}
\label{eq:weight-map-second-order}
\|W(z)-W(z')-J_W(z')(z-z')\|_1
\le\frac{4}{\widetilde\rho^2}\|z-z'\|_\infty^2,
\qquad
\|J_W(z')\|_{\infty\to1}\le\frac{2}{\widetilde\rho},
\end{equation}
where \(W=W_{\rho_{\mathrm{clip}}}(\cdot;\bm r)\).
The proof is given in Appendix~\ref{app:proof-weight-stab}.
\end{lemma}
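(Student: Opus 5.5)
I would prove Lemma~\ref{lem:weight-stab} by direct calculus on $V$, and then pass to $W_{\rho_{\mathrm{clip}}}$ by composition with the clipping map.

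\textbf{First derivatives.} Write $u_i:=a_i^{-1}$ and $U:=\sum_j u_j$, so that $V_i=u_i/U$. Using $\partial u_j/\partial a_j=-u_j^2$, the quotient rule gives
\[
\frac{\partial V_i}{\partial a_j}
=
-\delta_{ij}\frac{u_i^2}{U}
+\frac{u_iu_j^2}{U^2}
=
u_j V_j\bigl(V_i-\delta_{ij}\bigr).
\]
Summing absolute values over $i$ for fixed $j$ gives
\[
u_jV_j\Bigl[(1-V_j)+\sum_{i\ne j}V_i\Bigr]
=2u_jV_j(1-V_j).
\]
Summing over $j$, and using $u_j\le 1/a_{\min}$ together with $\sum_jV_j(1-V_j)\le 1$, gives the bound $2/a_{\min}$.

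\textbf{Second derivatives.} I would differentiate $u_jV_j(V_i-\delta_{ij})$ once more in $a_k$. The ingredients are $\partial u_j/\partial a_k=-\delta_{jk}u_j^2$ and the first-derivative formula already obtained for $\partial V_\cdot/\partial a_k$. This yields three kinds of terms, each of the form $u_ju_k$ or $u_j^2\delta_{jk}$ times products of $V$'s and $(V-\delta)$ factors. Bounding every $u$ by $1/a_{\min}$ and summing the $V$-factors via the simplex identities (each contributes at most a constant like $2$ or $4$) should give a total of at most $8/a_{\min}^2$. This bookkeeping is the main obstacle. I expect the constant $8$ to require carefully pairing terms, for example using $\sum_i|V_i-\delta_{ij}|=2(1-V_j)$, rather than crude bounds. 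If that fails, a slightly larger absolute constant would still serve the downstream use.

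\textbf{Lipschitz bound for $W_{\rho_{\mathrm{clip}}}$.} Write $W_{\rho_{\mathrm{clip}}}(z)=V(a(z))$ with $a_i(z)=\max\{r_i-z_i,\rho_{\mathrm{clip}}\}$. Each $a_i$ is $1$-Lipschitz in $z_i$, so $\|a(z)-a(z')\|_\infty\le\|z-z'\|_\infty$, and along the segment $a\ge\rho_{\mathrm{clip}}$ componentwise. The map $t\mapsto V(a(z'+t(z-z')))$ is absolutely continuous. Its derivative has $\ell_1$ norm at most $\sum_{i,j}|\partial_jV_i|\,\|\dot a\|_\infty\le(2/\rho_{\mathrm{clip}})\|z-z'\|_\infty$. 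Integrating over $t\in[0,1]$ gives \eqref{eq:weight-map-global-sharp}. If the unclipped slacks on the segment are at least $\widetilde\rho$, the same argument applies with $a_{\min}\ge\widetilde\rho$, giving the sharper constant $2/\widetilde\rho$.

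\textbf{Second-order bound.} When $\widetilde\rho>\rho_{\mathrm{clip}}$, clipping is inactive on the whole segment, so $W=V\circ(\bm r-\cdot)$ is smooth there. The Jacobian bound $\|J_W\|_{\infty\to1}\le2/\widetilde\rho$ follows from the first-derivative estimate. For the remainder, I would apply Taylor's theorem with integral remainder,
\[
\int_0^1(1-t)\,D^2W\bigl[z-z',z-z'\bigr]\,dt,
\]
and bound its $\ell_1$ norm by $\tfrac12\cdot\frac{8}{\widetilde\rho^2}\|z-z'\|_\infty^2=\frac{4}{\widetilde\rho^2}\|z-z'\|_\infty^2$, which is \eqref{eq:weight-map-second-order}.
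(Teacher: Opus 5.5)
You establish the first-derivative bound and all of the Lipschitz and Jacobian steps correctly. Your factorization $\partial V_i/\partial a_j=u_jV_j(V_i-\delta_{ij})$, with column sums $2u_jV_j(1-V_j)$, is a tidy variant of the paper's estimate via $\sum_iu_i^2\le U/a_{\min}$. For the Lipschitz step, your path $t\mapsto a(z'+t(z-z'))$ through the clipped image needs an a.e.\ chain rule. The paper instead applies the mean-value inequality on the straight segment from $a(z)$ to $a(z')$, which lies in the convex set $\{a\ge\rho_{\mathrm{clip}}\}$ and avoids the kink of the clip; either works.

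\textbf{The gap: the second-derivative bound $8/a_{\min}^2$ is never established.} You predict that the bookkeeping ``should give'' $8$, and you fall back on ``a slightly larger absolute constant.'' But this bound is one of the two displayed claims of the lemma, and the remainder constant $4/\widetilde\rho^{\,2}$ is exactly half of it. With an unspecified constant, you have not proved the lemma as stated. This holds even though the convergence analysis itself only uses the first-order constant $2/\rho_{\mathrm{clip}}$.

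\textbf{How to close it.} No delicate pairing is needed; the triangle inequality applied term by term suffices. Your product rule gives
\[
\frac{\partial^2V_i}{\partial a_j\partial a_k}
=-\delta_{jk}u_j^2V_j(V_i-\delta_{ij})
+u_ju_kV_k(V_j-\delta_{jk})(V_i-\delta_{ij})
+u_jV_ju_kV_k(V_i-\delta_{ik}).
\]
Use $u_j\le 1/a_{\min}$, $\sum_i|V_i-\delta_{ij}|=2(1-V_j)$, and $\sum_jV_j(1-V_j)\le 1$.

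\begin{itemize}
\item The first term sums in absolute value to at most $2a_{\min}^{-2}\sum_jV_j(1-V_j)\le 2/a_{\min}^2$.
\item The third term sums to at most $a_{\min}^{-2}\bigl(\sum_jV_j\bigr)\sum_k 2V_k(1-V_k)\le 2/a_{\min}^2$.
\item For the middle term, sum first over $i$. Then, for each fixed $k$,
\[
\sum_j|V_j-\delta_{jk}|\,2(1-V_j)=2(1-V_k)^2+\sum_{j\neq k}2V_j(1-V_j)\le 4(1-V_k).
\]
So the middle term contributes at most $4a_{\min}^{-2}\sum_kV_k(1-V_k)\le 4/a_{\min}^2$.
\end{itemize}
The total is $2+4+2=8$.

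The paper does the same bookkeeping in the variables $u_i$ and $U$. It expands the second derivative into five groups: $2\mathbf 1\{i=j=k\}u_i^3/U$, $-\mathbf 1\{i=j\}u_i^2u_k^2/U^2$, $-\mathbf 1\{i=k\}u_i^2u_j^2/U^2$, $-2\mathbf 1\{j=k\}u_iu_j^3/U^2$, and $2u_iu_j^2u_k^2/U^3$. Using $\sum_iu_i^2\le U/a_{\min}$ and $\sum_iu_i^3\le U/a_{\min}^2$, these contribute $2,1,1,2,2$. With either computation inserted, your integral-remainder Taylor step gives $\tfrac12\cdot 8/\widetilde\rho^{\,2}=4/\widetilde\rho^{\,2}$, exactly as you wrote.
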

}

{
\subsection{Objective-Report Analysis}\label{app:theory-report}
\begin{assumption}[Objective-report sampling]
\label{as:objective-reports}
During round \(t\), after \(S_t\) is selected and at the common
pre-update model \(\theta^t\), each participating client \(k\) draws a
report index set
\[
\mathcal B_{k,t}^{\mathrm{rep}}
\subseteq\{1,\ldots,n_k\},
\qquad
\left|\mathcal B_{k,t}^{\mathrm{rep}}\right|
=b_{k,t}^{\mathrm{rep}},
\]
uniformly without replacement. Its report for objective \(i\) is the
minibatch average
\begin{equation}
\label{eq:objective-report-mechanism}
\widehat f_{i,k}^{\,t}
:=
\frac{1}{b_{k,t}^{\mathrm{rep}}}
\sum_{j\in\mathcal B_{k,t}^{\mathrm{rep}}}
\ell_i(\theta^t;x_{k,j}),
\qquad k\in S_t.
\end{equation}
Let \(q_t\) denote the configured number of report batches and let
\(b\) be the local minibatch size. The implementation uses
\[
b_{k,t}^{\mathrm{rep}}
=
\begin{cases}
n_k, & q_t=0,\\
\min\{q_t b,n_k\}, & q_t\geq1.
\end{cases}
\]
The notation matches the experimental budget as
\[
q_t=B_{\rm rep}\quad\text{for finite }B_{\rm rep},
\qquad
q_t=0\quad\Longleftrightarrow\quad B_{\rm rep}=\mathrm{all}.
\]
Thus \(q_t=0\) evaluates the complete client dataset, while
\(q_t\geq1\) takes the first \(q_t\) batches of a fresh random
permutation, which is equivalent to a uniformly sampled subset of
\(b_{k,t}^{\mathrm{rep}}\) examples. The report loader uses
\(\texttt{drop\_last=False}\).

Define the report error
\[
\nu_{i,k}^{\,t}
:=
\widehat f_{i,k}^{\,t}
-f_i^{(k)}(\theta^t),
\]
and the within-client finite-population dispersion
\[
v_{i,k}(\theta)
:=
\frac{1}{n_k}
\sum_{j=1}^{n_k}
\left(
\ell_i(\theta;x_{k,j})-f_i^{(k)}(\theta)
\right)^2.
\]
For \(n_k>1\), set
\[
\gamma_{k,t}
:=
\frac{n_k-b_{k,t}^{\mathrm{rep}}}
{b_{k,t}^{\mathrm{rep}}(n_k-1)},
\]
and set \(\gamma_{k,t}=0\) when \(n_k=1\). For the fixed target
horizon \(T\), assume the finite-second-moment bounds
\begin{equation}
\label{eq:report-dispersion}
\tau_{i,T}^2
:=
\max_{0\le t<T}
\mathbb E\left[
\frac1N\sum_{k=1}^N
\gamma_{k,t}v_{i,k}(\theta^t)
\right]
<\infty,
\end{equation}
and
\[
\omega_{i,T}^2
:=
\max_{0\le t<T}
\mathbb E\left[
\frac1N\sum_{k=1}^N
\bigl(f_i^{(k)}(\theta^t)-f_i(\theta^t)\bigr)^2
\right]
<\infty.
\]
For fixed \(n_k\), \(\gamma_{k,t}\) decreases as the report sample size
\(b_{k,t}^{\mathrm{rep}}\) increases. In particular, evaluating the
full local client dataset gives \(\gamma_{k,t}=0\) and therefore
removes the within-client report-sampling contribution
\(\tau_{i,T}^2\); client subsampling can still leave a nonzero
between-client contribution through \(\omega_{i,T}^2\).
The conditional zero-mean and variance of \(\nu_{i,k}^{\,t}\) are
derived in Lemma~\ref{lem:report-moments}, rather than postulated.

For distinct clients, we use the optional cross-client orthogonality
condition
\[
\mathbb E[\nu_{i,k}^{\,t}\nu_{i,\ell}^{\,t}
\mid\mathcal F_t,S_t]=0,
\qquad k\neq\ell,
\]
only when deriving the usual \(1/M\) reduction for averaged report
noise. The main objective-report bound remains valid without this
condition by Jensen's inequality. Report randomness is generated
before local SGD and is included in \(\mathcal U_t^{\mathrm{rep}}\)
within the local-step filtration. The report and the current client
update may share the same client data; no independence between them is
used in the report-error proof.
The server estimate is
\[
\widehat f_i^{\,t}
:=
\frac1M\sum_{k\in S_t}\widehat f_{i,k}^{\,t},
\qquad
\widehat{\bm F}^{\,t}
:=
(\widehat f_1^{\,t},\ldots,\widehat f_m^{\,t})^\top .
\]
The estimate is used to construct \(\wv^{t+1}\), rather than the weight
vector used by the same cohort \(S_t\).
\end{assumption}

{
\begin{lemma}[Finite-population report moments]
\label{lem:report-moments}
Under Assumption~\ref{as:objective-reports}, for every objective \(i\)
and participating client \(k\),
\[
\mathbb E\left[
\left.
\widehat f_{i,k}^{\,t}
\right|\mathcal F_t,S_t
\right]
=
f_i^{(k)}(\theta^t),
\]
and, for \(n_k>1\),
\[
\mathbb E\left[
\left.
(\nu_{i,k}^{\,t})^2
\right|\mathcal F_t,S_t
\right]
=
\gamma_{k,t}v_{i,k}(\theta^t).
\]
For \(n_k=1\), the report is exact and both sides of the second
identity are zero.
\end{lemma}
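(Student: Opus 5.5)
We will prove Lemma~\ref{lem:report-moments} by conditioning on \(\mathcal F_t\) and \(S_t\), which fixes \(\theta^t\), the client \(k\), its dataset \(\mathcal D_k\) and the sample size \(b:=b_{k,t}^{\mathrm{rep}}\). Given this conditioning, the only remaining randomness in \(\widehat f_{i,k}^{\,t}\) is the report index set \(\mathcal B_{k,t}^{\mathrm{rep}}\). By Assumption~\ref{as:objective-reports}, this set is a uniformly random \(b\)-subset of \(\{1,\ldots,n_k\}\), drawn without replacement, and it is generated before any local SGD. Local minibatches and client updates therefore play no role. Write \(a_j:=\ell_i(\theta^t;x_{k,j})\), which are fixed numbers, with mean \(\bar a=f_i^{(k)}(\theta^t)\) and population variance \(v_{i,k}(\theta^t)=n_k^{-1}\sum_j(a_j-\bar a)^2\).

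\textbf{Unbiasedness.} Write \(\widehat f_{i,k}^{\,t}=b^{-1}\sum_{j=1}^{n_k}\mathbf 1\{j\in\mathcal B\}\,a_j\). By symmetry of uniform subset sampling, \(\mathbb P(j\in\mathcal B)=b/n_k\) for every \(j\). Linearity of expectation then gives conditional mean \(\bar a\).

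\textbf{Second moment.} Use the centered values \(a_j-\bar a\), whose sum is zero. We need the pair-inclusion probability \(\mathbb P(j,l\in\mathcal B)=b(b-1)/(n_k(n_k-1))\) for \(j\neq l\). Expanding the square gives
\[
b^2\,\mathbb E[(\nu_{i,k}^{\,t})^2]=\frac{b}{n_k}\sum_j(a_j-\bar a)^2+\frac{b(b-1)}{n_k(n_k-1)}\sum_{j\neq l}(a_j-\bar a)(a_l-\bar a).
\]
Because the centered values sum to zero, the cross sum equals \(-\sum_j(a_j-\bar a)^2=-n_kv\). Substituting, the right-hand side becomes \(b\,v\bigl(1-\tfrac{b-1}{n_k-1}\bigr)=b\,v\,\tfrac{n_k-b}{n_k-1}\). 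Dividing by \(b^2\) gives \(\gamma_{k,t}v_{i,k}(\theta^t)\).

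The main step needing care is the bookkeeping in this expansion, together with two edge cases:
\begin{itemize}
\item When \(b=n_k\) (the choice \(q_t=0\), or \(q_tb\ge n_k\)), the report is exact and \(\gamma_{k,t}=0\), which is consistent with the formula.
\item When \(n_k=1\), necessarily \(b=1\), so the report is exact and both sides vanish, matching the convention \(\gamma_{k,t}=0\).
\end{itemize}
We will also check that taking the first \(q_t\) batches of a fresh random permutation (with \texttt{drop\_last=False}) is the same as drawing a uniform \(b\)-subset. This holds because the first \(b\) entries of a uniformly random permutation form a uniform \(b\)-subset.
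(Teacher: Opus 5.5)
Your proposal is correct and follows essentially the same route as the paper's proof: conditioning on \(\mathcal F_t,S_t\), using the single- and pair-inclusion probabilities of uniform without-replacement sampling, and using that the centered values sum to zero, so the off-diagonal sum equals \(-\sum_j(a_j-\bar a)^2\), which yields \(\gamma_{k,t}v_{i,k}(\theta^t)\). Your added checks of the \(b=n_k\) case and of the permutation-based loader are consistent with Assumption~\ref{as:objective-reports} and do not change the argument.
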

}

\begin{lemma}[Finite-population objective-report error]
\label{lem:objective-report-error}
Under Assumptions~\ref{as:uniform-aggregation} and
\ref{as:objective-reports}, define
\[
\Xi_{\mathrm{loss},T}
:=
\max_{0\le t<T}
\mathbb E\left[
\left\|
\widehat{\bm F}^{\,t}-\bm F(\theta^t)
\right\|_\infty^2
\right].
\]
Without any cross-client orthogonality assumption,
\[
\Xi_{\mathrm{loss},T}
\leq
\sum_{i=1}^m
\left[
\chi_N(M)\omega_{i,T}^2
+\tau_{i,T}^2
\right].
\]
If, in addition, the cross-client report condition
\[
\mathbb E\left[
\left.\nu_{i,k}^{\,t}\nu_{i,\ell}^{\,t}
\right|\mathcal F_t,S_t\right]=0,
\qquad k\neq\ell,
\tag{CCO-report}
\]
holds, the final term improves from \(\tau_{i,T}^2\) to
\(\tau_{i,T}^2/M\). The theorem uses the first, assumption-free bound.
Thus participation affects report quality through
\(\chi_N(M)\omega_{i,T}^2\), while the report budget affects
\(\tau_{i,T}^2\). Full participation and full-client reports jointly
make the displayed upper bound zero.
\end{lemma}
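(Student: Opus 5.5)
We prove Lemma~\ref{lem:objective-report-error} by fixing a round $t$ and bounding the $\ell_\infty$ error by a sum over coordinates,
\[
\|\widehat{\bm F}^{\,t}-\bm F(\theta^t)\|_\infty^2\le\sum_{i=1}^m(\widehat f_i^{\,t}-f_i(\theta^t))^2 .
\]
This reduces the problem to a per-objective mean-squared error. We then decompose each coordinate into a client-subsampling part and a within-client report part:
\[
\widehat f_i^{\,t}-f_i(\theta^t)
=\underbrace{\frac1M\sum_{k\in S_t}f_i^{(k)}(\theta^t)-f_i(\theta^t)}_{=:a_i^t}
+\underbrace{\frac1M\sum_{k\in S_t}\nu_{i,k}^{\,t}}_{=:e_i^t}.
\]
The term $a_i^t$ is $\sigma(\mathcal F_t,S_t)$-measurable. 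By Lemma~\ref{lem:report-moments}, $\mathbb E[\nu_{i,k}^{\,t}\mid\mathcal F_t,S_t]=0$ for every $k\in S_t$, so $\mathbb E[e_i^t\mid\mathcal F_t,S_t]=0$. Hence the cross term vanishes after conditioning on $(\mathcal F_t,S_t)$, and
\[
\mathbb E[(\widehat f_i^{\,t}-f_i(\theta^t))^2]=\mathbb E[(a_i^t)^2]+\mathbb E[(e_i^t)^2].
\]
No constant $2$ appears, which matches the claimed bound.

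For the subsampling term, $\theta^t$ is $\mathcal F_t$-measurable and $S_t$ is uniform without replacement of size $M$. The classical finite-population variance formula for the sample mean gives
\[
\mathbb E[(a_i^t)^2\mid\mathcal F_t]=\frac{N-M}{M(N-1)}\cdot\frac1N\sum_{k=1}^N\bigl(f_i^{(k)}(\theta^t)-f_i(\theta^t)\bigr)^2 .
\]
The factor in front is exactly $\chi_N(M)$. If needed, we derive this formula directly by writing $a_i^t=\frac1M\sum_k\mathbb 1\{k\in S_t\}d_k$ with $\sum_kd_k=0$, and using $\mathbb P(k\in S_t)=M/N$ and $\mathbb P(k,\ell\in S_t)=M(M-1)/(N(N-1))$ for $k\ne\ell$. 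Taking expectations and then the maximum over $t<T$ gives $\chi_N(M)\omega_{i,T}^2$.

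For the report term, we use no cross-client orthogonality. Jensen's inequality gives $(e_i^t)^2\le\frac1M\sum_{k\in S_t}(\nu_{i,k}^{\,t})^2$. Conditioning on $(\mathcal F_t,S_t)$ and applying Lemma~\ref{lem:report-moments} then yields
\[
\mathbb E[(e_i^t)^2\mid\mathcal F_t,S_t]\le\frac1M\sum_{k\in S_t}\gamma_{k,t}v_{i,k}(\theta^t).
\]
Averaging over $S_t$ given $\mathcal F_t$ by unbiasedness of uniform sampling (Assumption~\ref{as:uniform-aggregation}) replaces the selected-client average by $\frac1N\sum_{k=1}^N\gamma_{k,t}v_{i,k}(\theta^t)$. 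Here $\gamma_{k,t}$ is deterministic, or at least $\mathcal F_t$-measurable, since it depends only on $n_k$ and $q_t$. Taking the full expectation and the maximum over $t$ gives $\tau_{i,T}^2$. Summing over $i$ and taking $\max_t$, bounding the maximum of sums by the sum of maxima, proves the main bound.

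Under (CCO-report), we expand $\mathbb E[(e_i^t)^2\mid\mathcal F_t,S_t]$ as a double sum over clients. The off-diagonal terms vanish, leaving $\frac1{M^2}\sum_{k\in S_t}\gamma_{k,t}v_{i,k}$, so the same averaging step produces $\tau_{i,T}^2/M$. In the final remark, full participation gives $\chi_N(N)=0$, and full-client reports give $b^{\mathrm{rep}}_{k,t}=n_k$ and hence $\gamma_{k,t}=0$, so both terms vanish.

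The main obstacle is bookkeeping for measurability. The zero cross term needs $\theta^t$ and the set $S_t$ to be fixed before the report indices are drawn, which holds because $\mathcal U_t^{\mathrm{rep}}$ is drawn after $S_t$. The averaging over $S_t$ needs the client quantities $\gamma_{k,t}v_{i,k}(\theta^t)$ to be $\mathcal F_t$-measurable, which holds because they depend only on $\theta^t$ and the configured budget. Both facts follow from the filtration set out in Appendix~\ref{app:theory-filtration}.
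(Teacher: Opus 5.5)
Your proposal is correct and takes essentially the same route as the paper's proof. Both split the error into a without-replacement client-sampling term, handled by the finite-population variance identity, and a report term with zero conditional mean, bounded by Jensen's inequality (or by (CCO-report)) and then averaged over $S_t$. Both then sum over objectives using $\|\cdot\|_\infty^2\le\|\cdot\|_2^2$; your measurability bookkeeping is, if anything, slightly more explicit than the paper's.
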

}
{
\subsection{One-Round-Stale Weight Error}\label{app:theory-stale}
\begin{lemma}[One-round-stale weight discrepancy]
\label{lem:stale-weight-discrepancy}
For every round \(1\leq t<T\) under the fixed-horizon condition in
Assumption~\ref{as:slack}, the discrepancy between the stale
FedHV weight \(\wv^t\) used by the algorithm and the ideal current
hypervolume weight \(\wv^{\star,t}\) satisfies
\begin{align}
\left\|
\wv^t-\wv^{\star,t}
\right\|_1
\leq\;&
C(\rho_{\mathrm{clip}},\bm r)
\left\|
\widehat{\bm F}^{\,t-1}
-
\bm F(\theta^{t-1})
\right\|_\infty
\nonumber\\
&+
C(\rho_{\mathrm{clip}},\bm r)
\left\|
\bm F(\theta^{t-1})
-
\bm F(\theta^t)
\right\|_\infty .
\label{eq:stale_weight_decomposition}
\end{align}
Consequently,
\begin{align}
\mathbb{E}
\left[
\left\|
\wv^t-\wv^{\star,t}
\right\|_1^2
\right]
\leq\;&
2C(\rho_{\mathrm{clip}},\bm r)^2\Xi_{\mathrm{loss},T}
\nonumber\\
&+
2C(\rho_{\mathrm{clip}},\bm r)^2
\mathbb{E}
\left[
\left\|
\bm F(\theta^t)-\bm F(\theta^{t-1})
\right\|_\infty^2
\right].
\label{eq:stale_weight_squared}
\end{align}
For the initialization \(t=0\), both
\(\wv^0\) and \(\wv^{\star,0}\) belong to the simplex
\(\Delta_m\), and therefore
\[
\left\|
\wv^0-\wv^{\star,0}
\right\|_1
\leq2.
\]
\end{lemma}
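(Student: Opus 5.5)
The plan is to insert the true previous-round objective vector \(\bm F(\theta^{t-1})\) between the two arguments of the weight map, and then use the global Lipschitz bound \eqref{eq:weight-map-global-sharp} of Lemma~\ref{lem:weight-stab}.

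For \(1\le t<T\), the first step is to rewrite both weight vectors through the common map \(W=W_{\rho_{\mathrm{clip}}}(\cdot;\bm r)\). By definition, \(\wv^t=W(\widehat{\bm F}^{\,t-1})\). Under Assumption~\ref{as:slack}, clipping is inactive at the true objective vector, so \(\wv^{\star,t}=W(\bm F(\theta^t))\). This holds because \(\theta^t\in\mathcal D_\rho\) and \(\rho_{\mathrm{clip}}\le\rho\), as noted before Lemma~\ref{lem:weight-stab}.

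Next, the triangle inequality in \(\|\cdot\|_1\) gives
\[
\|\wv^t-\wv^{\star,t}\|_1\le \bigl\|W(\widehat{\bm F}^{\,t-1})-W(\bm F(\theta^{t-1}))\bigr\|_1+\bigl\|W(\bm F(\theta^{t-1}))-W(\bm F(\theta^t))\bigr\|_1 .
\]
Applying \eqref{eq:weight-map-global-sharp} to each term, with \(C(\rho_{\mathrm{clip}},\bm r)=2/\rho_{\mathrm{clip}}\), yields \eqref{eq:stale_weight_decomposition}. That bound holds for all \(z,z'\in\mathbb R^m\), so no safe-domain condition is needed for the report vector \(\widehat{\bm F}^{\,t-1}\). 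Without this, there would be an obstacle, since \(\widehat{\bm F}^{\,t-1}\) may leave the safe region.

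For the squared bound, square \eqref{eq:stale_weight_decomposition} and use \((a+b)^2\le 2a^2+2b^2\), then take expectations. The first term is \(2C^2\,\mathbb E\|\widehat{\bm F}^{\,t-1}-\bm F(\theta^{t-1})\|_\infty^2\). Since \(0\le t-1<T\), this is at most \(2C^2\Xi_{\mathrm{loss},T}\) by the definition \eqref{eq:main-xi-loss}, which takes a maximum over rounds. The second term is exactly the displayed inter-round term, after rewriting the sup-norm symmetrically. Measurability is not an issue, because the bound is pathwise before taking expectations.

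At \(t=0\), we have \(\wv^0=\frac1m\mathbf 1_m\in\Delta_m\), and \(\wv^{\star,0}\in\Delta_m\) because the normalized inverse-slacks are positive and sum to one. For any \(u,v\in\Delta_m\), \(\|u-v\|_1\le\|u\|_1+\|v\|_1=2\).

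The only delicate point is justifying \(\wv^{\star,t}=W(\bm F(\theta^t))\) with inactive clipping. This follows directly from \(r_i-f_i(\theta^t)\ge\rho\ge\rho_{\mathrm{clip}}\).
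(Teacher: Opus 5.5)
Your proposal is correct and matches the paper's proof step for step. Like the paper, you insert $W(\bm F(\theta^{t-1}))$, apply the triangle inequality and the global $2/\rho_{\mathrm{clip}}$ Lipschitz bound of Lemma~\ref{lem:weight-stab}, use $(a+b)^2\le 2a^2+2b^2$ with the max-over-rounds definition of $\Xi_{\mathrm{loss},T}$, and bound the simplex diameter by $2$ at $t=0$ (your explicit justification that clipping is inactive at $\bm F(\theta^t)$ is the same identification the paper makes just before the lemma).
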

}
{
\subsection{Explicit Convergence Theorem and Rate}\label{app:theory-convergence}
\begin{theorem}[Explicit form of Theorem~\ref{thm:fedhv}]
\label{thm:fedhv-explicit}
Under Assumptions~\ref{as:uniform-aggregation},
\ref{as:smooth},
\ref{as:hetero}, \ref{as:slack},
\ref{as:steps}, and~\ref{as:objective-reports}, with client
participation rate \(p\in(0,1]\) and the deterministic target horizon
\(T\geq1\) fixed above. The fixed-horizon domain condition in
Assumption~\ref{as:slack} ensures that every point used in the proof
lies in \(\mathcal D_\rho\). The main theorem uses the Jensen-valid choice
\(\kappa_M=1\), so it does not require a cross-client orthogonality
assumption. For the fixed-horizon process,
\begin{align}
\frac1T\sum_{t=0}^{T-1}
\mathbb E\left[
\|\nabla\Phi(\theta^t)\|^2
\right]
\leq\;&
\frac{A_1\bigl(\Phi(\theta^0)-\Phi_{\rm lb}\bigr)}
{\eta_{\mathrm{eff}}T}
+A_2\eta_{\mathrm{eff}}\Sigma^2
\nonumber\\
&+
A_3\eta_{\mathrm{eff}}\mathcal H_N(p;\zeta)
+
A_4\eta_\ell^2(K-1)(2K-1)
\nonumber\\
&+
A_5\Xi_{\mathrm{loss},T}
+
A_6\eta_{\mathrm{eff}}^2\mathcal V
+
\frac{A_7}{T}.
\label{eq:fedhv_convergence_with_drift}
\end{align}
Here
\[
\eta_{\mathrm{eff}}:=\eta_g\eta_\ell K,
\qquad
\Sigma^2:=\sum_{i=1}^m\sigma_i^2,
\qquad
\zeta^2:=\sum_{i=1}^m\zeta_i^2,
\qquad
Q:=\zeta^2+\Sigma^2,
\]
\[
G_{\max}:=\max_{i\in[m]}G_i,
\qquad
C_\delta:=4,
\qquad
\bar G^2:=2\bigl(G_{\max}^2+C_\delta Q\bigr).
\]
The quantity
\begin{equation}
\label{eq:stale_variance_factor}
\mathcal V
:=
6G_{\max}^2
+
3\frac{\Sigma^2}{K}
+
6\mathcal H_N(p;\zeta)
+
\frac{L_w^2\bar G^2}{2}
\eta_\ell^2(K-1)(2K-1)
\end{equation}
controls the inter-round model displacement that generates the
one-round weight-staleness error. The lower-bound term is explicit:
\[
\Phi_{\rm lb}:=-\sum_{i=1}^m\log r_i.
\]
The fourth term
captures the error caused by multiple local steps and vanishes when
\(K=1\). The fifth term captures error in the client-reported
objective estimates. The sixth term is the additional error caused by
using the one-round-stale weight vector instead of the ideal current
hypervolume weights. The final \(A_7/T\) term is the one-time
initialization contribution.
The constants in \eqref{eq:fedhv_convergence_with_drift} can be chosen
explicitly. Let
\[
\bar\eta_{\mathrm{eff}}
:=\frac{c_2c_{\min}^{\,2}}{c_{\max}L_\Phi},
\qquad
E_{\mathrm{wt}}
:=c_{\max}+6L_\Phi\bar\eta_{\mathrm{eff}},
\]
where \(c_1=1/4\) and \(c_2=1/48\) are fixed in
Assumption~\ref{as:steps}. With the sharp global constant
\(C(\rho_{\mathrm{clip}},\bm r)=2/\rho_{\mathrm{clip}}\) from
Lemma~\ref{lem:weight-stab}, one admissible choice is
\begin{align*}
A_1&:=4c_{\max},
&A_2&:=6c_{\max}L_\Phi,
&A_3&:=12c_{\max}L_\Phi,\\
A_4&:=c_{\max}L_w^2\bar G^2
       \bigl(c_{\max}+L_\Phi\bar\eta_{\mathrm{eff}}\bigr),
&A_5&:=8c_{\max}E_{\mathrm{wt}}G_{\max}^2C(\rho_{\mathrm{clip}},\bm r)^2,\\
A_6&:=8c_{\max}E_{\mathrm{wt}}G_{\max}^4C(\rho_{\mathrm{clip}},\bm r)^2,
&A_7&:=16c_{\max}E_{\mathrm{wt}}G_{\max}^2.
\end{align*}
Thus \(A_1,A_2,A_3\) depend only on the displayed smoothness and
slack constants; \(A_4\) additionally depends on \(L_w,\bar G\); and
\(A_5,A_6,A_7\) additionally depend on \(G_{\max}\) and the explicit
weight-map Lipschitz constant \(C(\rho_{\mathrm{clip}},\bm r)\). None of the \(A_j\)
depends on \(T,N,p,\eta_\ell,\eta_g\), or \(K\); dependence on
\(\Sigma,\zeta\) occurs only through the displayed quantities
\(\mathcal H_N,\mathcal V\), and \(\bar G\).
The bound is a fixed-horizon ergodic
stationarity-to-neighborhood guarantee. For a family of target
horizons, the report contribution is governed by the corresponding
sequence \(\Xi_{\mathrm{loss},T}\); no asymptotic conclusion is claimed
without controlling that sequence. Partial participation and finite
report minibatches are two mechanisms that can keep
\(\Xi_{\mathrm{loss},T}\) positive.
The proof is provided in Appendix~\ref{app:proof-convergence}; it
telescopes the ordinary fixed-horizon increments. Consequently, the
displayed average also bounds
\[
\min_{0\leq t<T}\mathbb E\|\nabla\Phi(\theta^t)\|^2.
\]
If the fixed-horizon condition in Assumption~\ref{as:slack} cannot be
verified for a chosen $T$,
this theorem is not asserted for that horizon.
\end{theorem}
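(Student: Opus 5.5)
The plan is a one-round descent inequality for $\Phi$ along the global iterates, summed over $t=0,\dots,T-1$.

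\textbf{Descent step.} Fix $t$. Assumption~\ref{as:slack} places the segment $[\theta^t,\theta^{t+1}]$ in $\mathcal D_\rho$, so Lemma~\ref{lem:smooth-phi} gives
\[
\Phi(\theta^{t+1})\le\Phi(\theta^t)+\eta_g\langle\nabla\Phi(\theta^t),\Delta^t\rangle+\tfrac{L_\Phi}{2}\eta_g^2\|\Delta^t\|^2,
\qquad
\Delta^t=\tfrac1M\sum_{k\in S_t}\Delta\theta_k^t .
\]
I would introduce the virtual trajectory $v_{t,s}$, defined as full-gradient steps on $L_{\wv^t}$ started at $\theta^t$. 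Then I decompose
\[
\Delta^t=-\eta_\ell\sum_s\nabla L_{\wv^t}(\theta^t)+(\text{drift})+(\text{noise}).
\]
The noise term has zero conditional mean by Lemma~\ref{lem:martingale-noise}, and its averaged second moment is at most $\Sigma^2$. For the drift, I use Lemma~\ref{lem:grad2-derived} with $\eta_\ell L_w s\le1/4$. This bounds the local-versus-virtual deviations and produces the $\eta_\ell^2(K-1)(2K-1)L_w^2\bar G^2$ term.

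Partial-participation sampling, through Assumption~\ref{as:uniform-aggregation} and finite-population variance, contributes $\chi_N(M)\zeta^2=\mathcal H_N$. Together these give the second-moment bound of Lemma~\ref{lem:update-second-moment}, with coefficient $C_0=6$.

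\textbf{Cross term.} Write $\nabla\Phi(\theta^t)=c^t\sum_i w_i^{\star,t}\nabla f_i(\theta^t)$. Then
\[
\langle\nabla\Phi,\nabla L_{\wv^t}\rangle=\frac{\|\nabla\Phi\|^2}{c^t}+\Bigl\langle\nabla\Phi,\sum_i(w_i^t-w_i^{\star,t})\nabla f_i\Bigr\rangle .
\]
The first piece is at least $\|\nabla\Phi\|^2/c_{\max}$. For the second, Lemma~\ref{lem:self-bounded-gradients} gives $\|\sum_i(w_i^t-w_i^{\star,t})\nabla f_i\|\le G_{\max}\|\wv^t-\wv^{\star,t}\|_1$. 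Young's inequality then absorbs half of the good term, leaving $c_{\max}G_{\max}^2\|\wv^t-\wv^{\star,t}\|_1^2$. Step condition $c_2=1/(8C_0)$ makes the $L_\Phi\eta_g^2\mathbb E\|\Delta^t\|^2$ term proportional to $\|\nabla L\|^2$ absorbable into the descent. Here I convert $\|\nabla L_{\wv^t}\|^2\le 2\|\nabla\Phi\|^2/c_{\min}^2+2G_{\max}^2\|\wv-\wv^\star\|_1^2$, which is where $c_{\min}^2$ and $E_{\mathrm{wt}}$ enter.

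\textbf{Staleness and telescoping.} For $t\ge1$, Lemma~\ref{lem:stale-weight-discrepancy} bounds
\[
\mathbb E\|\wv^t-\wv^{\star,t}\|_1^2\le 2C^2\Xi_{\mathrm{loss},T}+2C^2\,\mathbb E\|\bm F(\theta^t)-\bm F(\theta^{t-1})\|_\infty^2 .
\]
For the last term I use $\|\bm F(\theta^t)-\bm F(\theta^{t-1})\|_\infty\le G_{\max}\|\theta^t-\theta^{t-1}\|$, by the mean value theorem on the safe segment. Then $\mathbb E\|\theta^t-\theta^{t-1}\|^2=\eta_g^2\mathbb E\|\Delta^{t-1}\|^2\le\eta_{\mathrm{eff}}^2\mathcal V$, where $\mathcal V$ is exactly the second-moment envelope with $\|\nabla L\|\le G_{\max}$. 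This yields the $A_6\eta_{\mathrm{eff}}^2\mathcal V$ term. For $t=0$ I use $\|\wv^0-\wv^{\star,0}\|_1\le2$, giving the $A_7/T$ term.

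Dividing by $\eta_{\mathrm{eff}}/(4c_{\max})$, summing, and telescoping with $\Phi\ge\Phi_{\mathrm{lb}}$ (since $s_i\le r_i$) yields \eqref{eq:fedhv_convergence_with_drift}.

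\textbf{Main obstacle.} I expect the hardest part to be the second-moment bound and the absorption bookkeeping. The drift must be controlled at virtual points where only trajectory-wise heterogeneity is assumed, and the correlated task noise must be handled without cross-client independence. The constants must also close with $c_2=1/48$.
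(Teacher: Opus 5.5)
Your proposal follows the paper's proof essentially step for step: the virtual-trajectory bias--variance--drift decomposition with the second-moment bound ($C_0=6$), the colinearity $\nabla\Phi(\theta^t)=c^t\nabla L_{\wv^{\star,t}}(\theta^t)$ with a Young split on the stale-weight error $e_t$, the staleness bound through $G_{\max}$-Lipschitzness of $\bm F$ and $\eta_{\mathrm{eff}}^2\mathcal V$, the $\|\wv^0-\wv^{\star,0}\|_1\le 2$ initialization term, and telescoping against $\Phi_{\mathrm{lb}}$. The one piece you leave implicit is that the drift's nonzero conditional mean $b_t$ also enters the cross term as $\eta_g\langle\nabla\Phi(\theta^t),b_t\rangle$ and needs its own Young split. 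The paper charges $1/(4c_{\max})$ to $e_t$ and $1/(4c_{\max})$ to $b_t$, which leaves room for the $1/(8c_{\max})$ absorbed by the smoothness term; this $b_t$ split is what produces the $c_{\max}^2L_w^2\bar G^2$ part of $A_4$.
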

}

{
\begin{remark}[Optional cross-client variance specializations]
\label{rem:optional-cross-client}
The main theorem uses only Jensen's inequality. A sharper \(1/M\)
factor is available only under an additional, formally stated
cross-client condition. Namely, if for every local step \(s\) and every
distinct \(k,\ell\in S_t\),
\[
\mathbb E\left[
\left.\left\langle\xi_{k,s}^t,\xi_{\ell,s}^t\right\rangle
\right|\mathcal G_{t,s}\right]=0,
\tag{CCO-grad}
\]
then the client-averaged stochastic-gradient term may be sharpened by
replacing \(\Sigma^2\) with \(\Sigma^2/M\) in the stochastic terms of
Theorem~\ref{thm:fedhv-explicit} and in \(\mathcal V\). Likewise, if the
condition (CCO-report) stated in
Lemma~\ref{lem:objective-report-error} holds, the report-noise
contribution in the bound for \(\Xi_{\mathrm{loss},T}\) improves from
\(\tau_{i,T}^2\) to \(\tau_{i,T}^2/M\). These are optional refinements and are
not part of the main guarantee.
\end{remark}
}

{
\begin{corollary}[Rate and stationarity neighborhood]
\label{cor:fedhv-rate-explicit}
Under the conditions of Theorem~\ref{thm:fedhv-explicit}, fix a deterministic
target horizon \(T\geq1\), set
\(\eta_g=\bar\eta_g>0\), and choose
\[
\eta_\ell=\frac{\eta_0}{\bar\eta_g K\sqrt{T}},
\]
where
\[
0<\eta_0
\leq
\min\left\{
\frac{\bar\eta_g c_1}{L_w},
\bar\eta_{\mathrm{eff}}
\right\}.
\]
Then
\(\eta_{\mathrm{eff}}=\eta_0/\sqrt{T}\), and
\begin{align}
\frac1T\sum_{t=0}^{T-1}
\mathbb E\left[
\|\nabla\Phi(\theta^t)\|^2
\right]
\leq\;&
\frac{A_1(\Phi(\theta^0)-\Phi_{\rm lb})}
{\eta_0\sqrt{T}}
+\frac{A_2\eta_0\Sigma^2}{\sqrt{T}}
\nonumber\\
&+\frac{A_3\eta_0\mathcal H_N(p;\zeta)}{\sqrt{T}}
+\frac{A_4\eta_0^2(K-1)(2K-1)}
{\bar\eta_g^{\,2}K^2T}
\nonumber\\
&+A_5\Xi_{\mathrm{loss},T}
+\frac{A_6\eta_0^2}{T}\mathcal V_T
+\frac{A_7}{T},
\label{eq:fedhv_rate_with_drift}
\end{align}
where
\[
\mathcal V_T
:=
6G_{\max}^2
+3\frac{\Sigma^2}{K}
+6\mathcal H_N(p;\zeta)
+\frac{L_w^2\bar G^2}{2}
\frac{\eta_0^2}{\bar\eta_g^{\,2}K^2T}
(K-1)(2K-1).
 \]
Consequently, the displayed ergodic bound implies
\[
\min_{0\leq t<T}\mathbb E\|\nabla\Phi(\theta^t)\|^2
\leq
\mathcal O(T^{-1/2})+A_5\Xi_{\mathrm{loss},T}.
\]
This statement should be read as a family of horizon-\(T\) guarantees,
because the chosen \(\eta_\ell\) depends on the target horizon. If
\(\Xi_{\mathrm{loss},T}=0\) for a given horizon, the stochastic
optimization terms scale as \(\mathcal O(T^{-1/2})\); more generally,
the report term determines the stationarity neighborhood. The
experiments use fixed configured stepsizes rather than this theoretical
horizon-dependent schedule.
\end{corollary}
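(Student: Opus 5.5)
The proof is a direct substitution into the explicit bound of Theorem~\ref{thm:fedhv-explicit}. The only work is to check admissibility of the step sizes and to track how each term scales with $T$. First I verify that the chosen schedule lies in the stability range of Assumption~\ref{as:steps}. With $\eta_g=\bar\eta_g$ and $\eta_\ell=\eta_0/(\bar\eta_g K\sqrt T)$ we get:
\begin{itemize}
\item $\eta_\ell K=\eta_0/(\bar\eta_g\sqrt T)\le \eta_0/\bar\eta_g\le c_1/L_w$, using $T\ge1$ and $\eta_0\le\bar\eta_g c_1/L_w$;
\item $\eta_{\mathrm{eff}}=\bar\eta_g\eta_\ell K=\eta_0/\sqrt T\le\eta_0\le\bar\eta_{\mathrm{eff}}$.
\end{itemize}
Both conditions in \eqref{eq:stepsize_conditions} therefore hold for every $T\ge1$. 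Theorem~\ref{thm:fedhv-explicit} then applies with constants $A_1,\dots,A_7$ that do not depend on $T$ or the step sizes.

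Next I substitute term by term into \eqref{eq:fedhv_convergence_with_drift}.
\begin{itemize}
\item $A_1\Delta_\Phi/(\eta_{\mathrm{eff}}T)=A_1\Delta_\Phi/(\eta_0\sqrt T)$.
\item The variance and heterogeneity terms become $A_2\eta_0\Sigma^2/\sqrt T$ and $A_3\eta_0\mathcal H_N/\sqrt T$.
\item The drift term uses $\eta_\ell^2=\eta_0^2/(\bar\eta_g^2K^2T)$, which gives the displayed $O(1/T)$ term.
\item The staleness term is $A_6\eta_{\mathrm{eff}}^2\mathcal V=A_6\eta_0^2\mathcal V/T$. Here $\mathcal V$ from \eqref{eq:stale_variance_factor} contains $\eta_\ell^2(K-1)(2K-1)$, and substituting the schedule turns it into exactly $\mathcal V_T$.
\item The report term $A_5\Xi_{\mathrm{loss},T}$ and the $A_7/T$ term are carried over unchanged.
\end{itemize}
This yields \eqref{eq:fedhv_rate_with_drift}.

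For the $\mathcal O(T^{-1/2})$ conclusion I need all non-report terms to be $O(T^{-1/2})$ uniformly in $T$. The first three terms are exactly of that order. The remaining terms are $O(1/T)\le O(T^{-1/2})$ because $\mathcal V_T$ is bounded uniformly in $T\ge1$. To see this, note that $(K-1)(2K-1)/K^2\le2$, so the last summand of $\mathcal V_T$ is at most $L_w^2\bar G^2\eta_0^2/\bar\eta_g^2$. Likewise $(K-1)(2K-1)/K^2\le 2$ bounds the $A_4$ term by $2A_4\eta_0^2/(\bar\eta_g^2T)$.

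Finally, the best-iterate statement follows because the minimum over $t$ of $\mathbb E\|\nabla\Phi(\theta^t)\|^2$ is at most the average.

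The main obstacle is the admissibility and uniformity check, specifically making sure that the $K$-dependence inside $\mathcal V$ and the drift term does not spoil the $T$-scaling. The bound $(K-1)(2K-1)\le2K^2$ handles this. I would also remark that the $O$-constants depend on $\eta_0$, $\bar\eta_g$, $\Delta_\Phi$, $\Sigma^2$ and $\mathcal H_N$, but not on $T$.
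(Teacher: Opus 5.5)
Your proposal is correct and is essentially the argument the paper intends; the paper gives no separate proof. The argument is: check $\eta_\ell K=\eta_0/(\bar\eta_g\sqrt T)\le c_1/L_w$ and $\eta_{\mathrm{eff}}=\eta_0/\sqrt T\le\bar\eta_{\mathrm{eff}}$, substitute the schedule term by term into Theorem~\ref{thm:fedhv-explicit} so that $\mathcal V$ becomes $\mathcal V_T$, and pass from the average to the minimum. Your bound $(K-1)(2K-1)\le 2K^2$ makes explicit the uniformity in $K$ that the paper leaves implicit, and, like the paper, your $\mathcal O(T^{-1/2})$ reading tacitly treats the horizon-specific constants $\rho$ and $\zeta_i$ as the same across the family of horizons.
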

}

\subsection{Pareto Stationarity and Special Cases}\label{app:theory-pareto}
\begin{corollary}[Quantitative approximate Pareto stationarity]
\label{cor:approx-pareto}
Define the Pareto-stationarity residual
\begin{equation}
\label{eq:pareto_stationarity_residual}
\mathcal{P}(\theta)
:=
\min_{\lambda\in\Delta_m}
\left\|
\sum_{i=1}^{m}
\lambda_i\nabla f_i(\theta)
\right\|.
\end{equation}
Then, for every \(\theta\),
\begin{equation}
\label{eq:pareto_residual_hv_bound}
\mathcal{P}(\theta)
\le
\frac{\|\nabla\Phi(\theta)\|}{c(\theta)}
\le
\frac{\|\nabla\Phi(\theta)\|}{c_{\min}},
\end{equation}
where
\[
c(\theta)
=
\sum_{i=1}^{m}
\frac{1}{r_i-f_i(\theta)}.
\]
Consequently, on the fixed horizon \(t=0,\ldots,T-1\),
\begin{equation}
\label{eq:expected_pareto_residual}
\frac1T\sum_{t=0}^{T-1}
\mathbb{E}\left[
\mathcal{P}(\theta^t)^2
\right]
\leq
\frac{1}{c_{\min}^2T}
\sum_{t=0}^{T-1}
\mathbb{E}\left[
\|\nabla\Phi(\theta^t)\|^2
\right].
\end{equation}
Therefore,
\[
\min_{0\leq t<T}\mathbb{E}\mathcal{P}(\theta^t)^2
\leq
\frac{1}{c_{\min}^2}
\min_{0\leq t<T}\mathbb{E}\|\nabla\Phi(\theta^t)\|^2.
\]
Thus, Theorem~\ref{thm:fedhv-explicit} also provides a quantitative bound on
the Pareto-stationarity residual. The proof is given in
Appendix~\ref{app:proof-approx-pareto}.
\end{corollary}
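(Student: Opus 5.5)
The plan is to use the gradient identity \eqref{eq:weighted-gradient-identity} directly. Fix $\theta$ in the safe region, so every slack $r_i-f_i(\theta)$ is positive. Then $\Phi$ is differentiable at $\theta$, and the chain rule gives
\[
\nabla\Phi(\theta)=\sum_{i=1}^m\frac{\nabla f_i(\theta)}{r_i-f_i(\theta)}=c(\theta)\sum_{i=1}^m w_i^\star(\theta)\nabla f_i(\theta).
\]
Since each slack is positive, $c(\theta)>0$. The inverse-slack weights $w_i^\star(\theta)$ are nonnegative and sum to one, so $\wv^\star(\theta)\in\Delta_m$.

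\textbf{First inequality.} Because $\wv^\star(\theta)$ is a feasible point in the minimization defining $\mathcal P(\theta)$, we get
\[
\mathcal P(\theta)\le\Big\|\sum_i w_i^\star(\theta)\nabla f_i(\theta)\Big\|=\frac{\|\nabla\Phi(\theta)\|}{c(\theta)}.
\]

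\textbf{Second inequality.} We need $c(\theta)\ge c_{\min}$. Nonnegativity of the losses (Assumption~\ref{as:slack}) gives $0<r_i-f_i(\theta)\le r_i$, hence $(r_i-f_i(\theta))^{-1}\ge r_i^{-1}$. Summing over $i$ yields $c(\theta)\ge c_{\min}>0$, and dividing gives $\|\nabla\Phi(\theta)\|/c(\theta)\le\|\nabla\Phi(\theta)\|/c_{\min}$.

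\textbf{Averaged forms.} Squaring the pointwise bound gives $\mathcal P(\theta^t)^2\le\|\nabla\Phi(\theta^t)\|^2/c_{\min}^2$ along the fixed horizon. Assumption~\ref{as:slack} places each $\theta^t$ in $\mathcal D_\rho$, where the bound applies. Taking expectations and averaging over $t$ gives \eqref{eq:expected_pareto_residual}. The min-over-$t$ version follows because the inequality holds termwise before minimizing.

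\textbf{Main obstacle.} There is no substantive one. The only care needed is that the argument is valid only where all slacks are positive. This is also why the statement is restricted to the safe region, even though the appendix phrasing says ``for every $\theta$.''
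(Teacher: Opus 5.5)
Your proposal is correct and follows the paper's proof essentially step for step. The inverse-slack weights are feasible in $\Delta_m$, which gives $\mathcal P(\theta)\le\|\nabla\Phi(\theta)\|/c(\theta)$; nonnegativity gives $c(\theta)\ge c_{\min}$; and squaring, taking expectations, and averaging give the horizon forms. Two of your choices are slightly more careful than the paper. Restricting to the positive-slack region is justified, since the argument needs every slack to be positive. Deriving the min-over-$t$ inequality termwise is also the right justification, because averaging alone would only bound $\min_t\mathbb E\,\mathcal P(\theta^t)^2$ by the average of $\mathbb E\|\nabla\Phi(\theta^t)\|^2$, not by its minimum.
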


\begin{corollary}[Single-objective specialization]
\label{cor:single-objective}
When \(m=1\), the FedHV weight satisfies
\[
w_1^t=1
\]
at every communication round, independently of the
objective-value estimate. Consequently, FedHV coincides
with single-objective local SGD applied to \(f_1\).

Moreover,
\[
\nabla\Phi(\theta)
=
\frac{1}{r_1-f_1(\theta)}
\nabla f_1(\theta),
\]
and, under Assumption~\ref{as:slack} and nonnegative
losses,
\[
\|\nabla f_1(\theta)\|
\le
r_1\|\nabla\Phi(\theta)\|.
\]
A direct specialization of the proof of
Theorem~\ref{thm:fedhv-explicit} to \(m=1\) eliminates the
weight-estimation term, since \(w_1^t\equiv1\), and yields
a standard local-SGD stationarity bound.
\end{corollary}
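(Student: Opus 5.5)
The plan is to treat the four claims in order, since each uses only the definitions and, for the last, the structure of the proof of Theorem~\ref{thm:fedhv-explicit}.

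\emph{Weights.} When $m=1$, the stabilized map $W_{\rho_{\mathrm{clip}}}(z;\bm r)$ has one component,
\[
\frac{(\max\{r_1-z_1,\rho_{\mathrm{clip}}\})^{-1}}{(\max\{r_1-z_1,\rho_{\mathrm{clip}}\})^{-1}}.
\]
This ratio equals $1$ for every $z_1$, because the clipped slack is at least $\rho_{\mathrm{clip}}>0$ and so the ratio is well defined. The initialization also gives $\wv^0=\frac1m\mathbf 1_m=1$. Hence $w_1^t=1$ for all $t$, whatever $\widehat{\bm F}^{\,t-1}$ is. By the same argument, $w_1^{\star,t}=1$.

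\emph{Protocol.} Substituting $w_1^t=1$ into \eqref{eq:local-weighted-loss} gives the local loss $\ell_1(\theta;B)$. The objective reports then influence nothing that enters $\theta^{t+1}$ in \eqref{eq:fedhv-aggregation}. The model iterates are therefore exactly those of partial-participation local SGD with server step $\eta_g$ applied to $f_1$.

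\emph{Gradient identity and comparison.} On the safe region, the chain rule applied to $-\log(r_1-f_1(\theta))$ gives $\nabla\Phi=(r_1-f_1)^{-1}\nabla f_1$. This is also \eqref{eq:weighted-gradient-identity} with $c=(r_1-f_1)^{-1}$ and $w_1^\star=1$. Assumption~\ref{as:slack} makes the slack positive, and nonnegativity of $f_1$ gives $0<r_1-f_1(\theta)\le r_1$. Therefore
\[
\|\nabla f_1(\theta)\|=(r_1-f_1(\theta))\,\|\nabla\Phi(\theta)\|\le r_1\|\nabla\Phi(\theta)\|.
\]

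\emph{Specialization of the convergence bound.} I would retrace the one-round descent argument behind Theorem~\ref{thm:fedhv-explicit}. In that argument, the expected update is compared with $-\eta_{\mathrm{eff}}\nabla L_{\wv^{\star,t}}(\theta^t)=-\eta_{\mathrm{eff}}\nabla\Phi(\theta^t)/c^t$. The mismatch term is controlled by $G_{\max}\|\wv^t-\wv^{\star,t}\|_1$. When $m=1$, $\wv^t-\wv^{\star,t}\equiv 0$ for every $t$, including $t=0$, so Lemma~\ref{lem:stale-weight-discrepancy} is not needed.

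The main obstacle is to confirm that the $A_5\Xi_{\mathrm{loss},T}$, $A_6\eta_{\mathrm{eff}}^2\mathcal V$ and $A_7/T$ contributions arise only from this mismatch term. They must not be produced elsewhere, for example through the second-moment bound of Lemma~\ref{lem:update-second-moment}. From their constants, all three carry the factor $C(\rho_{\mathrm{clip}},\bm r)^2$ or come from the initialization, which supports this.

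If that bookkeeping proves delicate, the fallback is a direct argument. Since $f_1$ is $L_1$-smooth, I would run the standard local-SGD descent lemma on $f_1$ itself, using the same bias--variance--drift bounds (Lemma~\ref{lem:grad2-derived} and Lemma~\ref{lem:martingale-noise}). This gives a bound on $\frac1T\sum_t\mathbb E\|\nabla f_1(\theta^t)\|^2$ that contains only the $\Delta$, $\Sigma^2$, $\mathcal H_N$ and drift terms. Via the comparison above, this is equivalent up to factors of $r_1$ and $\rho$ to the $\Phi$-stationarity statement.
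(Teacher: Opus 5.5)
Your proposal is correct and follows the reasoning the paper builds into the corollary statement itself; the paper gives no separate proof. The normalization forces \(w_1^t=w_1^{\star,t}=1\), the chain rule gives the gradient identity, \(0<r_1-f_1\le r_1\) gives the comparison, and \(e_t\equiv0\) for every \(t\), including \(t=0\), removes the weight-error terms. Your bookkeeping concern resolves cleanly. The only place \(e_t\) reaches the second-moment bound is the split \(\|\nabla L_{\wv^t}(\theta^t)\|^2\le 2c_{\min}^{-2}\|\nabla\Phi(\theta^t)\|^2+2\|e_t\|^2\), and that term is absorbed into \(C_e\). Hence \(A_5\), \(A_6\), and \(A_7\) all arise solely from \(C_e\,\mathbb E\|e_t\|^2\) and vanish when \(m=1\).
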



\subsection{Reference-Dependent Nash-Welfare Surrogate and Local Proportional-Fairness Interpretation}
\label{sec:fairness-analysis}

The log-hypervolume objective also provides a task-balancing
interpretation. Recall that
\[
s_i(\theta)=r_i-f_i(\theta)>0
\]
is the reference-relative slack of objective \(i\). Minimizing
the negative log-hypervolume is equivalent to maximizing
\begin{equation}
\label{eq:nash_welfare_objective}
\Psi(\theta)
:=
-\Phi(\theta)
=
\sum_{i=1}^{m}\log s_i(\theta)
=
\log\!\left(
\prod_{i=1}^{m}s_i(\theta)
\right).
\end{equation}
Equation~\eqref{eq:nash_welfare_objective} is algebraically a log-product
criterion over the reference-relative task slacks
\citep{nash1950bargaining,navon2022bargaining}. We therefore regard
\(\Psi\) as a \emph{reference-dependent Nash-welfare surrogate in slack
space}, rather than as a claim that FedHV solves a classical global
Nash bargaining problem. In particular, the attainable slack set
induced by a neural network is generally nonconvex, so global
Nash-bargaining or global proportional-fairness conclusions do not
follow from our stationarity analysis.

The marginal utility of objective \(i\) under this surrogate is
\(1/s_i(\theta)\), so an objective with a smaller reference-relative
slack receives greater local importance. This motivates a \emph{local
first-order} connection to proportional fairness
\citep{kelly1997charging}. The interpretation is reference dependent and
concerns balance among task objectives; it should not be interpreted as
a guarantee of client-level, demographic, or societal fairness. For
context, the classical global proportional-fairness definition and the
local first-order implication used here are given in
Appendix~\ref{app:fairness-proofs}.

To connect this interpretation to the convergence analysis,
let \(\Theta\) denote the feasible parameter set and let
\(T_\Theta(\theta)\) be its tangent cone at \(\theta\). In our
unconstrained setting, \(\Theta=\mathbb{R}^d\).
We use
\[
\Theta_+
:=
\left\{\theta\in\Theta:
r_i-f_i(\theta)>0,\ \forall i\in[m]\right\}
\]
for the domain on which the reference-dependent Nash-welfare surrogate is defined.

\begin{definition}[Local proportional-fairness violation]
\label{def:pfgap}
For \(\theta\in\Theta_+\), define
\begin{equation}
\label{eq:pfgap_definition}
\mathrm{PFgap}(\theta)
:=
\sup_{\substack{
d\in T_\Theta(\theta)\\
\|d\|\le 1
}}
\left\langle
\nabla\Psi(\theta),d
\right\rangle.
\end{equation}
This quantity measures the largest first-order increase in
aggregate relative slack over feasible unit-norm directions.
\end{definition}

\begin{lemma}[Stationarity controls local proportional-fairness violation]
\label{lem:pfgap_bound}
For every \(\theta\in\Theta_+\),
\begin{equation}
\label{eq:pfgap_stationarity_bound}
\mathrm{PFgap}(\theta)
\le
\|\nabla\Psi(\theta)\|
=
\|\nabla\Phi(\theta)\|.
\end{equation}
If \(\Theta=\mathbb{R}^d\), then equality holds.
The proof is given in
Appendix~\ref{app:proof-pfgap}.
\end{lemma}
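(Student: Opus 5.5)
We prove Lemma~\ref{lem:pfgap_bound} directly from the definition of \(\mathrm{PFgap}\) in \eqref{eq:pfgap_definition}, using Cauchy--Schwarz for the inequality and an explicit maximizing direction for the equality case.

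\textbf{Step 1: the norm identity.} Since \(\Psi=-\Phi\) on \(\Theta_+\), and both are differentiable there because every slack \(s_i(\theta)\) is strictly positive, we have \(\nabla\Psi(\theta)=-\nabla\Phi(\theta)\). Hence \(\|\nabla\Psi(\theta)\|=\|\nabla\Phi(\theta)\|\). This gives the right-hand equality in \eqref{eq:pfgap_stationarity_bound}.

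\textbf{Step 2: the upper bound.} Take any \(d\in T_\Theta(\theta)\) with \(\|d\|\le1\). By Cauchy--Schwarz,
\[
\langle\nabla\Psi(\theta),d\rangle\le\|\nabla\Psi(\theta)\|\,\|d\|\le\|\nabla\Psi(\theta)\|.
\]
Taking the supremum over all admissible \(d\) yields \(\mathrm{PFgap}(\theta)\le\|\nabla\Psi(\theta)\|\).

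The feasible set is nonempty, because a tangent cone always contains \(d=0\). So the supremum is well defined and satisfies \(\mathrm{PFgap}(\theta)\ge0\).

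\textbf{Step 3: equality when \(\Theta=\mathbb R^d\).} In this case \(T_\Theta(\theta)=\mathbb R^d\). We split into two cases.
\begin{itemize}
\item If \(\nabla\Psi(\theta)\neq0\), choose \(d=\nabla\Psi(\theta)/\|\nabla\Psi(\theta)\|\). This direction is feasible and attains \(\langle\nabla\Psi(\theta),d\rangle=\|\nabla\Psi(\theta)\|\).
\item If \(\nabla\Psi(\theta)=0\), every term in the supremum is zero, so \(\mathrm{PFgap}(\theta)=0=\|\nabla\Psi(\theta)\|\).
\end{itemize}
Combined with Step 2, this gives equality.

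There is no real obstacle in this argument. The only point requiring care is confirming that \(\Theta_+\) guarantees differentiability of \(\Psi\) and that the feasible set is nonempty, both of which are immediate.
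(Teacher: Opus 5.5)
Your proof is correct and takes essentially the same route as the paper. The paper enlarges the tangent-cone supremum to the full unit ball and invokes the dual characterization of the Euclidean norm, which is exactly your Cauchy--Schwarz step; it then attains equality when \(\Theta=\mathbb{R}^d\) with the normalized gradient direction and handles the zero-gradient case separately, as you do.
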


\begin{corollary}[FedHV controls local proportional-fairness violation]
\label{cor:fedhv_pf}
Under the fixed-horizon condition in Assumption~\ref{as:slack}, let
\[
t^\star
\in
\arg\min_{0\le t<T}
\mathbb{E}
\left[
\|\nabla\Phi(\theta^t)\|^2
\right].
\]
Then
\begin{equation}
\label{eq:fedhv_pfgap_bound}
\mathbb{E}
\left[
\mathrm{PFgap}(\theta^{t^\star})
\right]
\le
\left(
\min_{0\le t<T}
\mathbb{E}
\left[
\|\nabla\Phi(\theta^t)\|^2
\right]
\right)^{1/2}.
\end{equation}
Consequently, Theorem~\ref{thm:fedhv-explicit} directly yields a
quantitative bound on the largest feasible local first-order
improvement in aggregate relative slack; it is not a global
proportional-fairness guarantee. The proof is given
in Appendix~\ref{app:proof-fedhv-pf}.
\end{corollary}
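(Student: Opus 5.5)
The argument will combine the pointwise bound of Lemma~\ref{lem:pfgap_bound} with Jensen's inequality and the choice of \(t^\star\). Lemma~\ref{lem:pfgap_bound} applies at \(\theta^{t^\star}\) because the slack condition of Assumption~\ref{as:slack} places every global iterate \(\theta^t\), \(0\le t<T\), in \(\mathcal D_\rho\subseteq\Theta_+\). On that set we have, almost surely,
\[
0\le\mathrm{PFgap}(\theta^{t^\star})\le\|\nabla\Phi(\theta^{t^\star})\|.
\]
Nonnegativity holds because \(d=\mathbf 0\) is admissible in the supremum defining \(\mathrm{PFgap}\). This makes the random variable \(\mathrm{PFgap}(\theta^{t^\star})\) well defined and integrable.

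An important point is that \(t^\star\) is a \emph{deterministic} index. It minimizes the sequence of expectations \(\mathbb E\|\nabla\Phi(\theta^t)\|^2\), not the random norms themselves. The minimizer exists because the index set \(\{0,\dots,T-1\}\) is finite, and ties can be broken by taking the smallest index. Since \(t^\star\) is deterministic, we may take expectations of the pointwise bound at that fixed index without any measurability issue.

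Monotonicity of expectation gives \(\mathbb E[\mathrm{PFgap}(\theta^{t^\star})]\le\mathbb E\|\nabla\Phi(\theta^{t^\star})\|\). Jensen's inequality, in the form \(\mathbb E X\le(\mathbb E X^2)^{1/2}\), then gives
\[
\mathbb E[\mathrm{PFgap}(\theta^{t^\star})]\le\bigl(\mathbb E\|\nabla\Phi(\theta^{t^\star})\|^2\bigr)^{1/2}.
\]
By definition of \(t^\star\), the right-hand side equals \(\bigl(\min_{0\le t<T}\mathbb E\|\nabla\Phi(\theta^t)\|^2\bigr)^{1/2}\), which is \eqref{eq:fedhv_pfgap_bound}. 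For the closing remark, the minimum is at most the ergodic average, which Theorem~\ref{thm:fedhv-explicit} bounds.

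No step is a real obstacle. The only care needed is in the two points already addressed: that \(t^\star\) is chosen deterministically, and that \(\theta^{t^\star}\) lies in \(\Theta_+\) so that \(\Psi\) and the lemma are valid there. Both follow directly from the fixed-horizon Assumption~\ref{as:slack}.
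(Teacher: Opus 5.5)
Your proof is correct and follows essentially the same route as the paper: the pointwise bound \(\mathrm{PFgap}\le\|\nabla\Phi\|\) from Lemma~\ref{lem:pfgap_bound}, a Cauchy--Schwarz/Jensen step, and the fact that \(t^\star\) is a deterministic index. The only difference is cosmetic. The paper squares the pointwise bound and applies Cauchy--Schwarz to \(\mathrm{PFgap}\) itself, which silently uses \(\mathrm{PFgap}\ge0\), the point you make explicit. You instead take expectations first and apply Jensen to \(\|\nabla\Phi\|\).
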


Therefore, near-stationary FedHV iterates have a small local
proportional-fairness violation in task-slack space, in the first-order
sense quantified by \(\mathrm{PFgap}\): no feasible unit-norm direction can
substantially increase the reference-dependent logarithmic
Nash-welfare surrogate. This is a local stationarity-based
interpretation, not a claim of a classical global Nash bargaining
solution. The inverse-slack weights provide the corresponding
optimization mechanism, since objectives closer to the reference
boundary have larger coefficients \(1/s_i(\theta)\).

\subsection{Proofs and Auxiliary Lemmas}
\label{app:theory-proofs}

\subsubsection{Proof of Lemma~\ref{lem:hv-pareto}}
\label{app:proof-hv-pareto}
\begin{proof}
Let \(s_i(\theta)=r_i-f_i(\theta)\) and define
\(u_i(\theta)=s_i(\theta)^{-1}\) wherever the slacks are positive.
In particular, assume \(s_i(\theta^\star)>0\) for every \(i\), so that
\(u_i(\theta^\star)>0\).
By direct differentiation of $\Phi(\theta)=-\sum_{i=1}^m \log s_i(\theta)$,
\begin{equation}
\label{eq:grad-phi}
\begin{aligned}
\nabla \Phi(\theta)
&= \sum_{i=1}^m \frac{1}{s_i(\theta)}\,\nabla f_i(\theta) \\[-0.4em]
&= \sum_{i=1}^m u_i(\theta)\,\nabla f_i(\theta).
\end{aligned}
\end{equation}
Evaluate at $\theta^\star$ and suppose $\nabla\Phi(\theta^\star)=\mathbf{0}$. Set
\begin{align}
\lambda_i
= \frac{u_i(\theta^\star)}{\sum_{j=1}^m u_j(\theta^\star)}
\quad\text{so that}\quad
\lambda_i>0,\;\; \sum_{i=1}^m \lambda_i=1.
\end{align}
Then
\begin{equation}
\begin{aligned}
\sum_{i=1}^m \lambda_i \,\nabla f_i(\theta^\star)
&= \frac{1}{\sum_{j=1}^m u_j(\theta^\star)}
   \sum_{i=1}^m u_i(\theta^\star)\,\nabla f_i(\theta^\star)
\\[-0.25em]
&= \frac{1}{\sum_{j=1}^m u_j(\theta^\star)}\,\nabla\Phi(\theta^\star)
= \mathbf{0}.
\end{aligned}
\end{equation}
Hence $\mathbf{0}$ lies in the convex hull of $\{\nabla f_i(\theta^\star)\}_{i=1}^m$, i.e.,
$\theta^\star$ satisfies the first-order Pareto stationarity condition.
\end{proof}

\subsubsection{Proof of Lemma~\ref{lem:self-bounded-gradients}}
\label{app:proof-self-bounded-gradients}
\begin{proof}
Because the local objectives are nonnegative, their uniform average
\(f_i\) is nonnegative. For \(L_i>0\), the descent lemma for an
\(L_i\)-smooth function, applied at
\(\theta-(1/L_i)\nabla f_i(\theta)\), gives
\[
f_i\left(\theta-\frac1{L_i}\nabla f_i(\theta)\right)
\leq
f_i(\theta)-\frac{1}{2L_i}\|\nabla f_i(\theta)\|^2.
\]
The left-hand side is nonnegative, so
\(\|\nabla f_i(\theta)\|^2\leq2L_i f_i(\theta)\). If \(L_i=0\), smoothness
and nonnegativity imply that \(f_i\) is constant and the same inequality
holds trivially. On
\(\mathcal D_\rho\), the definition of the safe subset and
nonnegativity imply
\(f_i(\theta)\leq r_i-\rho\), which proves the stated bound.
Finally, Jensen's inequality gives
\[
\left\|\sum_iw_i\nabla f_i(\theta)\right\|
\leq
\sum_iw_iG_i
\leq G_{\max}.
\]
\end{proof}

\subsubsection{Proof of Lemma~\ref{lem:smooth-phi}}
\label{app:proof-smooth-phi}
\begin{proof}
Let \(\theta,\theta'\in\mathcal D\) and assume that
\([\theta,\theta']\subseteq\mathcal D_\rho\). Write
\(u_i(\theta)=1/(r_i-f_i(\theta))\). Since the entire segment is in
\(\mathcal D_\rho\), the definition of the safe subset gives
\(u_i(\theta)\leq1/\rho\) and \(u_i(\theta')\leq1/\rho\). Therefore,
\begin{align}
\|\nabla\Phi(\theta)-\nabla\Phi(\theta')\|
&\leq
\sum_{i=1}^m
u_i(\theta)
\|\nabla f_i(\theta)-\nabla f_i(\theta')\|
\nonumber\\
&\quad+
\sum_{i=1}^m
|u_i(\theta)-u_i(\theta')|
\|\nabla f_i(\theta')\|.
\end{align}
The first term is bounded by
\(\sum_i L_i\|\theta-\theta'\|/\rho\). Moreover, because
\(x\mapsto1/x\) is \(1/\rho^2\)-Lipschitz on
\([\rho,\infty)\),
\[
|u_i(\theta)-u_i(\theta')|
\leq
\frac{|f_i(\theta)-f_i(\theta')|}{\rho^2}
\leq
\frac{G_i}{\rho^2}\|\theta-\theta'\|,
\]
where the last inequality follows from
Lemma~\ref{lem:self-bounded-gradients}. Applying the same lemma to
\(\|\nabla f_i(\theta')\|\) yields
\[
\|\nabla\Phi(\theta)-\nabla\Phi(\theta')\|
\leq
\left(
\sum_{i=1}^m\frac{L_i}{\rho}
\;+\sum_{i=1}^m\frac{G_i^2}{\rho^2}
\right)\|\theta-\theta'\|.
\]
Thus \(\Phi\) is \(L_\Phi\)-smooth without requiring twice
differentiability.
\end{proof}

\subsubsection{Proof of Lemma~\ref{lem:weight-stab}}
\label{app:proof-weight-stab}
\begin{proof}
Write \(a_i=a_i(z):=\max\{r_i-z_i,\rho_{\mathrm{clip}}\}\),
\(u_i:=a_i^{-1}\), and \(U:=\sum_j u_j\). For the smooth map
\(V_i(a)=u_i/U\), direct differentiation gives
\[
\frac{\partial V_i}{\partial a_j}
=-\frac{\mathbf 1\{i=j\}u_i^2}{U}
+\frac{u_iu_j^2}{U^2}.
\]
Using \(\sum_i u_i=U\) and
\(\sum_i u_i^2\le U/a_{\min}\),
\[
\sum_{i,j}\left|\frac{\partial V_i}{\partial a_j}\right|
\le\frac{\sum_i u_i^2}{U}
+\frac{(\sum_i u_i)(\sum_j u_j^2)}{U^2}
\le\frac{2}{a_{\min}}.
\]
Differentiating once more yields
\begin{align*}
\frac{\partial^2V_i}{\partial a_j\partial a_k}
={}&\frac{2\mathbf 1\{i=j=k\}u_i^3}{U}
-\frac{\mathbf 1\{i=j\}u_i^2u_k^2}{U^2}
-\frac{\mathbf 1\{i=k\}u_i^2u_j^2}{U^2}\\
&-\frac{2\mathbf 1\{j=k\}u_iu_j^3}{U^2}
+\frac{2u_iu_j^2u_k^2}{U^3}.
\end{align*}
Since \(\sum_i u_i^3\le U/a_{\min}^2\), summing the
absolute values of these five groups gives respectively
\(2,1,1,2,2\) times \(a_{\min}^{-2}\). This proves
\eqref{eq:weight-derivative-bounds}.

The clipped-slack map \(z\mapsto a(z)\) is 1-Lipschitz in
\(\ell_\infty\). Every point on the segment between \(a(z)\) and
\(a(z')\) has minimum coordinate at least
\(\rho_{\mathrm{clip}}\). Applying the mean-value inequality to
\(V\) on this segment and using the first bound above proves
\eqref{eq:weight-map-global-sharp}. If clipping is inactive on the
segment \([z,z']\) and all its slacks are at least
\(\widetilde\rho\), then \(a(z)=\bm r-z\) is affine there and the
same argument gives the local constant \(2/\widetilde\rho\).

Under the strict inequality
\(\widetilde\rho>\rho_{\mathrm{clip}}\), the map is twice
differentiable near that segment. Applying scalar Taylor's theorem to
each coordinate along \(z'+\lambda(z-z')\), summing absolute values,
and using the second derivative bound in
\eqref{eq:weight-derivative-bounds} gives
\[
\frac12\frac{8}{\widetilde\rho^2}
\|z-z'\|_\infty^2,
\]
which is \eqref{eq:weight-map-second-order}; its Jacobian bound follows
from the first derivative estimate.
\end{proof}

{
\subsubsection{Proof of Lemma~\ref{lem:report-moments}}
\label{app:proof-report-moments}
\begin{proof}
Fix \(i,k,t\), condition on \(\mathcal F_t,S_t\), and write
\[
y_j:=\ell_i(\theta^t;x_{k,j}),
\qquad
\bar y:=f_i^{(k)}(\theta^t),
\qquad
a_j:=y_j-\bar y.
\]
Then \(\sum_{j=1}^{n_k}a_j=0\). Let
\(I_j=\mathbf 1\{j\in\mathcal B_{k,t}^{\mathrm{rep}}\}\). For a
uniform subset of size \(b_{k,t}^{\mathrm{rep}}\), when \(n_k>1\),
\[
\mathbb E[I_j\mid\mathcal F_t,S_t]
=\frac{b_{k,t}^{\mathrm{rep}}}{n_k},
\qquad
\mathbb E[I_jI_\ell\mid\mathcal F_t,S_t]
=\frac{b_{k,t}^{\mathrm{rep}}
(b_{k,t}^{\mathrm{rep}}-1)}
{n_k(n_k-1)}
\quad (j\neq\ell).
\]
Therefore,
\[
\widehat f_{i,k}^{\,t}-\bar y
=
\frac{1}{b_{k,t}^{\mathrm{rep}}}
\sum_{j=1}^{n_k}I_ja_j,
\]
which gives
\[
\mathbb E\left[
\left.
\widehat f_{i,k}^{\,t}-\bar y
\right|\mathcal F_t,S_t
\right]
=0.
\]
Using
\(\sum_{j\neq\ell}a_ja_\ell=-\sum_j a_j^2\), its conditional
second moment is
\begin{align}
\mathbb E\left[
\left.
(\widehat f_{i,k}^{\,t}-\bar y)^2
\right|\mathcal F_t,S_t
\right]
&=
\frac{1}{(b_{k,t}^{\mathrm{rep}})^2}
\left[
\frac{b_{k,t}^{\mathrm{rep}}}{n_k}
-\frac{b_{k,t}^{\mathrm{rep}}
(b_{k,t}^{\mathrm{rep}}-1)}
{n_k(n_k-1)}
\right]
\sum_{j=1}^{n_k}a_j^2
\nonumber\\
&=
\frac{n_k-b_{k,t}^{\mathrm{rep}}}
{b_{k,t}^{\mathrm{rep}}(n_k-1)}
v_{i,k}(\theta^t).
\end{align}
Since \(\nu_{i,k}^{\,t}=\widehat f_{i,k}^{\,t}-\bar y\), this proves
the lemma. If \(n_k=1\), necessarily \(b_{k,t}^{\mathrm{rep}}=1\)
and the report is exact.
\end{proof}
}
{
\subsubsection{Proof of Lemma~\ref{lem:objective-report-error}}
\label{app:proof-objective-report-error}
\begin{proof}
Fix an objective \(i\) and condition first on \(\mathcal F_t\). Write
\[
a_{i,k}^t:=f_i^{(k)}(\theta^t)-f_i(\theta^t).
\]
Then \(\sum_{k=1}^N a_{i,k}^t=0\), and
\[
\widehat f_i^{\,t}-f_i(\theta^t)
=
\frac1M\sum_{k\in S_t}a_{i,k}^t
\;+\frac1M\sum_{k\in S_t}\nu_{i,k}^{\,t}.
\]
The first term is the mean of a finite population sampled without
replacement. Hence,
\[
\mathbb E\left[
\left.
\left(\frac1M\sum_{k\in S_t}a_{i,k}^t\right)^2
\right|\mathcal F_t
\right]
=
\frac{N-M}{M(N-1)}
\frac1N\sum_{k=1}^N(a_{i,k}^t)^2.
\]
Define
\[
R_i^t:=\frac1M\sum_{k\in S_t}\nu_{i,k}^{\,t}.
\]
By Lemma~\ref{lem:report-moments}, the conditional report means are
zero. Therefore the cross term between the client-sampling term and
\(R_i^t\) vanishes after conditioning on \(\mathcal F_t,S_t\).
Under the optional cross-client orthogonality condition,
\[
\mathbb E\left[
\left.
(R_i^t)^2
\right|\mathcal F_t,S_t
\right]
=
\frac{1}{M^2}
\sum_{k\in S_t}
\gamma_{k,t}v_{i,k}(\theta^t).
\]
Taking expectations over the uniformly sampled \(S_t\) and using the
definition of \(\tau_{i,T}^2\) gives
\[
\mathbb E[(R_i^t)^2]\leq\frac{\tau_{i,T}^2}{M}.
\]
Without cross-client orthogonality, Jensen's inequality instead gives
\[
\mathbb E[(R_i^t)^2]
\leq
\mathbb E\left[
\frac1M\sum_{k\in S_t}
\gamma_{k,t}v_{i,k}(\theta^t)
\right]
\leq\tau_{i,T}^2.
\]
Combining these bounds with the finite-population client-sampling
identity, summing over \(i\), and using
\(\|\bm z\|_\infty^2\leq\|\bm z\|_2^2\) proves the claim. The local
model update does not enter this calculation; hence no independence
between the report and that update is used.
\end{proof}
}
{
\subsubsection{Proof of Lemma~\ref{lem:stale-weight-discrepancy}}
\label{app:proof-stale-weight}

\begin{proof}
For \(1\leq t<T_\rho\),
\[
\wv^t
=
W_{\rho_{\mathrm{clip}}}
\left(
\widehat{\bm F}^{\,t-1};
\bm r
\right),
\qquad
\wv^{\star,t}
=
W_{\rho_{\mathrm{clip}}}
\left(
\bm F(\theta^t);
\bm r
\right).
\]
Adding and subtracting
\(W_{\rho_{\mathrm{clip}}}(\bm F(\theta^{t-1});\bm r)\)
and applying the triangle inequality gives
\begin{align}
\left\|
\wv^t-\wv^{\star,t}
\right\|_1
\leq\;&
\left\|
W_{\rho_{\mathrm{clip}}}
\left(
\widehat{\bm F}^{\,t-1};
\bm r
\right)
-
W_{\rho_{\mathrm{clip}}}
\left(
 \bm F(\theta^{t-1});
\bm r
\right)
\right\|_1
\nonumber\\
&+
\left\|
W_{\rho_{\mathrm{clip}}}
\left(
 \bm F(\theta^{t-1});
\bm r
\right)
-
W_{\rho_{\mathrm{clip}}}
\left(
 \bm F(\theta^t);
\bm r
\right)
\right\|_1 .
\end{align}

Applying Lemma~\ref{lem:weight-stab} to both terms yields
\begin{align}
\left\|
\wv^t-\wv^{\star,t}
\right\|_1
\leq\;&
C(\rho_{\mathrm{clip}},\bm r)
\left\|
\widehat{\bm F}^{\,t-1}
-
 \bm F(\theta^{t-1})
\right\|_\infty
\nonumber\\
&+
C(\rho_{\mathrm{clip}},\bm r)
\left\|
 \bm F(\theta^{t-1})
-
 \bm F(\theta^t)
\right\|_\infty.
\end{align}

Using
\[
(a+b)^2
\le
2a^2+2b^2
\]
and Lemma~\ref{lem:objective-report-error} gives
\begin{align}
\mathbb{E}
\left[
\left\|
\wv^t-\wv^{\star,t}
\right\|_1^2
\right]
\leq\;&
2C(\rho_{\mathrm{clip}},\bm r)^2
\Xi_{\mathrm{loss},T}
\nonumber\\
&+
2C(\rho_{\mathrm{clip}},\bm r)^2
\mathbb{E}
\left[
\left\|
 \bm F(\theta^t)-\bm F(\theta^{t-1})
\right\|_\infty^2
\right].
\end{align}

Finally, for \(t=0\),
\(\wv^0,\wv^{\star,0}\in\Delta_m\), so
\[
\left\|
\wv^0-\wv^{\star,0}
\right\|_1
\le
2.
\]
\end{proof}
}
\subsubsection{Proof of Corollary~\ref{cor:approx-pareto}}
\label{app:proof-approx-pareto}

The inverse-slack weights
\[
w_i^\star(\theta)
=
\frac{
\bigl(r_i-f_i(\theta)\bigr)^{-1}
}{
c(\theta)
},
\qquad
c(\theta)
=
\sum_{j=1}^{m}
\frac{1}{r_j-f_j(\theta)},
\]
belong to \(\Delta_m\). Hence, they form a feasible choice
in the minimization defining \(\mathcal{P}(\theta)\), and
therefore
\begin{align}
\mathcal{P}(\theta)
&\le
\left\|
\sum_{i=1}^{m}
w_i^\star(\theta)\nabla f_i(\theta)
\right\|
\nonumber\\
&=
\frac{1}{c(\theta)}
\left\|
\nabla\Phi(\theta)
\right\|.
\end{align}

By Assumption~\ref{as:slack} and the nonnegativity of the
objectives,
\[
r_i-f_i(\theta)
\le
r_i,
\]
and consequently
\[
c(\theta)
=
\sum_{i=1}^{m}
\frac{1}{r_i-f_i(\theta)}
\ge
\sum_{i=1}^{m}
\frac{1}{r_i}
=
c_{\min}.
\]
Thus,
\[
\mathcal{P}(\theta)
\le
\frac{\|\nabla\Phi(\theta)\|}{c(\theta)}
\le
\frac{\|\nabla\Phi(\theta)\|}{c_{\min}}.
\]
Squaring this inequality, taking expectations, and summing over
\(t=0,\ldots,T-1\) proves
\eqref{eq:expected_pareto_residual}. Averaging followed by the minimum
over \(0\leq t<T\) gives its second displayed consequence.
\hfill\(\square\)

\subsubsection{Proof of Theorem~\ref{thm:fedhv-explicit}}
\label{app:proof-convergence}

We prove Theorem~\ref{thm:fedhv-explicit} by following the standard
bias--variance--drift analysis of federated local SGD, adapted to the
log--hypervolume objective $\Phi$ and hypervolume weights.
Because the theorem uses the Jensen-valid choice \(\kappa_M=1\), we
instantiate the client-averaged noise bound with \(\kappa_M=1\)
throughout this proof. The symbol \(\kappa_M\) is retained only in the
separate auxiliary statements that record the optional orthogonality
refinement.
By Assumption~\ref{as:slack}, every point used in the following
argument lies in \(\mathcal D_\rho\) for each
\(t=0,\ldots,T-1\). Thus all one-round inequalities below are applied
directly to the algorithmic iterates.

{
\paragraph{Per-round notation and ideal colinearity.}
Fix a communication round \(t\). Recall the true slack,
inverse slack, and normalizing factor
\[
s_i^t
=
r_i-f_i(\theta^t),
\qquad
u_i^t
=
(s_i^t)^{-1},
\qquad
c^t
=
\sum_{j=1}^m u_j^t.
\]
The ideal current hypervolume weights are
\[
w_i^{\star,t}
=
\frac{u_i^t}{c^t},
\qquad
\wv^{\star,t}\in\Delta_m,
\]
and the corresponding global scalarized objective is
\[
L_{\wv^{\star,t}}(\theta)
=
\sum_{i=1}^m
w_i^{\star,t}f_i(\theta).
\]

In contrast, the actual weight vector used by the algorithm in round
\(t\) is the one-round-stale vector \(\wv^t\), which is
\(\mathcal F_t\)-measurable and fixed before the participating client
set \(S_t\) is sampled.

By direct differentiation of
\[
\Phi(\theta)
=
\sum_{i=1}^m
-\log\!\bigl(r_i-f_i(\theta)\bigr),
\]
we have
\[
\nabla\Phi(\theta)
=
\sum_{i=1}^m
u_i(\theta)\nabla f_i(\theta).
\]
Hence, at \(\theta^t\),
\begin{equation}
\label{eq:phi-lw-colinear-app}
\nabla\Phi(\theta^t)
=
c^t
\nabla L_{\wv^{\star,t}}(\theta^t).
\end{equation}
Thus, the log-hypervolume gradient is colinear with the gradient of
the scalarized objective formed using the \emph{ideal current}
hypervolume weights \(\wv^{\star,t}\), not necessarily with the stale
weights \(\wv^t\) used by the algorithm.
}
\paragraph{Bounds on the scaling $c^t$.}
Under the safe--reference assumption (Assumption~\ref{as:slack}),
\[
s_i^t \;=\; r_i - f_i(\theta^t) \;\ge\; \rho > 0
\quad\Rightarrow\quad
u_i^t \;\le\; \frac{1}{\rho}.
\]
Assuming nonnegative losses $f_i(\theta) \ge 0$ along the trajectory, we also have $s_i^t \le r_i$ and hence
\[
u_i^t \;=\; \frac{1}{s_i^t} \;\ge\; \frac{1}{r_i}.
\]
Summing over $i$,
\begin{equation}
\label{eq:c-bounds}
c_{\min}
\;\le\;
c^t
\;\le\;
c_{\max},
\qquad
c_{\min}
\;:=\;
\sum_{i=1}^m \frac{1}{r_i}
\;\ge\;
\frac{m}{\max_i r_i},
\qquad
c_{\max}
\;:=\;
\frac{m}{\rho}.
\end{equation}
{
Combining \eqref{eq:phi-lw-colinear-app} and
\eqref{eq:c-bounds} gives
\begin{equation}
\label{eq:norm-phi-lw}
\left\|
\nabla L_{\wv^{\star,t}}(\theta^t)
\right\|^2
=
\frac{1}{(c^t)^2}
\left\|
\nabla\Phi(\theta^t)
\right\|^2
\ge
\frac{1}{c_{\max}^2}
\left\|
\nabla\Phi(\theta^t)
\right\|^2,
\end{equation}
and similarly
\[
\left\|
\nabla L_{\wv^{\star,t}}(\theta^t)
\right\|^2
\le
\frac{1}{c_{\min}^2}
\left\|
\nabla\Phi(\theta^t)
\right\|^2.
\]
}
{
\paragraph{Local objectives and stochastic gradients.}
For each client \(k\), define the actual stale-weight local objective
used in round \(t\) by
\[
L_{\wv^t}^{(k)}(\theta)
=
\sum_{i=1}^m
w_i^t f_i^{(k)}(\theta).
\]
The corresponding global scalarized objective is
\[
L_{\wv^t}(\theta)
=
\frac{1}{N}
\sum_{k=1}^N
L_{\wv^t}^{(k)}(\theta)
=
\sum_{i=1}^m
w_i^t f_i(\theta).
\]
All local trajectories in the analysis below are the actual trajectories
generated with the stale weight vector \(\wv^t\). The ideal weight vector
\(\wv^{\star,t}\) is used only to compare the actual direction with the
current log-hypervolume direction through the error \(e_t\).

Because \(\wv^t\in\Delta_m\), Assumption~\ref{as:smooth} implies
that every \(L_{\wv^t}^{(k)}\) is \(L_w\)-smooth, where we may use
the uniform constant
\[
L_w
:=
\sum_{i=1}^m L_i.
\]

For fixed round \(t\), write
\(\theta_k^s:=\theta_{k,s}^t\) for the actual local iterate and let
\[
g_{i,k}^s
:=
g_i^{(k)}
\bigl(\theta_k^s;B_{k,s}^t\bigr),
\]
and define the weighted local stochastic gradient
\[
g_k^s
:=
\sum_{i=1}^m
w_i^t g_{i,k}^s.
\]
Each participating client \(k\in S_t\) performs
\[
\theta_k^{s+1}
=
\theta_k^s-\eta_\ell g_k^s,
\qquad
\theta_k^0=\theta^t,
\qquad
s=0,\ldots,K-1.
\]
Since \(\wv^t\) is \(\mathcal F_t\)-measurable, the component-wise
oracle conditions in Assumption~\ref{as:smooth} and the definition of
\(\mathcal G_{t,s}\) give
\[
\mathbb{E}
\left[
g_k^s
\mid
\mathcal G_{t,s}
\right]
=
\nabla
L_{\wv^t}^{(k)}
(\theta_k^s).
\]
Moreover, using \(w_i^t\ge0\) and
\(\sum_iw_i^t=1\), we may use the uniform bound
\[
\mathbb{E}
\left[
\left\|
g_k^s
-
\nabla L_{\wv^t}^{(k)}(\theta_k^s)
\right\|^2
\mid
\mathcal G_{t,s}
\right]
\le
\Sigma^2,
\qquad
\Sigma^2
:=
\sum_{i=1}^m\sigma_i^2.
\]

The local model update is
\[
\Delta\theta_k^t
=
\theta_k^K-\theta^t
=
-\eta_\ell
\sum_{s=0}^{K-1}g_k^s,
\]
and the server forms the averaged update
\[
\Delta^t
:=
\frac{1}{|S_t|}
\sum_{k\in S_t}
\Delta\theta_k^t,
\qquad
\theta^{t+1}
=
\theta^t+\eta_g\Delta^t.
\]

\paragraph{Bias--variance--drift decomposition.}
Condition on the round history \(\mathcal F_t\). By construction,
\(\theta^t\) and \(\wv^t\) are then fixed, while \(S_t\) is sampled
uniformly without replacement from the \(N\) clients.

Using the tower rule and the conditional unbiasedness of the
minibatch gradients,
\begin{align}
\mathbb{E}
\left[
\Delta^t
\mid
\mathcal F_t
\right]
=
-\eta_\ell
\sum_{s=0}^{K-1}
\mathbb{E}
\left[
\left.
\frac{1}{|S_t|}
\sum_{k\in S_t}
\nabla
L_{\wv^t}^{(k)}
(\theta_k^s)
\right|
\mathcal F_t
\right].
\end{align}
Add and subtract
\(\nabla L_{\wv^t}^{(k)}(\theta^t)\) inside the sum to obtain
\begin{align}
\mathbb{E}
\left[
\Delta^t
\mid
\mathcal F_t
\right]
=
-\eta_\ell K
\mathbb{E}
\left[
\left.
\frac{1}{|S_t|}
\sum_{k\in S_t}
\nabla
L_{\wv^t}^{(k)}
(\theta^t)
\right|
\mathcal F_t
\right]
+b_t,
\label{eq:delta-decomp-fixed}
\end{align}
where
\begin{equation}
\label{eq:bt_definition}
b_t
:=
-\eta_\ell
\sum_{s=0}^{K-1}
\mathbb{E}
\left[
\left.
\frac{1}{|S_t|}
\sum_{k\in S_t}
\Big(
\nabla L_{\wv^t}^{(k)}(\theta_k^s)
-
\nabla L_{\wv^t}^{(k)}(\theta^t)
\Big)
\right|
\mathcal F_t
\right].
\end{equation}

Because \(\wv^t\) and \(\theta^t\) are fixed conditional on
\(\mathcal F_t\), uniform sampling without replacement gives
\begin{equation}
\label{eq:conditional_mean_update}
\mathbb{E}
\left[
\left.
\frac{1}{|S_t|}
\sum_{k\in S_t}
\nabla L_{\wv^t}^{(k)}(\theta^t)
\right|
\mathcal F_t
\right]
=
\frac{1}{N}
\sum_{k=1}^{N}
\nabla L_{\wv^t}^{(k)}(\theta^t)
=
\nabla L_{\wv^t}(\theta^t).
\end{equation}
}
\begin{lemma}[Virtual-trajectory deviation]
\label{lem:grad2-derived}
Under Assumptions~\ref{as:uniform-aggregation}, \ref{as:smooth},
\ref{as:hetero}, \ref{as:slack}, and~\ref{as:steps}, fix a
communication round \(t\in\{0,\ldots,T-1\}\) and condition on
\(\mathcal F_t\). Define the virtual trajectory
\begin{equation}
\label{eq:virtual-trajectory}
v_{t,0}:=\theta^t,
\qquad
v_{t,s+1}
:=
v_{t,s}
-
\eta_\ell
\nabla L_{\wv^t}(v_{t,s}),
\qquad
s=0,\ldots,K-1.
\end{equation}
The virtual trajectory is \(\mathcal F_t\)-measurable and uses the
same frozen weight vector as the actual local trajectories. Define
\[
\zeta^2:=\sum_{i=1}^m\zeta_i^2,
\qquad
Q:=\zeta^2+\Sigma^2,
\qquad
C_\delta:=4,
\qquad
\bar G^2:=2\bigl(G_{\max}^2+C_\delta Q\bigr).
\]

By Assumption~\ref{as:slack}, for every
\(k\in S_t\) and \(0\leq s\leq K\),
\[
\theta_{k,s}^t\in\mathcal D_\rho\subseteq\mathcal D,
\qquad
v_{t,s}\in\mathcal D_\rho\subseteq\mathcal D,
\]
and the connecting segments required by the proof are contained in
\(\mathcal D_\rho\). Consequently, the heterogeneity condition is
available at every virtual iterate, and the actual local trajectories
remain inside the analysis domain. The local stochastic
gradients are those generated by the fresh with-replacement minibatch
oracle in Assumption~\ref{as:smooth}; this is the oracle used at every
actual local iterate in the statement.

If \(\eta_\ell L_w K\leq1/4\), then, for every \(0\leq s\leq K\),
\begin{equation}
\label{eq:virtual-path-deviation}
D_{t,s}
:=
\mathbb{E}\left[
\left.
\frac{1}{M}
\sum_{k\in S_t}
\left\|\theta_{k,s}^t-v_{t,s}\right\|^2
\right|
\mathcal F_t
\right]
\leq
C_\delta\eta_\ell^2s^2Q.
\end{equation}
\end{lemma}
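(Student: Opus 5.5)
We bound the per-client deviation $\delta_{k,s}:=\theta_{k,s}^t-v_{t,s}$ by a one-step recursion in $s$ and unroll it. Subtracting the virtual update from the actual update gives
\[
\delta_{k,s+1}
=
\delta_{k,s}
-\eta_\ell\bigl(\nabla L^{(k)}_{\wv^t}(\theta_{k,s}^t)-\nabla L^{(k)}_{\wv^t}(v_{t,s})\bigr)
-\eta_\ell\bigl(\nabla L^{(k)}_{\wv^t}(v_{t,s})-\nabla L_{\wv^t}(v_{t,s})\bigr)
-\eta_\ell\xi_{k,s}^t,
\]
with $\delta_{k,0}=0$. The first bracket is a smoothness term, at most $L_w\|\delta_{k,s}\|$ by $L_w$-smoothness of $L^{(k)}_{\wv^t}$ along the segment $[\theta_{k,s}^t,v_{t,s}]\subseteq\mathcal D_\rho$ given by Assumption~\ref{as:slack}. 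The second is a client-heterogeneity term evaluated at the $\mathcal F_t$-measurable virtual iterate. The third is the martingale noise from Lemma~\ref{lem:martingale-noise}.

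\textbf{Step 1 (heterogeneity term).} By Jensen over the simplex weights,
$\|\sum_i w_i^t(\nabla f_i^{(k)}-\nabla f_i)(v_{t,s})\|^2\le\sum_i w_i^t\|\nabla f_i^{(k)}(v_{t,s})-\nabla f_i(v_{t,s})\|^2$.
Since $v_{t,s}$ and $\wv^t$ are fixed given $\mathcal F_t$, averaging over the uniformly sampled $S_t$ yields the population average, which Assumption~\ref{as:hetero} bounds by $\zeta^2$. This step is the reason the heterogeneity assumption is imposed at the virtual rather than the actual iterates.

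\textbf{Step 2 (noise term).} Conditioning on $\mathcal G_{t,s}$, the cross term between $\xi_{k,s}^t$ and the $\mathcal G_{t,s}$-measurable part vanishes. The squared noise contributes at most $\eta_\ell^2\Sigma^2$ per step.

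\textbf{Step 3 (recursion).} For the deterministic part we use Young's inequality $\|a+b\|^2\le(1+\tfrac1{s'})\|a\|^2+(1+s')\|b\|^2$ with a parameter tied to $K$. Averaging over $k\in S_t$ and taking $\mathbb E[\cdot\mid\mathcal F_t]$ gives a recursion of the form
\[
D_{t,s+1}\le\Bigl(1+\tfrac{1}{K}\Bigr)(1+\eta_\ell L_w)^2D_{t,s}+(1+K)\eta_\ell^2\zeta^2+\eta_\ell^2\Sigma^2.
\]
Unrolling and using $\eta_\ell L_wK\le1/4$ bounds the growth factor by $(1+1/K)^K e^{1/2}\le e^{3/2}<4.5$. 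This yields $D_{t,s}\lesssim\eta_\ell^2(sK\zeta^2+s\Sigma^2)$.

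That rough bound is not of the form $\eta_\ell^2s^2Q$ claimed in the statement. To obtain $s^2$ directly, I would instead unroll the deviation as a sum,
\[
\delta_{k,s}=-\eta_\ell\sum_{u<s}\bigl(\text{smooth}_u+\text{het}_u+\xi_u\bigr).
\]
Apply $\|\sum_{u<s}x_u\|^2\le s\sum_{u<s}\|x_u\|^2$ to the smoothness and heterogeneity sums. For the noise, which is martingale, we have $\mathbb E\|\sum_u\xi_u\|^2=\sum_u\mathbb E\|\xi_u\|^2\le s\Sigma^2\le s^2\Sigma^2$. This gives
\[
D_{t,s}\le3\eta_\ell^2\Bigl(sL_w^2\sum_{u<s}D_{t,u}+s^2\zeta^2+s\Sigma^2\Bigr).
\]
A discrete Grönwall argument then closes the bound: inductively assume $D_{t,u}\le C_\delta\eta_\ell^2u^2Q$. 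Then $3\eta_\ell^2L_w^2s\sum_{u<s}C_\delta\eta_\ell^2u^2Q\le C_\delta\eta_\ell^2s^2Q\cdot(\eta_\ell L_wK)^2\le C_\delta\eta_\ell^2s^2Q/16$. Combined with $3s^2Q$, the requirement is $3+C_\delta/16\le C_\delta$, which holds for $C_\delta=4$ since $3.25\le4$.

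\textbf{Main obstacle.} The delicate point is the measurability bookkeeping in Step 2. The noise cross terms must vanish even though the smoothness term depends on past noise. I would handle this by expanding $\|\sum_u\xi_u+\text{rest}\|^2$ only after separating the pure martingale sum, with $(a+b+c)^2\le3(\cdot)$ applied to the three sums, so no cross term between the noise and the other sums is ever needed. I would also verify that the constant accounting ($3$ versus the Grönwall slack) indeed fits inside $C_\delta=4$.
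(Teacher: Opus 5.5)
Your final argument is correct and is essentially the paper's proof. You unroll $\delta_{k,s}$ from $\delta_{k,0}=0$, split it into smoothness, heterogeneity and noise sums with the factor $3$, and close by strong induction using $(\eta_\ell L_w K)^2\le 1/16$, so that $3+C_\delta/16\le C_\delta$; this is exactly the paper's recursion $D_{t,s}\le 3\eta_\ell^2 s\sum_{\tau<s}\bigl(L_w^2D_{t,\tau}+\zeta^2+\Sigma^2\bigr)$ and the same induction. The differences are minor: the paper also applies Cauchy--Schwarz to the noise sum, which gives $s^2\Sigma^2$ using only the conditional second-moment bound, so the martingale cross-term cancellation you flag as the main obstacle is optional; and your preliminary Young-inequality recursion can simply be dropped.
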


\begin{proof}
For brevity write
\[
h_k(\theta):=\nabla L_{\wv^t}^{(k)}(\theta),
\qquad
h(\theta):=\nabla L_{\wv^t}(\theta),
\qquad
\delta_{k,s}:=\theta_{k,s}^t-v_{t,s}.
\]
Also set
\[
a_{k,s}:=h_k(v_{t,s})-h(v_{t,s}),
\qquad
q_{k,s}:=h_k(\theta_{k,s}^t)-h_k(v_{t,s}).
\]
Then
\[
\delta_{k,s+1}
=
\delta_{k,s}
-
\eta_\ell
\bigl(q_{k,s}+a_{k,s}+\xi_{k,s}^t\bigr),
\qquad
\delta_{k,0}=0.
\]
By the fresh-minibatch oracle and the filtration
\(\mathcal G_{t,s}\),
\[
\mathbb E[\xi_{k,s}^t\mid\mathcal G_{t,s}]=0,
\qquad
\mathbb E[\|\xi_{k,s}^t\|^2\mid\mathcal G_{t,s}]\leq\Sigma^2.
\]
Thus the local stochastic errors form a martingale-difference
sequence with respect to the within-round filtration. The argument
below uses only this conditional first-moment property and the
conditional second-moment bound, together with Cauchy--Schwarz; it
never factors products of noises from different local steps and does
not assume independence across local steps. The filtration already
contains the objective-report randomness and all earlier minibatches.

Using the recursion, Cauchy--Schwarz, and
\(\|x+y+z\|^2\leq3(\|x\|^2+\|y\|^2+\|z\|^2)\), we obtain
\[
D_{t,s}
\leq
3\eta_\ell^2s
\sum_{\tau=0}^{s-1}
\left(
L_w^2D_{t,\tau}
+
\zeta^2
+
\Sigma^2
\right).
\]
Here we used \(L_w\)-smoothness to bound
\(\|q_{k,\tau}\|\leq L_w\|\delta_{k,\tau}\|\), the uniform sampling
identity, and the fact that \(v_{t,\tau}\in\mathcal D\) on the
fixed-horizon trajectory. More explicitly, Jensen's inequality on
the simplex and Assumption~\ref{as:hetero} give
\[
\begin{aligned}
\frac1N\sum_{k=1}^N\|a_{k,\tau}\|^2
&=
\frac1N\sum_{k=1}^N
\left\|
\sum_{i=1}^m w_i^t
\left(
\nabla f_i^{(k)}(v_{t,\tau})
 -\nabla f_i(v_{t,\tau})
\right)
\right\|^2\\
&\leq
\sum_{i=1}^m w_i^t
\frac1N\sum_{k=1}^N
\left\|
\nabla f_i^{(k)}(v_{t,\tau})
 -\nabla f_i(v_{t,\tau})
\right\|^2\\
&\leq
\sum_{i=1}^m w_i^t\zeta_i^2
\leq\zeta^2.
\end{aligned}
\]
Therefore, uniform client sampling gives
\[
\mathbb{E}\left[
\left.
\frac1M\sum_{k\in S_t}\|a_{k,\tau}\|^2
\right|\mathcal F_t
\right]
 =\frac1N\sum_{k=1}^N\|a_{k,\tau}\|^2
 \leq\zeta^2,
\]
and the tower property applied to the martingale-difference oracle
gives the corresponding \(\Sigma^2\) noise bound.

We prove \eqref{eq:virtual-path-deviation} by induction. If
\(D_{t,\tau}\leq4\eta_\ell^2\tau^2Q\) for \(\tau<s\), then
\[
D_{t,s}
\leq
4\eta_\ell^4L_w^2s^4Q
+
3\eta_\ell^2s^2Q
\leq
4\eta_\ell^2s^2Q,
\]
where the last inequality follows from
\(\eta_\ell L_ws\leq1/4\). The case \(s=0\) is immediate.
\end{proof}

\begin{lemma}[Local-drift bias bound]
\label{lem:local-drift-bias}
Under Assumptions~\ref{as:uniform-aggregation}, \ref{as:smooth},
\ref{as:slack}, \ref{as:hetero}, and~\ref{as:steps}, and the
virtual-trajectory bound in
Lemma~\ref{lem:grad2-derived}, the
conditional-mean bias \(b_t\) defined in~\eqref{eq:bt_definition}
for every round \(t=0,\ldots,T-1\)
satisfies
\begin{equation}
\label{eq:bt_norm_bound}
\|b_t\|
\leq
\frac{
L_w\eta_\ell^2\bar G
}{
2
}
K(K-1),
\end{equation}
where
\[
\bar G:=\sqrt{\bar G^2}.
\]
Consequently,
\begin{equation}
\label{eq:bt_squared_bound}
\|b_t\|^2
\leq
\frac{
L_w^2\eta_\ell^4\bar G^2
}{
4
}
K^2(K-1)^2.
\end{equation}
\end{lemma}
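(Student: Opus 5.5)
The plan is to bound $b_t$ directly from its definition \eqref{eq:bt_definition}, reducing everything to a per-step bound on how far the actual local iterates move from $\theta^t$. Write $h_k:=\nabla L_{\wv^t}^{(k)}$. Moving the norm inside the conditional expectation and the client average (Jensen and the triangle inequality), and using that each $L_{\wv^t}^{(k)}$ is $L_w$-smooth (Assumption~\ref{as:smooth} with $\wv^t\in\Delta_m$), gives
\[
\|b_t\|
\le
\eta_\ell\sum_{s=0}^{K-1}
\mathbb E\!\left[\left.\frac1M\sum_{k\in S_t}\|h_k(\theta_{k,s}^t)-h_k(\theta^t)\|\right|\mathcal F_t\right]
\le
\eta_\ell L_w\sum_{s=0}^{K-1}
\mathbb E\!\left[\left.\frac1M\sum_{k\in S_t}\|\theta_{k,s}^t-\theta^t\|\right|\mathcal F_t\right].
\]
The smoothness step is legitimate because Assumption~\ref{as:slack} places every local iterate and $\theta^t$ in the analysis region.

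The key intermediate claim is the per-step displacement bound
\[
\mathbb E\!\left[\left.\frac1M\sum_{k\in S_t}\|\theta_{k,s}^t-\theta^t\|\right|\mathcal F_t\right]\le \eta_\ell s\bar G .
\]
To prove it, I first pass to second moments by Jensen (concavity of the square root). I then split via the virtual trajectory \eqref{eq:virtual-trajectory}:
\[
\|\theta_{k,s}^t-\theta^t\|^2\le 2\|\theta_{k,s}^t-v_{t,s}\|^2+2\|v_{t,s}-\theta^t\|^2 .
\]
The first piece, averaged over $S_t$ and conditioned on $\mathcal F_t$, is exactly $D_{t,s}$. Lemma~\ref{lem:grad2-derived} applies because Assumption~\ref{as:steps} with $c_1=1/4$ gives $\eta_\ell L_wK\le 1/4$, and it yields $D_{t,s}\le C_\delta\eta_\ell^2s^2Q$. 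For the second piece, the virtual trajectory is deterministic given $\mathcal F_t$. Each of its points lies in $\mathcal D_\rho$, where $f_i\le r_i$, so Lemma~\ref{lem:self-bounded-gradients} gives $\|\nabla L_{\wv^t}(v_{t,\tau})\|\le G_{\max}$. Summing the $s$ steps then gives $\|v_{t,s}-\theta^t\|\le\eta_\ell sG_{\max}$. Combining the two pieces,
\[
\mathbb E\!\left[\left.\frac1M\sum_{k\in S_t}\|\theta_{k,s}^t-\theta^t\|^2\right|\mathcal F_t\right]\le 2\eta_\ell^2s^2\bigl(G_{\max}^2+C_\delta Q\bigr)=\eta_\ell^2s^2\bar G^2 ,
\]
and taking square roots proves the claim.

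Substituting the claim into the first display gives $\|b_t\|\le L_w\eta_\ell^2\bar G\sum_{s=0}^{K-1}s=\tfrac12 L_w\eta_\ell^2\bar G\,K(K-1)$, which is \eqref{eq:bt_norm_bound}. Squaring gives \eqref{eq:bt_squared_bound}. The only delicate point is the displacement claim: the order of Jensen steps must be kept so that the $\mathcal F_t$-conditional average of $\|\cdot\|$ is controlled by the square root of the $\mathcal F_t$-conditional average of $\|\cdot\|^2$, which is precisely the quantity bounded by Lemma~\ref{lem:grad2-derived}. One must also confirm that the gradient bound $G_{\max}$ is available along the whole virtual trajectory, which follows from Assumption~\ref{as:slack}.
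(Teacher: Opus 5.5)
Your proposal is correct and follows essentially the same route as the paper: it bounds $\|b_t\|$ via Jensen and $L_w$-smoothness, splits the local displacement through the virtual trajectory $v_{t,s}$, controls the two pieces by Lemma~\ref{lem:grad2-derived} and by $\eta_\ell s G_{\max}$, and then sums $\sum_{s<K}s=K(K-1)/2$. The only cosmetic difference is that you split the squared displacement and take a square root, whereas the paper splits the norm linearly and uses $\sqrt{C_\delta Q}+G_{\max}\le\bar G$; both give the same constant.
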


\begin{proof}
Let
\[
d_{k,s}
:=
\nabla L_{\wv^t}^{(k)}(\theta_{k,s}^t)
-
\nabla L_{\wv^t}^{(k)}(\theta^t),
\qquad
D_s^{\rm drift}
:=
\frac1M\sum_{k\in S_t}d_{k,s}.
\]
Using the virtual trajectory and \(L_w\)-smoothness,
\[
\|d_{k,s}\|
\leq
L_w\|\theta_{k,s}^t-v_{t,s}\|
+
L_w\|v_{t,s}-\theta^t\|.
\]
Since \(\|\nabla L_{\wv^t}(v_{t,u})\|\leq G_{\max}\),
\[
\|v_{t,s}-\theta^t\|
\leq
\eta_\ell sG_{\max}.
\]
Jensen's inequality and
\eqref{eq:virtual-path-deviation} therefore give
\[
\mathbb E\left[
\left.
\|D_s^{\rm drift}\|
\right|\mathcal F_t
\right]
\leq
L_w\eta_\ell s
\left(
\sqrt{C_\delta Q}+G_{\max}
\right)
\leq
L_w\eta_\ell s\bar G.
\]
Substituting this estimate into the definition of \(b_t\) and using
\(\sum_{s=0}^{K-1}s=K(K-1)/2\) proves
\eqref{eq:bt_norm_bound}. Squaring gives
\eqref{eq:bt_squared_bound}.
\end{proof}
{
\begin{lemma}[Second moment of the averaged update]
\label{lem:update-second-moment}
Assume Assumptions~\ref{as:uniform-aggregation},
\ref{as:smooth},
\ref{as:slack},
\ref{as:hetero}, and~\ref{as:steps}.
For every round \(t=0,\ldots,T-1\), condition on the round history
\(\mathcal F_t\). The weight vector \(\wv^t\) and the global model
\(\theta^t\) are then fixed, and \(S_t\) is sampled uniformly without
replacement with
\[
|S_t|=M=pN.
\]

Each selected client \(k\in S_t\) performs \(K\) local SGD steps
\[
\theta_k^{s+1}
=
\theta_k^s-\eta_\ell g_k^s,
\qquad
\theta_k^0=\theta^t.
\]
Define
\[
\Delta\theta_k^t
:=
\theta_k^K-\theta^t,
\qquad
\Delta^t
:=
\frac{1}{M}
\sum_{k\in S_t}
\Delta\theta_k^t.
\]
Then, conditionally on \(\mathcal F_t\),
\begin{align}
\mathbb{E}
\left[
\|\Delta^t\|^2
\mid
\mathcal F_t
\right]
\leq\;&
C_0\eta_\ell^2K^2
\left\|
\nabla L_{\wv^t}(\theta^t)
\right\|^2
+
C_1\eta_\ell^2K
\kappa_M\Sigma^2
\nonumber\\
&+
C_2\eta_\ell^2K^2
\mathcal H_N(p;\zeta)
\nonumber\\
&+
C_B\eta_\ell^4
K^2(K-1)(2K-1),
\label{eq:update_second_moment}
\end{align}
where
\[
\Sigma^2
:=
\sum_{i=1}^m\sigma_i^2,
\qquad
C_B
:=
\frac{L_w^2\bar G^2}{2},
\]
and \(C_0,C_1,C_2>0\) are numerical constants independent of
\(t,T,N,p,\eta_\ell,\eta_g\), and \(K\). As before, one may take
\[
C_0=6,
\qquad
C_1=3,
\qquad
C_2=6.
\]
\end{lemma}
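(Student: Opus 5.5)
We decompose the averaged update into three pieces and bound each, mirroring the bias--variance--drift split behind \eqref{eq:delta-decomp-fixed}. Since \(\Delta^t=-\eta_\ell\sum_{s=0}^{K-1}\frac1M\sum_{k\in S_t}g_k^s\), write \(g_k^s=\xi_{k,s}^t+h_k(\theta_k^s)\), with \(h_k:=\nabla L^{(k)}_{\wv^t}\) as in Lemma~\ref{lem:grad2-derived}. Then further split \(h_k(\theta_k^s)=h_k(\theta^t)+d_{k,s}\), with \(d_{k,s}\) the drift from Lemma~\ref{lem:local-drift-bias}. This gives
\[
\Delta^t=-\eta_\ell\Bigl(\underbrace{\textstyle\sum_s\frac1M\sum_{k\in S_t}\xi_{k,s}^t}_{=:X_1}+\underbrace{K\textstyle\frac1M\sum_{k\in S_t}h_k(\theta^t)}_{=:X_2}+\underbrace{\textstyle\sum_s D_s^{\rm drift}}_{=:X_3}\Bigr),
\]
and \(\|a+b+c\|^2\le3(\|a\|^2+\|b\|^2+\|c\|^2)\) reduces the claim to three conditional second-moment bounds.

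\textbf{Noise term.} For \(X_1\), we use that \(\frac1M\sum_{k}\xi_{k,s}^t\) is a martingale-difference sequence in \(s\) with respect to \(\mathcal G_{t,s}\) (Lemma~\ref{lem:martingale-noise}). Its cross terms across different local steps vanish by the tower rule, so \(\mathbb E\|X_1\|^2\le K\kappa_M\Sigma^2\). This yields \(3\eta_\ell^2K\kappa_M\Sigma^2\), i.e.\ \(C_1=3\).

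\textbf{Sampling term.} For \(X_2\), we write \(\frac1M\sum_{k\in S_t}h_k(\theta^t)=\nabla L_{\wv^t}(\theta^t)+\frac1M\sum_{k\in S_t}a_k\), where \(a_k:=h_k(\theta^t)-\nabla L_{\wv^t}(\theta^t)\) sums to zero over the population. Because the sampling average is unbiased by \eqref{eq:conditional_mean_update}, the cross term vanishes. The finite-population variance identity (the one already used for reports in Lemma~\ref{lem:objective-report-error}) then gives \(\chi_N(M)\frac1N\sum_k\|a_k\|^2\). Jensen on the simplex and Assumption~\ref{as:hetero} at \(\theta^t\) bound this by \(\chi_N(M)\zeta^2=\mathcal H_N(p;\zeta)\). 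The result is \(\mathbb E\|X_2\|^2\le K^2\|\nabla L_{\wv^t}(\theta^t)\|^2+K^2\mathcal H_N\). To reach the stated \(C_0=6\) and \(C_2=6\), we can either keep the factor \(3\), or first split \(X_2\) with an extra factor 2 before applying the three-way bound; the constants only need to be at most the stated ones.

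\textbf{Drift term (main obstacle).} For \(X_3\), we need a \emph{second-moment} version of the drift estimate, whereas Lemma~\ref{lem:local-drift-bias} only controls the first moment. Cauchy--Schwarz over \(s\) gives \(\|X_3\|^2\le K\sum_s\|D_s^{\rm drift}\|^2\). Jensen over clients, together with \(\|d_{k,s}\|^2\le 2L_w^2(\|\theta_{k,s}^t-v_{t,s}\|^2+\|v_{t,s}-\theta^t\|^2)\), reduces each summand to known quantities. Lemma~\ref{lem:grad2-derived} bounds the first piece by \(C_\delta\eta_\ell^2s^2Q\), and the virtual-path bound gives \(\eta_\ell^2s^2G_{\max}^2\) for the second. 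Together these yield \(\mathbb E\|D_s^{\rm drift}\|^2\le L_w^2\eta_\ell^2s^2\bar G^2\). Summing with \(\sum_{s<K}s^2=K(K-1)(2K-1)/6\) gives \(\mathbb E\|X_3\|^2\le L_w^2\bar G^2\eta_\ell^2K^2(K-1)(2K-1)/6\). The factor 3 times \(\eta_\ell^2\) then produces \(C_B\eta_\ell^4K^2(K-1)(2K-1)\) with \(C_B=L_w^2\bar G^2/2\).

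Care is needed to keep the definition of \(\bar G^2\) (with its factor 2) consistent here; the rest is bookkeeping. Finally, we take expectations conditional on \(\mathcal F_t\), using that the virtual trajectory is \(\mathcal F_t\)-measurable and that all points lie in \(\mathcal D_\rho\) by Assumption~\ref{as:slack}, so the smoothness and heterogeneity bounds apply.
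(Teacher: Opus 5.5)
Your proposal is correct and follows essentially the same route as the paper's proof. Both use the same three-way split into a client-sampling term, a drift term and a noise term, the martingale cross-term cancellation for the noise, and Cauchy--Schwarz/Jensen with Lemma~\ref{lem:grad2-derived} and the virtual-path bound for the drift. The only deviation is in the sampling term: you use the exact orthogonality of $\nabla L_{\wv^t}(\theta^t)$ and the centered sampling error, where the paper uses $\|x+y\|^2\le2\|x\|^2+2\|y\|^2$, so you get the sharper constants $C_0=C_2=3$, which remain within the stated ones.
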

}

{
\noindent\textit{Proof.}
Throughout the proof, we condition on the history of the round 
\(\mathcal F_t\). Hence, \(\theta^t\) and the stale weight vector
\(\wv^t\) are fixed, while the remaining randomness is due to the
current client sample \(S_t\) and the current-round minibatches.

Recall
\[
\Delta^t
=
-\eta_\ell
\frac{1}{M}
\sum_{k\in S_t}
\sum_{s=0}^{K-1}
g_k^s.
\]
Decompose each stochastic gradient as
\[
g_k^s
=
\underbrace{
\nabla L_{\wv^t}^{(k)}(\theta^t)
}_{=:a_k}
+
\underbrace{
\left(
\nabla L_{\wv^t}^{(k)}(\theta_k^s)
-
\nabla L_{\wv^t}^{(k)}(\theta^t)
\right)
}_{=:d_k^s}
+
\underbrace{
\left(
g_k^s
-
\nabla L_{\wv^t}^{(k)}(\theta_k^s)
\right)
}_{=:\xi_{k,s}^t}.
\]
By Lemma~\ref{lem:martingale-noise},
\[
\mathbb{E}
\left[
\xi_{k,s}^t
\mid
 \mathcal G_{t,s}
\right]
=
0,
\qquad
\mathbb{E}
\left[
\|\xi_{k,s}^t\|^2
\mid
 \mathcal G_{t,s}
\right]
\le
\Sigma^2.
\]

Define
\[
A
:=
\frac{1}{M}
\sum_{k\in S_t}
\sum_{s=0}^{K-1}
a_k
=
K
\frac{1}{M}
\sum_{k\in S_t}
a_k,
\]
\[
B
:=
\frac{1}{M}
\sum_{k\in S_t}
\sum_{s=0}^{K-1}
d_k^s,
\qquad
C
:=
\frac{1}{M}
\sum_{k\in S_t}
\sum_{s=0}^{K-1}
\xi_{k,s}^t.
\]
Thus
\[
\Delta^t
=
-\eta_\ell(A+B+C).
\]

\paragraph{(i) Bounding the main client-sampling term \(A\).}
Because \(\theta^t\) and \(\wv^t\) are
\(\mathcal F_t\)-measurable, the vectors
\[
a_k
=
\nabla L_{\wv^t}^{(k)}(\theta^t),
\qquad
k=1,\ldots,N,
\]
form a fixed finite population conditional on \(\mathcal F_t\).
Define its population mean by
\[
\mu_t
:=
\frac{1}{N}
\sum_{k=1}^N a_k
=
\nabla L_{\wv^t}(\theta^t).
\]

Therefore,
\[
\frac{1}{M}
\sum_{k\in S_t}a_k
=
\mu_t
+
\frac{1}{M}
\sum_{k\in S_t}
(a_k-\mu_t).
\]
Using
\(\|x+y\|^2\leq2\|x\|^2+2\|y\|^2\),
\begin{align}
\mathbb{E}
\left[
\|A\|^2
\mid
\mathcal F_t
\right]
\leq\;&
2K^2
\|\mu_t\|^2
\nonumber\\
&+
2K^2
\mathbb{E}
\left[
\left.
\left\|
\frac{1}{M}
\sum_{k\in S_t}
(a_k-\mu_t)
\right\|^2
\right|
\mathcal F_t
\right].
\label{eq:A-decomp-stale}
\end{align}

To evaluate the second term, let
\[
I_k
:=
\mathbf 1\{k\in S_t\}.
\]
Since \(S_t\) is sampled uniformly without replacement and is
independent of \(\mathcal F_t\),
\[
\mathbb{E}[I_k\mid\mathcal F_t]
=
\frac{M}{N},
\]
and, for \(k\neq\ell\),
\[
\mathbb{E}
[I_kI_\ell\mid\mathcal F_t]
=
\frac{M(M-1)}{N(N-1)}.
\]
Since
\[
\sum_{k=1}^N(a_k-\mu_t)=0,
\]
the standard finite-population variance identity gives
\begin{align}
&
\mathbb{E}
\left[
\left.
\left\|
\frac{1}{M}
\sum_{k\in S_t}
(a_k-\mu_t)
\right\|^2
\right|
\mathcal F_t
\right]
\nonumber\\
&\qquad=
\frac{N-M}{M(N-1)}
\frac{1}{N}
\sum_{k=1}^N
\|a_k-\mu_t\|^2.
\label{eq:finite-pop-stale}
\end{align}

Moreover,
\[
a_k-\mu_t
=
\sum_{i=1}^m
w_i^t
\left(
\nabla f_i^{(k)}(\theta^t)
-
\nabla f_i(\theta^t)
\right).
\]
Because \(\wv^t\in\Delta_m\) and the squared norm is convex,
\[
\|a_k-\mu_t\|^2
\le
\sum_{i=1}^m
w_i^t
\left\|
\nabla f_i^{(k)}(\theta^t)
-
\nabla f_i(\theta^t)
\right\|^2.
\]
Averaging over clients and using the global-iterate clause of the
trajectory-wise condition in Assumption~\ref{as:hetero},
\[
\frac{1}{N}
\sum_{k=1}^N
\|a_k-\mu_t\|^2
\le
\sum_{i=1}^m
w_i^t\zeta_i^2
\le
\sum_{i=1}^m
\zeta_i^2.
\]
Consequently,
\begin{align}
\mathbb{E}
\left[
\|A\|^2
\mid
\mathcal F_t
\right]
\leq\;&
2K^2
\left\|
\nabla L_{\wv^t}(\theta^t)
\right\|^2
\nonumber\\
&+
2K^2
\mathcal H_N(p;\zeta).
\label{eq:A-bound-stale}
\end{align}
}

{
\paragraph{(ii) Stochastic-gradient noise term \(C\).}
Define
\[
H_s:=\sum_{k\in S_t}\xi_{k,s}^t,
\qquad
C=\frac1M\sum_{s=0}^{K-1}H_s.
\]
For a fixed local step \(s\), the definition of \(\kappa_M\) gives
\[
\mathbb E\left[
\left.
\|H_s\|^2
\right|
\mathcal G_{t,s}
\right]
\leq M^2\kappa_M\Sigma^2.
\]
For \(s<u\), \(H_s\) is
\(\mathcal G_{t,u}\)-measurable, while
\(\mathbb E[H_u\mid\mathcal G_{t,u}]=0\). Therefore all cross terms
between different local steps vanish by the tower property:
\[
\mathbb E\langle H_s,H_u\rangle
=
\mathbb E\left[
\left\langle H_s,
\mathbb E[H_u\mid\mathcal G_{t,u}]
\right\rangle
\right]
=0.
\]
Consequently,
\[
\mathbb E\left[
\left.
\|C\|^2
\right|
\mathcal F_t
\right]
\leq
\frac{1}{M^2}
\sum_{s=0}^{K-1}
\mathbb E\left[
\left.
\|H_s\|^2
\right|
\mathcal F_t
\right]
\leq
\kappa_M K\Sigma^2.
\]
}

{
\paragraph{(iii) Client-drift term \(B\).}
For
\[
d_k^s
:=
\nabla L_{\wv^t}^{(k)}(\theta_k^s)
-
\nabla L_{\wv^t}^{(k)}(\theta^t),
\qquad
D_s
:=
\frac1M\sum_{k\in S_t}d_k^s,
\]
use the virtual trajectory from Lemma~\ref{lem:grad2-derived} to
write
\[
d_k^s
=
\left[
\nabla L_{\wv^t}^{(k)}(\theta_k^s)
-
\nabla L_{\wv^t}^{(k)}(v_{t,s})
\right]
+
\left[
\nabla L_{\wv^t}^{(k)}(v_{t,s})
-
\nabla L_{\wv^t}^{(k)}(\theta^t)
\right].
\]
The \(L_w\)-smoothness of the local scalarized objectives implies
\[
\|d_k^s\|
\leq
L_w\|\theta_k^s-v_{t,s}\|
+
L_w\|v_{t,s}-\theta^t\|.
\]
Moreover,
\[
\|v_{t,s}-\theta^t\|
\leq
\eta_\ell sG_{\max},
\]
because \(\|\nabla L_{\wv^t}(v_{t,u})\|\leq G_{\max}\). Thus,
by Jensen's inequality and
\eqref{eq:virtual-path-deviation},
\begin{align}
\mathbb E\left[
\left.
\|D_s\|^2
\right|\mathcal F_t
\right]
&\leq
\mathbb E\left[
\left.
\frac1M\sum_{k\in S_t}\|d_k^s\|^2
\right|\mathcal F_t
\right]
\nonumber\\
&\leq
2L_w^2
\left(
D_{t,s}+\eta_\ell^2s^2G_{\max}^2
\right)
\nonumber\\
&\leq
L_w^2\eta_\ell^2s^2\bar G^2.
\label{eq:Ds_second_moment}
\end{align}
Here \(\bar G^2=2(G_{\max}^2+C_\delta Q)\) is the
constant introduced in Lemma~\ref{lem:grad2-derived}.

Since \(B=\sum_{s=0}^{K-1}D_s\), Cauchy--Schwarz gives
\[
\|B\|^2
\leq
K\sum_{s=0}^{K-1}\|D_s\|^2.
\]
Consequently,
\begin{align}
\mathbb E\left[
\left.
\|B\|^2
\right|\mathcal F_t
\right]
&\leq
K L_w^2\eta_\ell^2\bar G^2
\sum_{s=0}^{K-1}s^2
\nonumber\\
&=
\frac{
L_w^2\eta_\ell^2\bar G^2
}{6}
K^2(K-1)(2K-1),
\label{eq:B_second_moment_bound}
\end{align}
where
\[
\sum_{s=0}^{K-1}s^2
=
\frac{K(K-1)(2K-1)}{6}.
\]
Therefore,
\begin{align}
3\eta_\ell^2
\mathbb E\left[
\left.
\|B\|^2
\right|\mathcal F_t
\right]
\leq
\frac{L_w^2\bar G^2}{2}
\eta_\ell^4K^2(K-1)(2K-1).
\label{eq:update_second_moment_drift}
\end{align}
}
{
\paragraph{Combining the three terms.}
Using
\[
\Delta^t
=
-\eta_\ell(A+B+C)
\]
and
\[
\|A+B+C\|^2
\le
3\|A\|^2
+
3\|B\|^2
+
3\|C\|^2,
\]
we obtain
\begin{align}
\mathbb{E}
\left[
\|\Delta^t\|^2
\mid
\mathcal F_t
\right]
\le\;&
3\eta_\ell^2
\mathbb{E}
\left[
\|A\|^2
\mid
\mathcal F_t
\right]
\nonumber\\
&+
3\eta_\ell^2
\mathbb{E}
\left[
\|B\|^2
\mid
\mathcal F_t
\right]
\nonumber\\
&+
3\eta_\ell^2
\mathbb{E}
\left[
\|C\|^2
\mid
\mathcal F_t
\right].
\label{eq:ABC_combination}
\end{align}
}
Substituting the bounds for \(A\), \(B\), and \(C\) gives
{
\begin{align}
\mathbb{E}
\left[
\|\Delta^t\|^2
\mid
\mathcal F_t
\right]
\le\;&
6\eta_\ell^2K^2
\left\|
\nabla L_{\wv^t}(\theta^t)
\right\|^2
+
3\eta_\ell^2K
\kappa_M\Sigma^2
\nonumber\\
&+
6\eta_\ell^2K^2
\mathcal H_N(p;\zeta)
\nonumber\\
&+
\frac{L_w^2\bar G^2}{2}
\eta_\ell^4
K^2(K-1)(2K-1).
\label{eq:update_second_moment_explicit}
\end{align}
}
Therefore,
\eqref{eq:update_second_moment} follows with
\[
C_0=6,
\qquad
C_1=3,
\qquad
C_2=6,
\qquad
C_B=\frac{L_w^2\bar G^2}{2}.
\]
The final term is retained explicitly because it depends on
\(\eta_\ell\) and \(K\). It vanishes when \(K=1\).

{
\paragraph{Stale-weight gradient error.}
Define the discrepancy between the gradient of the scalarized
objective actually used in round \(t\) and the scalarized objective
associated with the ideal current hypervolume weights by
\[
e_t
:=
\nabla L_{\wv^t}(\theta^t)
-
\nabla L_{\wv^{\star,t}}(\theta^t).
\]
Equivalently,
\[
e_t
=
\sum_{i=1}^m
\left(
w_i^t-w_i^{\star,t}
\right)
\nabla f_i(\theta^t).
\]
Let
\[
G_{\max}
:=
\max_{i\in[m]}G_i.
\]
Then, by Lemma~\ref{lem:self-bounded-gradients},
\begin{equation}
\label{eq:grad-weight-error}
\|e_t\|
\le
G_{\max}
\|\wv^t-\wv^{\star,t}\|_1.
\end{equation}

Using Lemma~\ref{lem:stale-weight-discrepancy}, for every
\(t\geq1\),
\begin{align}
\mathbb{E}\|e_t\|^2
\leq\;&
2G_{\max}^2
C(\rho_{\mathrm{clip}},\bm r)^2
\Xi_{\mathrm{loss},T}
\nonumber\\
&+
2G_{\max}^2
C(\rho_{\mathrm{clip}},\bm r)^2
\mathbb{E}
\left[
\left\|
 \bm F(\theta^t)-\bm F(\theta^{t-1})
\right\|_\infty^2
\right].
\label{eq:stale-gradient-error-pre}
\end{align}

We next control the second term through the inter-round model
displacement. Since \(\wv^t\in\Delta_m\),
\[
\left\|
\nabla L_{\wv^t}(\theta^t)
\right\|
\le
\sum_{i=1}^m
w_i^t
\|\nabla f_i(\theta^t)\|
\le
G_{\max}.
\]
Therefore, Lemma~\ref{lem:update-second-moment} implies
\begin{equation}
\label{eq:V-definition}
\mathbb{E}
\left[
\|\Delta^t\|^2
\mid
\mathcal F_t
\right]
\le
\eta_\ell^2K^2\mathcal V,
\end{equation}
where
\[
\mathcal V
:=
C_0G_{\max}^2
+C_1\frac{\Sigma^2}{K}
+
C_2\mathcal H_N(p;\zeta)
+
C_B\eta_\ell^2
(K-1)(2K-1).
\]
Since
\[
\theta^{t+1}-\theta^t
=
\eta_g\Delta^t,
\]
we obtain
\begin{equation}
\label{eq:inter-round-model-movement}
\mathbb{E}
\left[
\|\theta^{t+1}-\theta^t\|^2
\right]
\le
\eta_{\mathrm{eff}}^2
\mathcal V,
\qquad
\eta_{\mathrm{eff}}
=
\eta_g\eta_\ell K.
\end{equation}

By Lemma~\ref{lem:self-bounded-gradients}, each global objective is
\(G_i\)-Lipschitz on the region containing consecutive global
iterates. Hence
\[
\left\|
 \bm F(\theta^t)-\bm F(\theta^{t-1})
\right\|_\infty
\le
G_{\max}
\|\theta^t-\theta^{t-1}\|.
\]
Combining this relation with
\eqref{eq:inter-round-model-movement} and
\eqref{eq:stale-gradient-error-pre} gives, for \(t\geq1\),
\begin{equation}
\label{eq:stale-gradient-error}
\mathbb{E}\|e_t\|^2
\le
2G_{\max}^2C(\rho_{\mathrm{clip}},\bm r)^2
\Xi_{\mathrm{loss},T}
+
2G_{\max}^4C(\rho_{\mathrm{clip}},\bm r)^2
\eta_{\mathrm{eff}}^2
\mathcal V.
\end{equation}

For the initialization round, both
\(\wv^0\) and \(\wv^{\star,0}\) belong to \(\Delta_m\). Therefore,
\[
\|\wv^0-\wv^{\star,0}\|_1
\le
2,
\qquad
\|e_0\|
\le
2G_{\max}.
\]
The resulting initialization contribution occurs only once and will
therefore produce an \(O(T^{-1})\) term after telescoping.
}
{
\paragraph{Relating the actual weighted gradient to the
log-hypervolume gradient.}
By the definition of \(e_t\) and the ideal colinearity relation
\eqref{eq:phi-lw-colinear-app},
\begin{equation}
\label{eq:actual-gradient-decomposition}
\nabla L_{\wv^t}(\theta^t)
=
\frac{1}{c^t}
\nabla\Phi(\theta^t)
+
e_t.
\end{equation}
Consequently,
\begin{align}
\left\|
\nabla L_{\wv^t}(\theta^t)
\right\|^2
&\le
2
\left\|
\nabla L_{\wv^{\star,t}}(\theta^t)
\right\|^2
+
2\|e_t\|^2
\nonumber\\
&\le
\frac{2}{c_{\min}^2}
\left\|
\nabla\Phi(\theta^t)
\right\|^2
+
2\|e_t\|^2.
\label{eq:actual-gradient-norm-bound}
\end{align}

Substituting \eqref{eq:actual-gradient-norm-bound} into
Lemma~\ref{lem:update-second-moment} yields
\begin{align}
\mathbb{E}
\left[
\|\Delta^t\|^2
\mid
\mathcal F_t
\right]
\leq\;&
\frac{
2C_0\eta_\ell^2K^2
}{
c_{\min}^2
}
\left\|
\nabla\Phi(\theta^t)
\right\|^2
+
2C_0\eta_\ell^2K^2
\|e_t\|^2
\nonumber\\
&+
C_1\eta_\ell^2K
\kappa_M\Sigma^2
+
C_2\eta_\ell^2K^2
\mathcal H_N(p;\zeta)
\nonumber\\
&+
C_B\eta_\ell^4K^2
(K-1)(2K-1).
\label{eq:delta-second-moment-conv}
\end{align}
}
{
\paragraph{One-round descent with stale estimated weights.}
By Lemma~\ref{lem:smooth-phi} and
\[
\theta^{t+1}
=
\theta^t+\eta_g\Delta^t,
\]
conditioning on \(\mathcal F_t\) gives
\begin{equation}
\label{eq:phi-descent-1}
\begin{aligned}
\mathbb{E}
\left[
\Phi(\theta^{t+1})
\mid
\mathcal F_t
\right]
\le\;&
\Phi(\theta^t)
+
\eta_g
\left\langle
\nabla\Phi(\theta^t),
\mathbb{E}
\left[
\Delta^t
\mid
\mathcal F_t
\right]
\right\rangle
\\
&+
\frac{L_\Phi\eta_g^2}{2}
\mathbb{E}
\left[
\|\Delta^t\|^2
\mid
\mathcal F_t
\right].
\end{aligned}
\end{equation}

Using \eqref{eq:conditional_mean_update},
\begin{align}
\mathbb{E}
\left[
\Phi(\theta^{t+1})
\mid
\mathcal F_t
\right]
\leq\;&
\Phi(\theta^t)
-
\eta_{\mathrm{eff}}
\left\langle
\nabla\Phi(\theta^t),
\nabla L_{\wv^t}(\theta^t)
\right\rangle
\nonumber\\
&+
\eta_g
\left\langle
\nabla\Phi(\theta^t),
b_t
\right\rangle
+
\frac{L_\Phi\eta_g^2}{2}
\mathbb{E}
\left[
\|\Delta^t\|^2
\mid
\mathcal F_t
\right].
\label{eq:phi-descent-2}
\end{align}

From \eqref{eq:actual-gradient-decomposition},
\begin{align}
\left\langle
\nabla\Phi(\theta^t),
\nabla L_{\wv^t}(\theta^t)
\right\rangle
=
\frac{1}{c^t}
\left\|
\nabla\Phi(\theta^t)
\right\|^2
+
\left\langle
\nabla\Phi(\theta^t),
e_t
\right\rangle .
\end{align}
Using \(c^t\le c_{\max}\) and Young's inequality,
\[
\left|
\left\langle
\nabla\Phi(\theta^t),
e_t
\right\rangle
\right|
\le
\frac{1}{4c_{\max}}
\left\|
\nabla\Phi(\theta^t)
\right\|^2
+
c_{\max}
\|e_t\|^2.
\]
Therefore,
\begin{align}
-\eta_{\mathrm{eff}}
\left\langle
\nabla\Phi(\theta^t),
\nabla L_{\wv^t}(\theta^t)
\right\rangle
\le\;&
-\frac{
3\eta_{\mathrm{eff}}
}{
4c_{\max}
}
\left\|
\nabla\Phi(\theta^t)
\right\|^2
\nonumber\\
&+
c_{\max}
\eta_{\mathrm{eff}}
\|e_t\|^2.
\label{eq:stale-direction-descent}
\end{align}

For the local-drift bias, Young's inequality similarly gives
\begin{align}
\eta_g
\left|
\left\langle
\nabla\Phi(\theta^t),
b_t
\right\rangle
\right|
\le\;&
\frac{
\eta_{\mathrm{eff}}
}{
4c_{\max}
}
\left\|
\nabla\Phi(\theta^t)
\right\|^2
\nonumber\\
&+
\eta_g
\frac{c_{\max}}{\eta_\ell K}
\|b_t\|^2.
\label{eq:drift-young-stale}
\end{align}

Finally, substituting
\eqref{eq:delta-second-moment-conv},
\eqref{eq:stale-direction-descent}, and
\eqref{eq:drift-young-stale} into
\eqref{eq:phi-descent-2}, and choosing the constant in
Assumption~\ref{as:steps} as specified there, the
smoothness-generated gradient term is bounded by
\[
\frac{L_\Phi C_0\eta_{\mathrm{eff}}^2}{c_{\min}^2}
\leq
\frac{\eta_{\mathrm{eff}}}{8c_{\max}}.
\]
The two Young inequalities already leave
\(-\eta_{\mathrm{eff}}\|\nabla\Phi(\theta^t)\|^2/(2c_{\max})\);
therefore the following one-round inequality holds with the displayed
\(-\eta_{\mathrm{eff}}/(4c_{\max})\) coefficient:
\begin{align}
\mathbb{E}
\left[
\Phi(\theta^{t+1})
\mid
\mathcal F_t
\right]
\leq\;&
\Phi(\theta^t)
-
\frac{
\eta_{\mathrm{eff}}
}{
4c_{\max}
}
\left\|
\nabla\Phi(\theta^t)
\right\|^2
\nonumber\\
&+
C_e\eta_{\mathrm{eff}}
\|e_t\|^2
+
\eta_g
\frac{c_{\max}}{\eta_\ell K}
\|b_t\|^2
\nonumber\\
&+
\frac{L_\Phi C_1}{2}
\eta_g^2\eta_\ell^2K
\Sigma^2
\nonumber\\
&+
\frac{L_\Phi C_2}{2}
\eta_g^2\eta_\ell^2K^2
\mathcal H_N(p;\zeta)
\nonumber\\
&+
\frac{L_\Phi C_B}{2}
\eta_g^2\eta_\ell^4K^2
(K-1)(2K-1),
\label{eq:phi-one-step-final}
\end{align}
where
\[
\begin{aligned}
C_e&:=c_{\max}+L_\Phi C_0\bar\eta_{\mathrm{eff}}
=c_{\max}+6L_\Phi\bar\eta_{\mathrm{eff}},
& C_\Sigma&:=\frac{L_\Phi C_1}{2},\\
C_H&:=\frac{L_\Phi C_2}{2},
& C_B'&:=\frac{L_\Phi C_B}{2}.
\end{aligned}
\]
}

{
\paragraph{Telescoping and stationarity rate.}
For convenience, define
\[
C_{\mathrm{est}}
:=
2G_{\max}^2 C(\rho_{\mathrm{clip}},\bm r)^2,
\qquad
C_{\mathrm{st}}
:=
2G_{\max}^4 C(\rho_{\mathrm{clip}},\bm r)^2.
\]
By \eqref{eq:stale-gradient-error}, for every \(t\geq1\),
\[
\mathbb{E}\|e_t\|^2
\le
C_{\mathrm{est}}\Xi_{\mathrm{loss},T}
+
C_{\mathrm{st}}
\eta_{\mathrm{eff}}^2
\mathcal V.
\]
For \(t=0\), we have
\[
\|e_0\|^2
\le
4G_{\max}^2.
\]
Hence, for every \(t\geq0\), the unified bound
\begin{equation}
\label{eq:unified-stale-error}
\mathbb{E}\|e_t\|^2
\le
C_{\mathrm{est}}\Xi_{\mathrm{loss},T}
+
C_{\mathrm{st}}
\eta_{\mathrm{eff}}^2
\mathcal V
+
4G_{\max}^2
\mathbf{1}_{\{t=0\}}
\end{equation}
holds.

Taking total expectation in
\eqref{eq:phi-one-step-final} and using the tower property
\[
\mathbb{E}
\left[
\Phi(\theta^{t+1})
\right]
=
\mathbb{E}
\left[
\mathbb{E}
\left[
\Phi(\theta^{t+1})
\mid
\mathcal F_t
\right]
\right],
\]
Using the explicit definitions
\[
C_\Sigma:=\frac{L_\Phi C_1}{2},
\qquad
C_H:=\frac{L_\Phi C_2}{2},
\qquad
C_B':=\frac{L_\Phi C_B}{2},
\]
and the value of \(C_e\) specified in
\eqref{eq:phi-one-step-final}, we obtain
\begin{align}
\mathbb{E}
\left[
\Phi(\theta^{t+1})
\right]
\leq\;&
\mathbb{E}
\left[
\Phi(\theta^t)
\right]
-
\alpha
\mathbb{E}
\left[
\left\|
\nabla\Phi(\theta^t)
\right\|^2
\right]
\nonumber\\
&+
C_\Sigma
\eta_g^2\eta_\ell^2K
\Sigma^2
+
C_H
\eta_g^2\eta_\ell^2K^2
\mathcal H_N(p;\zeta)
\nonumber\\
&+
\eta_g
\frac{c_{\max}}{\eta_\ell K}
\mathbb{E}\|b_t\|^2
\nonumber\\
&+
C_B'
\eta_g^2\eta_\ell^4K^2
(K-1)(2K-1)
\nonumber\\
&+
C_e\eta_{\mathrm{eff}}
\mathbb{E}\|e_t\|^2,
\label{eq:one_round_with_drift}
\end{align}
where
\[
\alpha
=
\frac{\eta_{\mathrm{eff}}}{4c_{\max}},
\qquad
\eta_{\mathrm{eff}}
:=
\eta_g\eta_\ell K.
\]

Substituting \eqref{eq:unified-stale-error} and using the explicit
choice of \(C_e\) above gives
\[
C_{\mathrm{loss}}
:=C_eC_{\mathrm{est}},
\qquad
C_{\mathrm{stale}}
:=C_eC_{\mathrm{st}},
\qquad
C_{\mathrm{init}}
:=4C_eG_{\max}^2.
\]
Thus
\begin{align}
\mathbb{E}
\left[
\Phi(\theta^{t+1})
\right]
\leq\;&
\mathbb{E}
\left[
\Phi(\theta^t)
\right]
-
\alpha
\mathbb{E}
\left[
\left\|
\nabla\Phi(\theta^t)
\right\|^2
\right]
\nonumber\\
&+
C_\Sigma
\eta_g^2\eta_\ell^2K
\Sigma^2
+
C_H
\eta_g^2\eta_\ell^2K^2
\mathcal H_N(p;\zeta)
\nonumber\\
&+
\eta_g
\frac{c_{\max}}{\eta_\ell K}
\mathbb{E}\|b_t\|^2
\nonumber\\
&+
C_B'
\eta_g^2\eta_\ell^4K^2
(K-1)(2K-1)
\nonumber\\
&+
C_{\mathrm{loss}}
\eta_{\mathrm{eff}}
\Xi_{\mathrm{loss},T}
+
C_{\mathrm{stale}}
\eta_{\mathrm{eff}}^3
\mathcal V
\nonumber\\
&+
C_{\mathrm{init}}
\eta_{\mathrm{eff}}
\mathbf{1}_{\{t=0\}}.
\label{eq:one_round_stale_expanded}
\end{align}

Using Lemma~\ref{lem:local-drift-bias}, the conditional-mean
local-drift contribution satisfies
\begin{align}
\eta_g
\frac{c_{\max}}{\eta_\ell K}
\mathbb{E}\|b_t\|^2
&\le
\frac{
c_{\max}L_w^2\bar G^2
}{
4
}
\eta_g\eta_\ell^3
K(K-1)^2
\nonumber\\
&=
\widetilde C_b
\eta_{\mathrm{eff}}
\eta_\ell^2
(K-1)^2,
\label{eq:mean_drift_contribution}
\end{align}
where
\[
\widetilde C_b
:=
\frac{
c_{\max}L_w^2\bar G^2
}{
4
}.
\]

Therefore, the one-round inequality can be written as
\begin{align}
\mathbb{E}
\left[
\Phi(\theta^{t+1})
\right]
\leq\;&
\mathbb{E}
\left[
\Phi(\theta^t)
\right]
-
\alpha
\mathbb{E}
\left[
\left\|
\nabla\Phi(\theta^t)
\right\|^2
\right]
\nonumber\\
&+
C_\Sigma
\eta_g^2\eta_\ell^2K
\Sigma^2
+
C_H
\eta_g^2\eta_\ell^2K^2
\mathcal H_N(p;\zeta)
\nonumber\\
&+
\widetilde C_b
\eta_{\mathrm{eff}}
\eta_\ell^2(K-1)^2
\nonumber\\
&+
C_B'
\eta_g^2\eta_\ell^4K^2
(K-1)(2K-1)
\nonumber\\
&+
C_{\mathrm{loss}}
\eta_{\mathrm{eff}}
\Xi_{\mathrm{loss},T}
+
C_{\mathrm{stale}}
\eta_{\mathrm{eff}}^3
\mathcal V
\nonumber\\
&+
C_{\mathrm{init}}
\eta_{\mathrm{eff}}
\mathbf{1}_{\{t=0\}}.
\label{eq:one_round_after_drift_bounds}
\end{align}

\paragraph{Telescoping and final stationarity bound.}
The objective is bounded below along the optimization trajectory.
Indeed, Assumption~\ref{as:slack} and the nonnegativity of the
losses imply
\[
0
<
s_i(\theta)
=
r_i-f_i(\theta)
\le
r_i.
\]
Consequently,
\begin{equation}
\label{eq:phi_lower_bound}
\Phi(\theta)
=
-\sum_{i=1}^m
\log s_i(\theta)
\ge
-\sum_{i=1}^m
\log r_i
=:
\Phi_{\rm lb}.
\end{equation}

Taking total expectations in the unconditional one-round inequality
\eqref{eq:one_round_after_drift_bounds}, summing over the deterministic
range \(t=0,\ldots,T-1\), and using the lower bound
\(\Phi(\theta)\geq\Phi_{\rm lb}\) gives
\begin{align}
\alpha
\sum_{t=0}^{T-1}
\mathbb{E}
\left[
\left\|
\nabla\Phi(\theta^t)
\right\|^2
\right]
\leq\;&
\Phi(\theta^0)-\Phi_{\rm lb}
\nonumber\\
&+
T C_\Sigma
\eta_g^2\eta_\ell^2K
\Sigma^2
\nonumber\\
&+
T C_H
\eta_g^2\eta_\ell^2K^2
\mathcal H_N(p;\zeta)
\nonumber\\
&+
T\widetilde C_b
\eta_{\mathrm{eff}}
\eta_\ell^2(K-1)^2
\nonumber\\
&+
T C_B'
\eta_g^2\eta_\ell^4K^2
(K-1)(2K-1)
\nonumber\\
&+
T C_{\mathrm{loss}}
\eta_{\mathrm{eff}}
\Xi_{\mathrm{loss},T}
\nonumber\\
&+
T C_{\mathrm{stale}}
\eta_{\mathrm{eff}}^3
\mathcal V
\nonumber\\
&+
C_{\mathrm{init}}
\eta_{\mathrm{eff}}.
\label{eq:telescope_before_division}
\end{align}

Dividing this inequality by \(\alpha T\) gives the fixed-horizon
average bound.

\begin{align}
\frac1T
\sum_{t=0}^{T-1}
\mathbb{E}
\left[
\left\|
\nabla\Phi(\theta^t)
\right\|^2
\right]
\leq\;&
\frac{4c_{\max}
(\Phi(\theta^0)-\Phi_{\rm lb})}
{\eta_{\mathrm{eff}}T}
+4c_{\max}C_\Sigma
\frac{\eta_{\mathrm{eff}}}{K}\Sigma^2
\nonumber\\
&+
4c_{\max}C_H\eta_{\mathrm{eff}}\mathcal H_N(p;\zeta)
\nonumber\\
&+
4c_{\max}\widetilde C_b\eta_\ell^2(K-1)^2
\nonumber\\
&+
4c_{\max}C_B'\eta_{\mathrm{eff}}\eta_\ell^2(K-1)(2K-1)
\nonumber\\
&+
4c_{\max}C_{\mathrm{loss}}\Xi_{\mathrm{loss},T}
\nonumber\\
&+
4c_{\max}C_{\mathrm{stale}}\eta_{\mathrm{eff}}^2\mathcal V
\nonumber\\
&+
\frac{4c_{\max}C_{\mathrm{init}}}{T}.
\label{eq:precombined_final_bound}
\end{align}

For the stochastic-gradient term, \(K\geq1\) gives
\(\eta_{\mathrm{eff}}/K\leq\eta_{\mathrm{eff}}\). By
Assumption~\ref{as:steps},
\[
\eta_{\mathrm{eff}}\leq\bar\eta_{\mathrm{eff}},
\qquad
(K-1)^2\leq(K-1)(2K-1).
\]
The right-hand side of
\eqref{eq:precombined_final_bound} is therefore bounded by the
theorem's right-hand side after substituting the explicit constants
listed in the theorem.

This proves the claimed fixed-horizon bound with the explicit
coefficients \(A_1,\ldots,A_7\) stated in
Theorem~\ref{thm:fedhv-explicit}.
\hfill\(\square\)
}


\subsubsection{Proportional-Fairness Background and Local First-Order Proofs}
\label{app:fairness-proofs}

\paragraph{Classical proportional fairness in slack space.}
Let
\[
\mathcal S
:=
\{
\bm s(\theta):\theta\in\Theta
\}
\subset\mathbb{R}_{++}^{m}
\]
denote the achievable slack set.

\begin{definition}[Proportional fairness in slack space]
\label{def:pf}
A point \(\bm s^\star\in\mathcal S\) is proportionally fair
if, for every \(\bm s\in\mathcal S\),
\begin{equation}
\label{eq:global_pf_definition}
\sum_{i=1}^{m}
\frac{s_i-s_i^\star}{s_i^\star}
\le
0.
\end{equation}
\end{definition}

If \(\mathcal S\) is convex and compact, any maximizer of
\[
\sum_{i=1}^{m}\log s_i
\]
over \(\mathcal S\) is proportionally fair in the sense of
Definition~\ref{def:pf}
\citep{kelly1997charging}. This classical statement is included only
for background. The achievable slack set induced by a neural network is
generally nonconvex, and our convergence result establishes stationarity
rather than global optimality. Accordingly, we do not claim that FedHV
obtains a classical global Nash bargaining solution or a globally
proportionally fair slack vector; the paper uses only the local
first-order implication below.

\begin{lemma}[First-order proportional fairness]
\label{lem:pf_local}
Let \(\theta^\star\in\Theta\) satisfy \(s_i(\theta^\star)>0\) for every
\(i\), and suppose that \(\theta^\star\) satisfies the first-order
necessary condition
for maximizing
\[
\Psi(\theta)
=
\sum_{i=1}^{m}\log s_i(\theta)
\]
over \(\Theta\); that is,
\begin{equation}
\label{eq:psi_first_order_condition}
\left\langle
\nabla\Psi(\theta^\star),d
\right\rangle
\le
0,
\qquad
\forall d\in T_\Theta(\theta^\star).
\end{equation}
Then, for every \(d\in T_\Theta(\theta^\star)\),
\begin{equation}
\label{eq:first_order_pf_slack}
\sum_{i=1}^{m}
\frac{
\left\langle
\nabla s_i(\theta^\star),d
\right\rangle
}{
s_i(\theta^\star)
}
\le
0.
\end{equation}
Equivalently,
\begin{equation}
\label{eq:first_order_pf_loss}
\sum_{i=1}^{m}
\frac{
\left\langle
\nabla f_i(\theta^\star),d
\right\rangle
}{
r_i-f_i(\theta^\star)
}
\ge
0.
\end{equation}
\end{lemma}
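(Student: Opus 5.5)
The plan is to reduce the claim to a chain-rule computation followed by a sign argument. We use the definitions $\Psi=-\Phi$ and $s_i(\theta)=r_i-f_i(\theta)$ and work at $\theta^\star$, where every slack is strictly positive. On this set each $\log s_i$ is differentiable, so $\Psi$ is differentiable near $\theta^\star$. The chain rule then gives
\[
\nabla\Psi(\theta^\star)=\sum_{i=1}^m\frac{\nabla s_i(\theta^\star)}{s_i(\theta^\star)}.
\]
For any direction $d$ we therefore have the linear identity
\[
\langle\nabla\Psi(\theta^\star),d\rangle=\sum_{i=1}^m\frac{\langle\nabla s_i(\theta^\star),d\rangle}{s_i(\theta^\star)}.
\]

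First, we substitute this identity into the first-order necessary condition \eqref{eq:psi_first_order_condition}. That condition says $\langle\nabla\Psi(\theta^\star),d\rangle\le0$ for every $d\in T_\Theta(\theta^\star)$. Applied directly, it yields \eqref{eq:first_order_pf_slack}. No structure of the tangent cone is needed beyond the fact that the condition is stated for all of its elements.

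Second, we use $\nabla s_i=-\nabla f_i$ and $s_i(\theta^\star)=r_i-f_i(\theta^\star)$. Rewriting \eqref{eq:first_order_pf_slack} with these gives $-\sum_i\langle\nabla f_i(\theta^\star),d\rangle/(r_i-f_i(\theta^\star))\le0$. Multiplying by $-1$ reverses the inequality and produces \eqref{eq:first_order_pf_loss}. The same substitution read in the opposite direction shows the two forms are equivalent.

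There is no genuine obstacle. The only point requiring care is justifying differentiability of $\Psi$ at $\theta^\star$. This follows because the $f_i$ are smooth (Assumption~\ref{as:smooth}) and the logarithm is smooth on $(0,\infty)$, with the positivity hypothesis $s_i(\theta^\star)>0$ keeping each argument in that range. We will also point out that the resulting inequality is the first-order (linearized) analogue of Definition~\ref{def:pf}, with $s_i-s_i^\star$ replaced by the directional derivative $\langle\nabla s_i(\theta^\star),d\rangle$.
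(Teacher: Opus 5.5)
Your proposal is correct and follows essentially the same route as the paper: apply the chain rule to get $\nabla\Psi(\theta^\star)=\sum_i \nabla s_i(\theta^\star)/s_i(\theta^\star)$, substitute this into the first-order condition to obtain the slack-space inequality, then use $\nabla s_i=-\nabla f_i$ and flip the sign to get the loss-space form. Your explicit justification that $\Psi$ is differentiable at $\theta^\star$ (smoothness of $f_i$ together with $s_i(\theta^\star)>0$) is a small addition that the paper leaves implicit.
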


\begin{proof}
Differentiating the reference-dependent Nash-welfare surrogate gives
\[
\nabla\Psi(\theta)
=
\sum_{i=1}^{m}
\frac{1}{s_i(\theta)}
\nabla s_i(\theta).
\]
Therefore,
\[
\left\langle
\nabla\Psi(\theta^\star),d
\right\rangle
=
\sum_{i=1}^{m}
\frac{
\left\langle
\nabla s_i(\theta^\star),d
\right\rangle
}{
s_i(\theta^\star)
}.
\]
The first inequality follows from
\eqref{eq:psi_first_order_condition}. Since
\[
\nabla s_i(\theta)
=
-\nabla f_i(\theta),
\]
the equivalent loss-space expression follows immediately.
\end{proof}

\paragraph{Proof of Lemma~\ref{lem:pfgap_bound}.}
\label{app:proof-pfgap}

By Definition~\ref{def:pfgap},
\begin{align}
\mathrm{PFgap}(\theta)
&=
\sup_{\substack{
d\in T_\Theta(\theta)\\
\|d\|\le1
}}
\left\langle
\nabla\Psi(\theta),d
\right\rangle
\nonumber\\
&\le
\sup_{\|d\|\le1}
\left\langle
\nabla\Psi(\theta),d
\right\rangle
=
\|\nabla\Psi(\theta)\|.
\end{align}
The last equality is the dual characterization of the
Euclidean norm. Since \(\Psi=-\Phi\),
\[
\|\nabla\Psi(\theta)\|
=
\|\nabla\Phi(\theta)\|.
\]

If \(\Theta=\mathbb{R}^d\), then
\(T_\Theta(\theta)=\mathbb{R}^d\). When
\(\nabla\Psi(\theta)\neq0\), choosing
\[
d^\star
=
\frac{
\nabla\Psi(\theta)
}{
\|\nabla\Psi(\theta)\|
}
\]
attains the upper bound. When \(\nabla\Psi(\theta)=0\), both
sides are zero. Hence equality holds in the unconstrained
case.
\hfill\(\square\)

\paragraph{Proof of Corollary~\ref{cor:fedhv_pf}.}
\label{app:proof-fedhv-pf}

Lemma~\ref{lem:pfgap_bound} implies, for every \(t\),
\[
\mathrm{PFgap}(\theta^t)^2
\le
\|\nabla\Phi(\theta^t)\|^2.
\]
For the index \(t^\star\) defined in
Corollary~\ref{cor:fedhv_pf}, the Cauchy--Schwarz inequality
therefore gives
\begin{align}
\mathbb{E}
\left[
\mathrm{PFgap}(\theta^{t^\star})
\right]
&\le
\left(
\mathbb{E}
\left[
\mathrm{PFgap}(\theta^{t^\star})^2
\right]
\right)^{1/2}
\nonumber\\
&\le
\left(
\mathbb{E}
\left[
\|\nabla\Phi(\theta^{t^\star})\|^2
\right]
\right)^{1/2}
\nonumber\\
&=
\left(
 \min_{0\le t<T}
\mathbb{E}
\left[
\|\nabla\Phi(\theta^t)\|^2
\right]
\right)^{1/2}.
\end{align}
This proves~\eqref{eq:fedhv_pfgap_bound}.
\hfill\(\square\)

\section{Experimental Details and Reproducibility}
\label{app:experimental-details}

\subsection{Benchmark Construction}
\label{app:benchmark-construction}
\paragraph{\textsc{MultiMNIST}.}
Each example contains two spatially shifted MNIST digits in one image;
overlapping pixels are combined by the maximum intensity. The two tasks
predict the left and right digit labels~\citep{lecun1998gradient}.

\paragraph{\textsc{MNIST--FMNIST}.}
Each example combines one MNIST digit and one Fashion-MNIST item at
offset locations in a common grayscale image. The two tasks are digit
classification and Fashion-MNIST category
classification~\citep{lecun1998gradient,xiao2017fashion}.

\paragraph{\textsc{CIFAR10--MNIST}.}
A semi-transparent MNIST digit is superimposed on a CIFAR-10 image. The
two tasks are CIFAR-10 object classification and MNIST digit
classification. This benchmark creates a pronounced difference in task
difficulty and is therefore useful for examining worst-task
behavior~\citep{lecun1998gradient,krizhevsky2009learning}.

\paragraph{\textsc{CelebA5}.}
CelebA contains 40 binary facial attributes. Following the FedCMOO
benchmark construction, we partition them into five fixed objectives of
eight attributes each. The exact grouping is given in
Table~\ref{tab:celeba5-attribute-groups}~\citep{liu2015deep,askin2025fedcmoo}.

Figure~\ref{fig:dataset-examples} visually summarizes the four
benchmark constructions. For \textsc{CelebA5}, each displayed cue is a
representative positive attribute from one eight-attribute objective;
the visualization therefore complements, rather than replaces, the
complete grouping in Table~\ref{tab:celeba5-attribute-groups}.

\begin{figure}[t]
    \centering
    \includegraphics[width=0.96\textwidth]{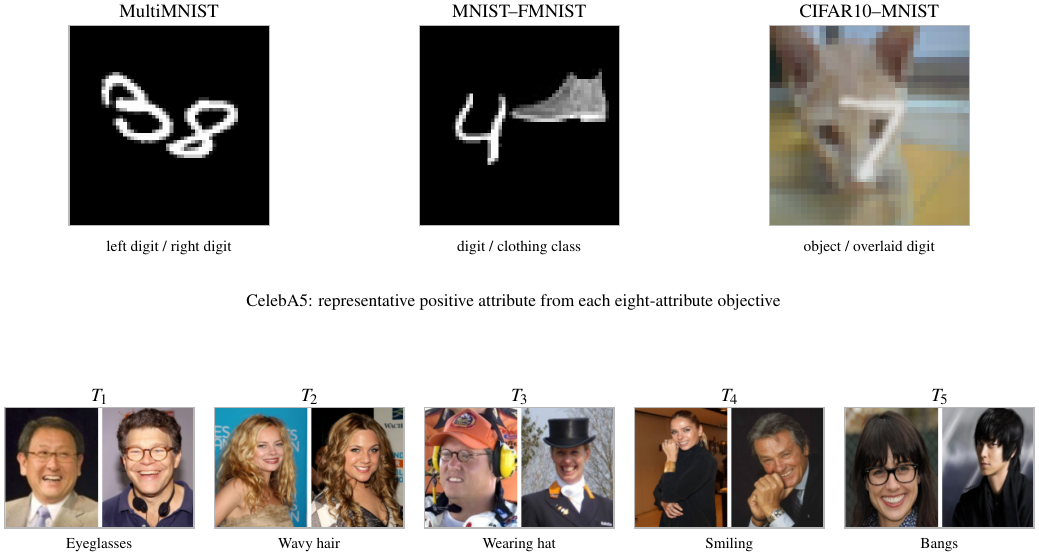}
    \caption{Dataset and benchmark examples. The first row illustrates
    the two shifted digits in \textsc{MultiMNIST}, the digit/clothing
    pair in \textsc{MNIST--FMNIST}, and the overlaid digit/object pair
    in \textsc{CIFAR10--MNIST}. The second row shows two positive
    \textsc{CelebA} examples for one representative attribute from each
    objective group: Eyeglasses ($T_1$), Wavy Hair ($T_2$), Wearing Hat
    ($T_3$), Smiling ($T_4$), and Bangs ($T_5$). Source datasets are
    described by \citet{lecun1998gradient,xiao2017fashion,krizhevsky2009learning,liu2015deep}.}
    \label{fig:dataset-examples}
\end{figure}

\begin{table}[H]
\centering
\caption{The five multi-label objectives used in \textsc{CelebA5}.}
\label{tab:celeba5-attribute-groups}
\small
\setlength{\tabcolsep}{5pt}
\begin{tabularx}{\textwidth}{c>{\raggedright\arraybackslash}X}
\toprule
Objective & Attributes \\
\midrule
$T_1$ & 5-o'Clock Shadow, Blond Hair, Bags Under Eyes, Eyeglasses, Mustache, Wavy Hair, Oval Face, Rosy Cheeks \\
$T_2$ & Arched Eyebrows, Black Hair, Big Nose, Blurry, Goatee, Straight Hair, Pale Skin, Wearing Earrings \\
$T_3$ & Attractive, Brown Hair, Bushy Eyebrows, Double Chin, Mouth Slightly Open, Receding Hairline, Wearing Hat, Young \\
$T_4$ & Bald, Big Lips, Chubby, Heavy Makeup, High Cheekbones, Male, Sideburns, Smiling \\
$T_5$ & Bangs, Gray Hair, Narrow Eyes, No Beard, Pointy Nose, Wearing Lipstick, Wearing Necklace, Wearing Necktie \\
\bottomrule
\end{tabularx}
\end{table}

\subsection{Objective and Metric Definitions}
\label{app:objective-definitions}
The reduction convention is part of the definition of FedHV because
rescaling an objective changes its slack and hence its adaptive weight.
For each two-objective benchmark and client $k$, task $i$ uses mean
multiclass cross-entropy,
\begin{equation}
f_i^{(k)}(\theta)
=\frac{1}{n_k}\sum_{j=1}^{n_k}
\operatorname{CE}\!\left(z_i(x_{k,j};\theta),y_{i,k,j}\right),
\label{eq:two-task-objectives}
\end{equation}
where $i$ denotes the left/right digit, digit/clothing, or
CIFAR-10/MNIST head, respectively. Task accuracy is the fraction of
examples whose top-1 prediction from that head is correct.

For \textsc{CelebA5}, let $G_i$ be the eight binary attributes assigned
to objective $i$ in Table~\ref{tab:celeba5-attribute-groups}. The client
objective is the mean binary cross-entropy over both examples and the
eight attributes,
\begin{equation}
f_i^{(k)}(\theta)
=\frac{1}{8n_k}\sum_{j=1}^{n_k}\sum_{a\in G_i}
\operatorname{BCE}\!\left(z_a(x_{k,j};\theta),y_{a,k,j}\right).
\label{eq:celeba-objectives}
\end{equation}
The corresponding task accuracy averages the thresholded binary
attribute accuracy over the same $8n_k$ predictions. Objective reports,
training references, MeanLoss, and evaluation HV all use these mean
reductions; no objective is summed over its classes or attributes.

\subsection{Model Architectures}
\label{app:model-architectures}
The model architecture is shared by all compared methods within a benchmark.
\textsc{MultiMNIST} and \textsc{MNIST--FMNIST} use a LeNet-style shared
encoder with $3\times3$ convolutions $1\!\to\!10\!\to\!15$, max pooling,
a $540\!\to\!50$ fully connected layer, and dropout $0.5$. Each task
head uses $50\!\to\!50\!\to\!10$ fully connected layers with dropout
$0.5$ and log-softmax output. \textsc{CIFAR10--MNIST} uses two
$3\times3$ $64$-channel convolutions with max pooling, followed by
$4096\!\to\!384\!\to\!192$ fully connected layers with dropout $0.5$;
each task decoder is $192\!\to\!100\!\to\!10$. \textsc{CelebA5} uses a
ResNet-18-style shared backbone with BasicBlock depths $[2,2,2,2]$ and
GroupNorm with two groups, producing a 512-dimensional representation;
each of the five decoders is a linear $512\!\to\!8$ sigmoid head. We
use the implementation's standard PyTorch initialization and no
pretrained weights.

\subsection{Federated Partitioning}
\label{app:partitioning}
Table~\ref{tab:partition-settings} lists the six main benchmark
settings. The partition seed is fixed to $10$ across methods within each
setting.

\begin{table}[H]
\centering
\caption{Federated partitioning and default participation in the six
main settings. The all-label strategy forms Dirichlet allocation strata
from all available task-label coordinates, whereas the first-label
strategy forms strata using only the first label coordinate. Smaller
$\alpha$ yields more concentrated client label distributions
\citep{hsu2019measuring}.}
\label{tab:partition-settings}
\small
\setlength{\tabcolsep}{5pt}
\begin{tabular}{lccccc}
\toprule
Setting & $N$ & $M$ & Samples/client & $\alpha$ & Partition rule \\
\midrule
\textsc{MultiMNIST} & 30 & 20 & 600 & 0.15 & all-label Dirichlet \\
\textsc{MNIST--FMNIST} & 30 & 10 & 500 & 0.30 & all-label Dirichlet \\
\textsc{CelebA5}-A & 30 & 10 & 500 & 0.15 & all-label Dirichlet \\
\textsc{CelebA5}-B & 30 & 10 & 500 & 0.30 & first-label Dirichlet \\
\textsc{CIFAR10--MNIST}-A & 30 & 10 & 500 & 0.10 & all-label Dirichlet \\
\textsc{CIFAR10--MNIST}-B & 30 & 10 & 500 & 0.30 & all-label Dirichlet \\
\bottomrule
\end{tabular}
\end{table}
The two \textsc{CelebA5} settings jointly vary partition strategy and
concentration: setting A uses all-label allocation with $\alpha=0.15$,
whereas setting B uses first-label allocation with $\alpha=0.30$. The
two \textsc{CIFAR10--MNIST} settings retain the same all-label strategy
at $\alpha\in\{0.10,0.30\}$.

\subsection{Training and Evaluation Protocol}
\label{app:training-protocol}
All methods use SGD and the same round budget and local-training
structure within a setting. Table~\ref{tab:shared-training-settings}
summarizes the shared quantities. No local learning-rate scheduler is
used.

\begin{table}[H]
\centering
\caption{Shared training and evaluation settings. $K$ is the number of local SGD steps per selected client.}
\label{tab:shared-training-settings}
\small
\setlength{\tabcolsep}{5pt}
\begin{tabular}{lccccc}
\toprule
Setting & Adaptive rounds & Batch size & $K$ & Momentum & Test period \\
\midrule
\textsc{MultiMNIST} $\alpha=0.15$ & 200 & 32 & 10 & 0 & 3 \\
\textsc{MNIST--FMNIST} $\alpha=0.30$ & 500 & 128 & 10 & 0 & 3 \\
\textsc{CelebA5} $\alpha=0.15$ & 300 & 64 & 5 & 0.9 & 5 \\
\textsc{CelebA5} $\alpha=0.30$ & 300 & 64 & 5 & 0.9 & 5 \\
\textsc{CIFAR10--MNIST} $\alpha=0.10$ & 500 & 128 & 10 & 0 & 3 \\
\textsc{CIFAR10--MNIST} $\alpha=0.30$ & 500 & 128 & 10 & 0 & 3 \\
\bottomrule
\end{tabular}
\end{table}
Final tables use the final scheduled test evaluation. Curves report the
mean across seeds with a $\pm1$ standard-deviation band. The common
evaluation hypervolume reference is
$\bm r^{\rm eval}=3\,\mathbf{1}_m$ for all methods and is distinct from
FedHV's training reference.

\subsection{Baseline Implementations and Learning Rates}
\label{app:baseline-configs}
We evaluate FSMGDA and FedCMOO using their defining algorithm-specific procedures under the common experimental protocol described above. FedCMOO uses its randomized-SVD estimator with additional upload dimension $1$, followed by $1000$ projected server-side weight-update iterations with step size $10^{-3}$ and zero momentum. FSMGDA performs objective-specific local updates and computes the server-side combination weights using the MinNorm/MGDA procedure.

All methods use the same model architecture, client partition, participation schedule, number of communication rounds, and local-update budget within each benchmark, as specified in Tables~\ref{tab:partition-settings} and \ref{tab:shared-training-settings}. Thus, the methods differ principally in how they estimate and update the objective weights and in the resulting computation and communication required by those procedures.

Uniform uses fixed weights $w_i=1/m$ throughout training and otherwise follows the same single-trajectory local optimization and model-aggregation structure as FedHV. It does not perform reference calibration or communicate objective reports for adaptive weighting.

We use method-specific optimization rates rather than imposing a single learning-rate pair across algorithms with different update mechanisms. Table~\ref{tab:method-learning-rates} reports the server and local learning rates, $\eta_g/\eta_\ell$, used for every method and benchmark. Each reported pair is held fixed across the three training seeds.

\begin{table}[H]
\centering
\caption{Server/local learning rates $\eta_g/\eta_\ell$ used in the
reported runs. Each pair is fixed across training seeds.}
\label{tab:method-learning-rates}
\small
\setlength{\tabcolsep}{5pt}
\begin{tabular}{lcccc}
\toprule
Setting & Uniform & FSMGDA & FedCMOO & FedHV \\
\midrule
\textsc{MultiMNIST} $\alpha=0.15$ & $1.0/.3$ & $2.0/.1$ & $1.2/.5$ & $1.2/.3$ \\
\textsc{MNIST--FMNIST} $\alpha=0.30$ & $1.2/.4$ & $2.0/.1$ & $1.2/.5$ & $1.6/.3$ \\
\textsc{CelebA5} $\alpha=0.15$ & $.8/.3$ & $1.6/.3$ & $1.6/.3$ & $1.0/.2$ \\
\textsc{CelebA5} $\alpha=0.30$ & $.8/.3$ & $1.6/.3$ & $1.6/.3$ & $1.0/.2$ \\
\textsc{CIFAR10--MNIST} $\alpha=0.10$ & $1.2/.4$ & $2.0/.1$ & $1.6/.3$ & $1.6/.3$ \\
\textsc{CIFAR10--MNIST} $\alpha=0.30$ & $1.2/.4$ & $1.6/.3$ & $1.6/.3$ & $1.6/.3$ \\
\bottomrule
\end{tabular}
\end{table}

\subsection{FedHV Hyperparameters}
\label{app:fedhv-hyperparameters}
Unless an ablation changes it, FedHV uses full-client objective reports
($B_{\rm rep}=\mathrm{all}$), no update normalization, and numerical
estimated-slack floor $\rho_{\rm clip}=10^{-6}$. The weight vector is
held fixed over all $K$ local steps. The default $M$ and $K$ values are
those in Tables~\ref{tab:partition-settings} and
\ref{tab:shared-training-settings}. The objective-report budget
ablation replaces the full-client report by $B_{\rm rep}\in\{1,2,5\}$
minibatches without changing the local training minibatches.

\subsection{Reference Calibration}
\label{app:reference-selection}
The fixed FedHV training reference is constructed before adaptive
training. For \textsc{MultiMNIST} and \textsc{MNIST--FMNIST}, a single
$20$-round uniform-weight calibration run is used. For the
\textsc{CIFAR10--MNIST} family, one $20$-round calibration run with the
all-label $\alpha=0.30$ partition is used and the resulting reference is
reused for both $\alpha=0.10$ and $\alpha=0.30$; hence the reference is not
retuned to the lower-$\alpha$ setting. In each case we take the
componentwise maximum reported calibration loss and add margin
$\delta_r=.15$.

For \textsc{CelebA5}, the robust reference used in the reported runs is
constructed from three independent $20$-round calibration runs with
seeds $\{0,42,2026\}$ under the first-label $\alpha=0.30$ calibration
configuration. We take the componentwise maximum over all three runs and
add $\delta_r=.30$. This single reference is then reused for both
\textsc{CelebA5} benchmark settings and all adaptive-training seeds.

\begin{table}[H]
\centering
\caption{Frozen training references used by FedHV. These are optimization references $\bm r^{\rm train}$ and are distinct from the common evaluation reference $\bm r^{\rm eval}=3\,\mathbf{1}_m$.}
\label{tab:training-references}
\small
\begin{tabular}{ll}
\toprule
Benchmark family & $\bm r^{\rm train}$ \\
\midrule
\textsc{MultiMNIST} & $(2.458940,\;2.453542)$ \\
\textsc{MNIST--FMNIST} & $(2.454883,\;2.464302)$ \\
\textsc{CIFAR10--MNIST} & $(2.452658,\;2.466771)$ \\
\textsc{CelebA5} & $(1.297312,\;1.023714,\;1.060341,\;1.108676,\;1.085493)$ \\
\bottomrule
\end{tabular}
\end{table}
Only the resulting reference is reused: adaptive models are restarted
from their normal initialization rather than continued from calibration,
and all reported adaptive-round curves exclude calibration. Reference
construction is therefore a one-time preprocessing cost that baselines
do not require. It costs $20$ calibration rounds for each two-task
benchmark family; the robust \textsc{CelebA5} reference used here costs
three $20$-round calibration runs ($60$ sequential-round equivalents),
shared across both CelebA5 settings and all adaptive seeds. This setup
cost is distinct from our within-round communication claim: every
adaptive FedHV round still uses one downlink and one uplink with no
additional synchronization stage.

The calibration rule is empirical and does not by itself certify the
true safe-reference condition in Assumption~\ref{as:slack}. The
numerical floor $\rho_{\rm clip}$ stabilizes estimated slacks but
likewise does not establish that theoretical condition. Table
\ref{tab:reference-diagnostics} reports the closest diagnostic available
from the retained logs: the minimum raw slack formed from the aggregated
server objective report. All recorded standard-run report slacks remain
positive and clipping is inactive. These values are not true full-client
training losses at every local proof point, so they cannot certify
Assumption~\ref{as:slack}. The current runs also use only
$\delta_r\in\{0.15,0.30\}$ as described above; they do not constitute a
controlled reference-margin sensitivity sweep. Theorem
\ref{thm:reference-safety-certificate} provides a separate conditional
route from a verified initial margin to trajectory safety, but its
bounded-update and budget inequalities were not evaluated for these
runs.

\begin{table}[H]
\centering
\caption{Reference diagnostics over all adaptive rounds and three
training seeds. ``Minimum report slack'' is
$\min_{t,i}(r_i-\widehat f_i^{,t})$ for the server-aggregated report;
it is a practical diagnostic rather than the true-slack condition in
Assumption~\ref{as:slack}.}
\label{tab:reference-diagnostics}
\small
\begin{tabular}{lcc}
\toprule
Setting & Minimum report slack & Floor activation \\
\midrule
\textsc{MultiMNIST} & $0.1463$ & $0/1200$ \\
\textsc{MNIST--FMNIST} & $0.1413$ & $0/3000$ \\
\textsc{CelebA5}-A & $0.2981$ & $0/4500$ \\
\textsc{CelebA5}-B & $0.3000$ & $0/4500$ \\
\textsc{CIFAR10--MNIST}-A & $0.0282$ & $0/3000$ \\
\textsc{CIFAR10--MNIST}-B & $0.1412$ & $0/3000$ \\
\bottomrule
\end{tabular}
\end{table}

\subsection{Random Seeds and Reproducibility}
\label{app:reproducibility}
The client partition is generated once with partition seed $10$ and is
held fixed across methods within a setting. Adaptive training is
repeated with run seeds $0$, $42$, and $2026$. We report the arithmetic
mean and sample standard deviation across these three runs. GPU and
framework operations are not assumed fully deterministic, so small
sub-standard-deviation differences are not interpreted as statistically
significant. The three ablation sweeps were launched as independent
experiment suites; consequently, nominally identical default variants
(e.g., $K=10$, $M=10$, or $B_{\rm rep}=\mathrm{all}$) can differ by a
few tenths of a percentage point across sweeps. We therefore interpret
comparisons within each controlled sweep and do not treat such small
cross-sweep differences as effects. All compared methods use the same
evaluation schedule and test data within each setting.

\section{Complete Experimental Results}
\label{app:complete-results}

\subsection{Loss and Hypervolume Results}
\label{app:loss-hv-results}
\begin{table}[H]
\centering
\caption{Final loss-based results for Uniform and the three adaptive
FedMOO methods,
mean $\pm$ standard deviation over three seeds. Lower MeanLoss is
better; higher HV is better. HV uses the common evaluation reference
$\bm r^{\rm eval}=3\,\mathbf{1}_m$, not FedHV's training reference. Bold
marks the best result and underline the second-best result within each
setting.}
\label{tab:loss-hv-results}
\small
\setlength{\tabcolsep}{6pt}
\begin{tabular}{llcc}
\toprule
Setting & Method & MeanLoss $\downarrow$ & HV $\uparrow$ \\
\midrule
\textsc{MultiMNIST} & Uniform & \underline{$0.328\pm0.008$} & \underline{$7.14\pm0.04$} \\
 & FSMGDA & $0.359\pm0.032$ & $6.97\pm0.17$ \\
 & FedCMOO & $0.354\pm0.009$ & $7.00\pm0.05$ \\
 & FedHV & \textbf{\boldmath$0.312\pm0.019$} & \textbf{\boldmath$7.23\pm0.10$} \\
\addlinespace[1.5pt]
\textsc{MNIST--FMNIST} & Uniform & \underline{$0.379\pm0.006$} & \underline{$6.82\pm0.03$} \\
 & FSMGDA & $0.427\pm0.020$ & $6.57\pm0.10$ \\
 & FedCMOO & \textbf{\boldmath$0.372\pm0.007$} & \textbf{\boldmath$6.87\pm0.04$} \\
 & FedHV & $0.382\pm0.014$ & $6.81\pm0.07$ \\
\addlinespace[1.5pt]
\textsc{CelebA5}-A & Uniform & \textbf{\boldmath$0.267\pm0.003$} & \textbf{\boldmath$152.32\pm0.77$} \\
 & FSMGDA & $0.295\pm0.003$ & $144.94\pm0.85$ \\
 & FedCMOO & $0.302\pm0.005$ & $142.96\pm1.32$ \\
 & FedHV & \underline{$0.269\pm0.004$} & \underline{$151.78\pm1.05$} \\
\addlinespace[1.5pt]
\textsc{CelebA5}-B & Uniform & \textbf{\boldmath$0.271\pm0.002$} & \textbf{\boldmath$151.40\pm0.48$} \\
 & FSMGDA & $0.318\pm0.004$ & $138.74\pm0.94$ \\
 & FedCMOO & $0.342\pm0.038$ & $132.92\pm9.31$ \\
 & FedHV & \underline{$0.273\pm0.001$} & \underline{$150.92\pm0.25$} \\
\addlinespace[1.5pt]
\textsc{CIFAR10--MNIST}-A & Uniform & \underline{$0.821\pm0.013$} & \underline{$4.37\pm0.07$} \\
 & FSMGDA & \textbf{\boldmath$0.812\pm0.007$} & \textbf{\boldmath$4.44\pm0.05$} \\
 & FedCMOO & $0.857\pm0.027$ & $4.14\pm0.17$ \\
 & FedHV & $0.897\pm0.019$ & $3.98\pm0.11$ \\
\addlinespace[1.5pt]
\textsc{CIFAR10--MNIST}-B & Uniform & \underline{$0.766\pm0.013$} & \underline{$4.67\pm0.06$} \\
 & FSMGDA & $0.798\pm0.017$ & $4.48\pm0.09$ \\
 & FedCMOO & \textbf{\boldmath$0.753\pm0.008$} & \textbf{\boldmath$4.72\pm0.03$} \\
 & FedHV & $0.826\pm0.020$ & $4.37\pm0.10$ \\
\bottomrule
\end{tabular}
\end{table}
The CIFAR rows make the metric trade-off explicit: FedHV improves
mean and worst-task accuracy (Table~\ref{tab:main-results}) without
also optimizing the common-reference loss HV.

\FloatBarrier
\subsection{Per-Task Final Metrics}
\label{app:per-task-results}
\begin{table}[H]
\centering
\caption{Final task-level test performance for the four two-objective
settings, mean $\pm$ standard deviation over three seeds. Accuracies
are percentages. Bold is best and underline is second best in each
metric among FSMGDA, FedCMOO, and FedHV within a setting.}
\label{tab:per-task-final}
\scriptsize
\setlength{\tabcolsep}{3.2pt}
\begin{tabular}{llcccc}
\toprule
Setting & Method & Task 1 Acc. & Task 2 Acc. & Task 1 Loss & Task 2 Loss \\
\midrule
\textsc{MultiMNIST} $\alpha=0.15$ & FSMGDA & \underline{$91.41\pm0.31$} & $86.15\pm2.53$ & \underline{$0.292\pm0.012$} & $0.425\pm0.056$ \\
 & FedCMOO & $90.99\pm0.41$ & \underline{$88.79\pm0.57$} & $0.339\pm0.025$ & \underline{$0.369\pm0.008$} \\
 & FedHV & \textbf{\boldmath$92.50\pm0.68$} & \textbf{\boldmath$88.94\pm0.61$} & \textbf{\boldmath$0.257\pm0.018$} & \textbf{\boldmath$0.366\pm0.021$} \\
\addlinespace[1.5pt]
\textsc{MNIST--FMNIST} $\alpha=0.30$ & FSMGDA & $94.11\pm0.81$ & $75.58\pm1.05$ & $0.204\pm0.021$ & $0.650\pm0.020$ \\
 & FedCMOO & \underline{$94.81\pm0.43$} & \textbf{\boldmath$78.58\pm0.66$} & \underline{$0.186\pm0.019$} & \textbf{\boldmath$0.558\pm0.017$} \\
 & FedHV & \textbf{\boldmath$95.67\pm0.31$} & \underline{$77.03\pm0.70$} & \textbf{\boldmath$0.161\pm0.018$} & \underline{$0.602\pm0.012$} \\
\addlinespace[1.5pt]
\textsc{CIFAR10--MNIST} $\alpha=0.10$ & FSMGDA & \underline{$51.96\pm0.55$} & \underline{$93.56\pm0.25$} & \textbf{\boldmath$1.405\pm0.022$} & \underline{$0.218\pm0.009$} \\
 & FedCMOO & $47.98\pm1.15$ & \textbf{\boldmath$95.10\pm0.34$} & \underline{$1.529\pm0.064$} & \textbf{\boldmath$0.184\pm0.011$} \\
 & FedHV & \textbf{\boldmath$53.03\pm0.35$} & $93.34\pm0.08$ & $1.561\pm0.041$ & $0.232\pm0.003$ \\
\addlinespace[1.5pt]
\textsc{CIFAR10--MNIST} $\alpha=0.30$ & FSMGDA & $51.85\pm1.00$ & \underline{$94.36\pm0.32$} & \underline{$1.408\pm0.028$} & \underline{$0.188\pm0.007$} \\
 & FedCMOO & \underline{$54.23\pm0.88$} & \textbf{\boldmath$95.53\pm0.16$} & \textbf{\boldmath$1.328\pm0.008$} & \textbf{\boldmath$0.177\pm0.011$} \\
 & FedHV & \textbf{\boldmath$57.55\pm0.20$} & $93.38\pm0.13$ & $1.423\pm0.031$ & $0.228\pm0.009$ \\
\bottomrule
\end{tabular}
\end{table}

\begin{figure}[t]
    \centering
    \includegraphics[width=0.88\textwidth]{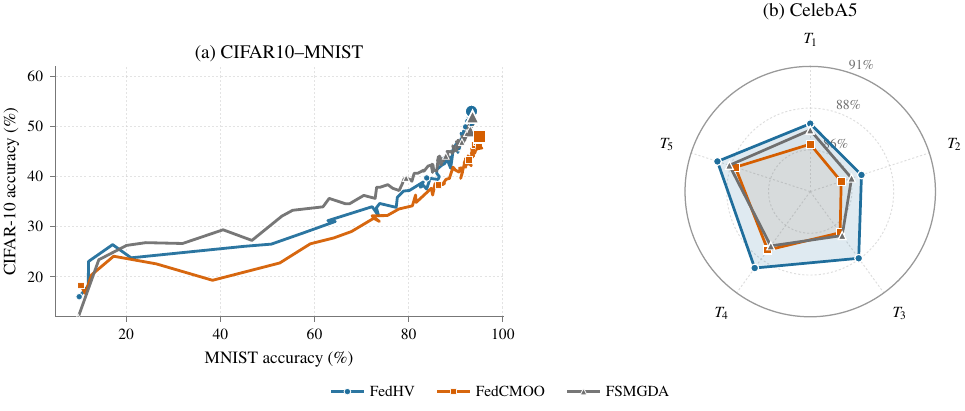}
    \caption{Alternate task-level views using the same visual encoding
    as Figure~\ref{fig:main-task-profiles}. \textbf{(a)} Round-by-round
    training trajectories for \textsc{CIFAR10--MNIST}-A (all-label
    Dirichlet, $\alpha=0.10$), with MNIST accuracy on the horizontal
    axis and CIFAR-10 accuracy on the vertical axis. Each path follows
    scheduled evaluation checkpoints from early to late training; the
    terminal cross shows $\pm1$ standard deviation on both objectives.
    \textbf{(b)} Mean five-objective radar profile for \textsc{CelebA5}-A
    (all-label Dirichlet, $\alpha=0.15$). The radial scale is truncated
    to $83$--$91\%$ for legibility; task-wise uncertainty is reported
    in Table~\ref{tab:per-task-final}. The main paper reports the
    corresponding B settings.}
    \label{fig:appendix-task-profiles}
\end{figure}

\FloatBarrier
\subsection{Aggregate Learning Curves}
\label{app:aggregate-curves}
Each curve below is the mean over three seeds; shaded bands show
$\pm1$ standard deviation. The following figures show the optimization
trajectories of FedHV, FSMGDA, and FedCMOO across communication rounds.

\begin{figure}[tbp]
    \centering
    \includegraphics[width=0.90\textwidth]{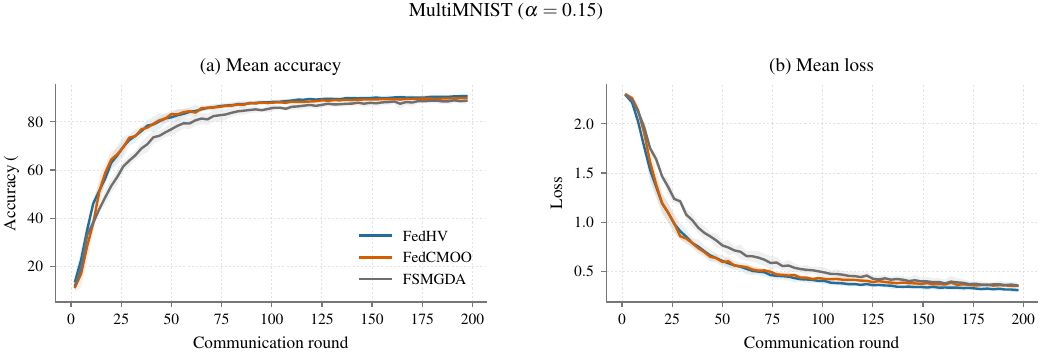}
    \caption{Aggregate test accuracy and loss for \textsc{MultiMNIST} $\alpha=0.15$ across adaptive communication rounds.}
    \label{fig:curve-multimnist-alpha015}
\end{figure}

\begin{figure}[tbp]
    \centering
    \includegraphics[width=0.90\textwidth]{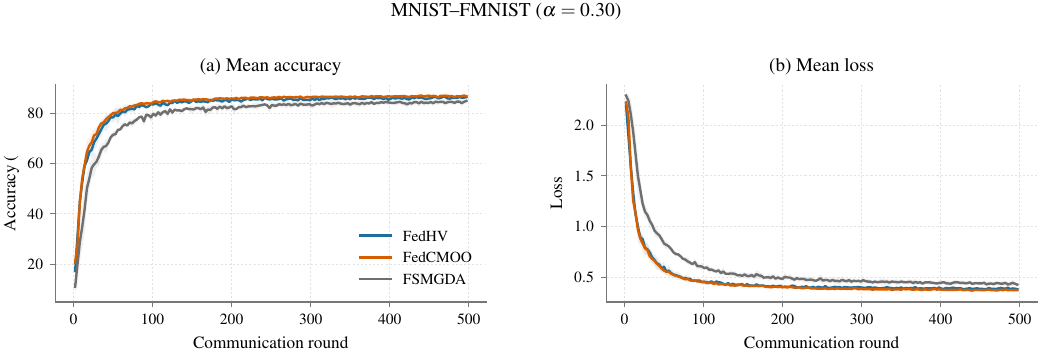}
    \caption{Aggregate test accuracy and loss for \textsc{MNIST--FMNIST} $\alpha=0.30$ across adaptive communication rounds.}
    \label{fig:curve-mnist-fmnist-alpha03}
\end{figure}

\begin{figure}[tbp]
    \centering
    \includegraphics[width=0.90\textwidth]{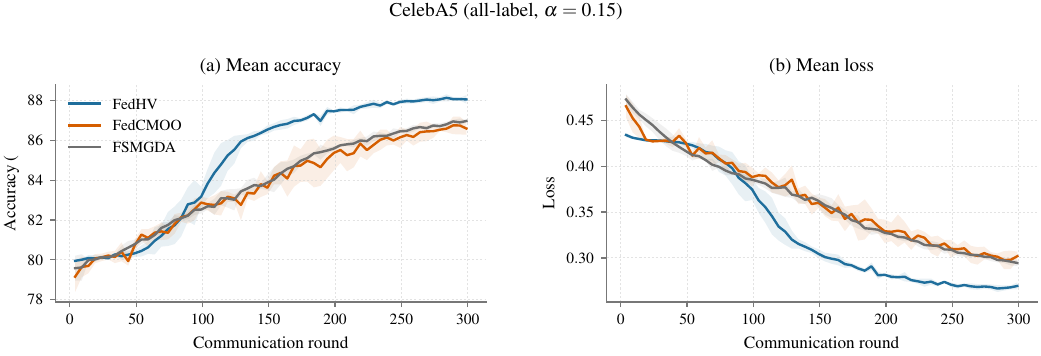}
    \caption{Aggregate test accuracy and loss for \textsc{CelebA5} $\alpha=0.15$ across adaptive communication rounds.}
    \label{fig:curve-celeba5-alpha015}
\end{figure}

\begin{figure}[tbp]
    \centering
    \includegraphics[width=0.90\textwidth]{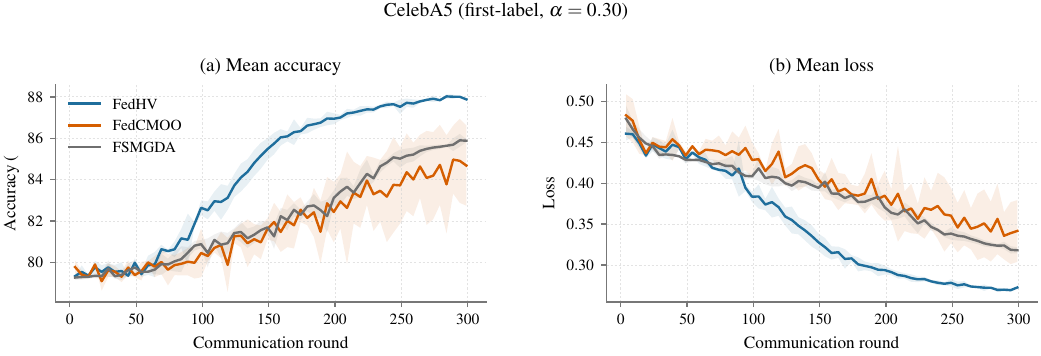}
    \caption{Aggregate test accuracy and loss for \textsc{CelebA5} $\alpha=0.30$ across adaptive communication rounds.}
    \label{fig:curve-celeba5-alpha03}
\end{figure}

\begin{figure}[tbp]
    \centering
    \includegraphics[width=0.90\textwidth]{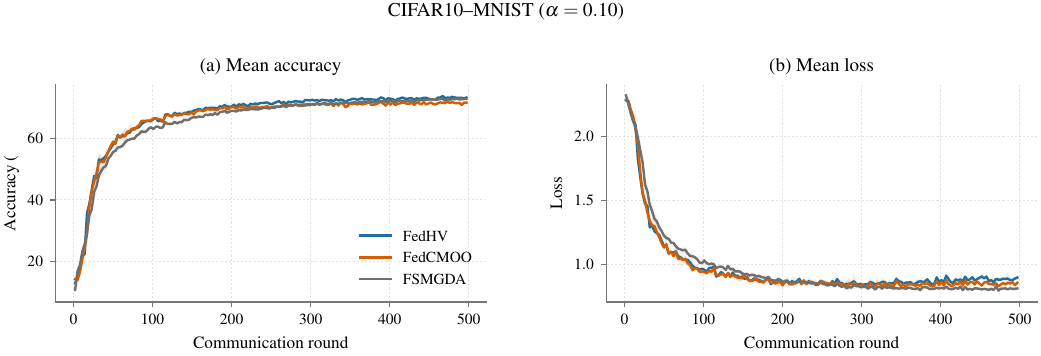}
    \caption{Aggregate test accuracy and loss for \textsc{CIFAR10--MNIST} $\alpha=0.10$ across adaptive communication rounds.}
    \label{fig:curve-cifar10-mnist-alpha01}
\end{figure}

\begin{figure}[tbp]
    \centering
    \includegraphics[width=0.90\textwidth]{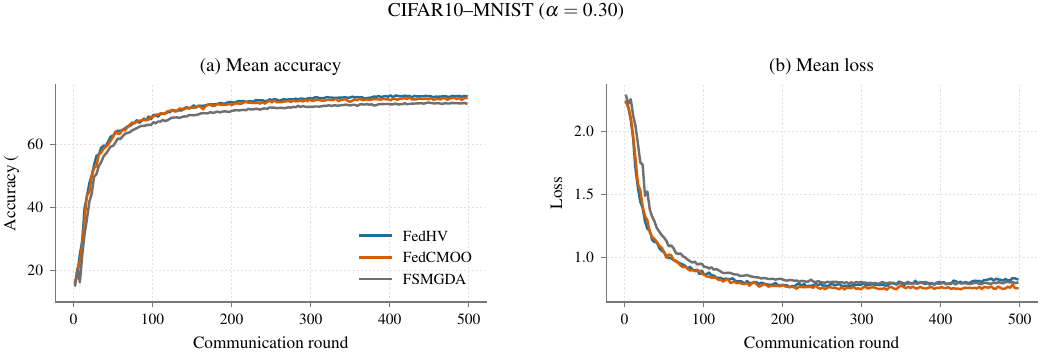}
    \caption{Aggregate test accuracy and loss for \textsc{CIFAR10--MNIST} $\alpha=0.30$ across adaptive communication rounds.}
    \label{fig:curve-cifar10-mnist-alpha03}
\end{figure}

\FloatBarrier
\subsection{Per-Task Learning Curves}
\label{app:per-task-curves}
For the two-objective benchmarks, the following figures separate the
accuracy and loss trajectory of each task. This avoids hiding a
difficult objective behind the mean.

\begin{figure}[H]
    \centering
    \includegraphics[width=0.90\textwidth]{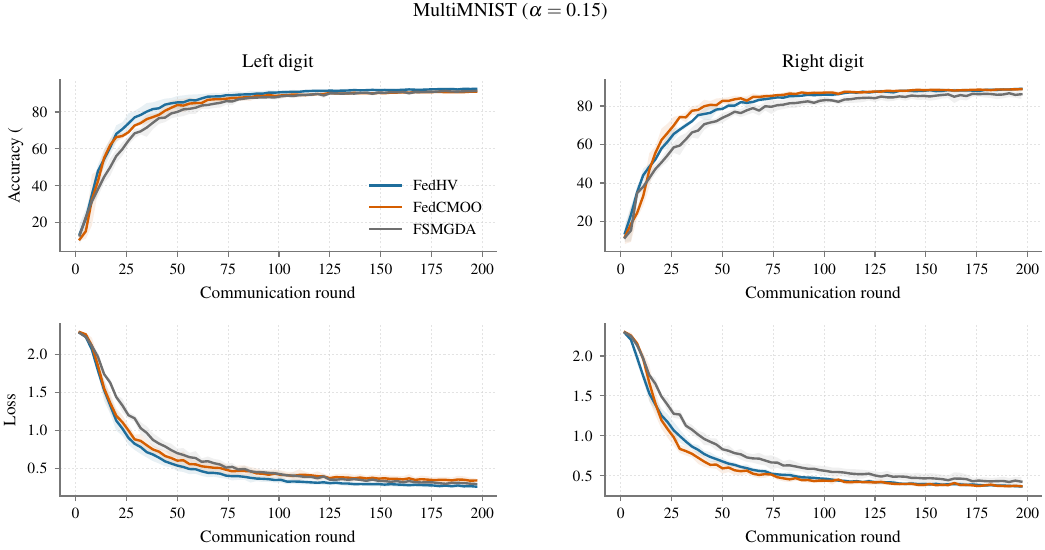}
    \caption{Per-task test accuracy and loss for \textsc{MultiMNIST} $\alpha=0.15$, mean $\pm$ standard deviation over three seeds.}
    \label{fig:per-task-multimnist-alpha015}
\end{figure}

\begin{figure}[tbp]
    \centering
    \includegraphics[width=0.90\textwidth]{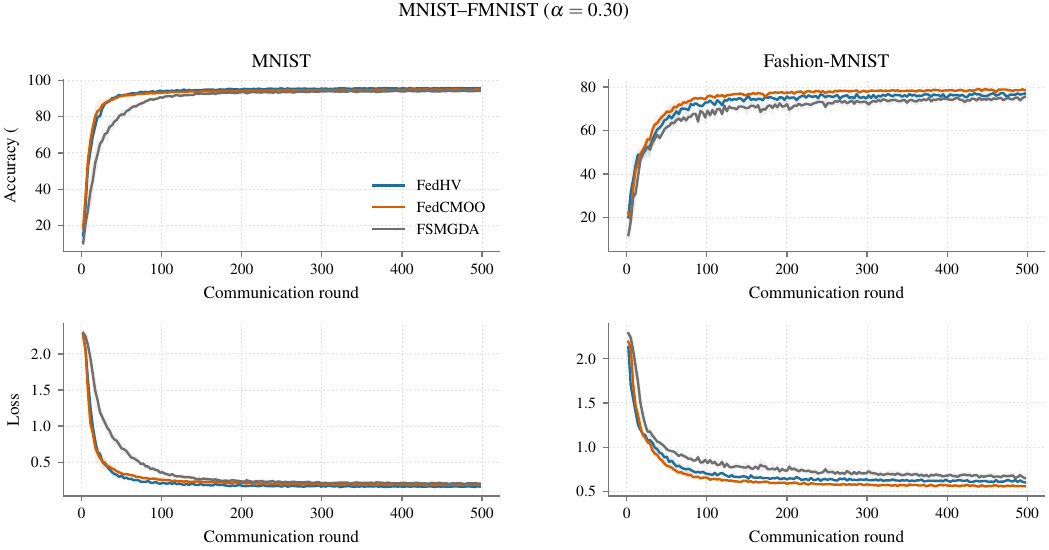}
    \caption{Per-task test accuracy and loss for \textsc{MNIST--FMNIST} $\alpha=0.30$, mean $\pm$ standard deviation over three seeds.}
    \label{fig:per-task-mnist-fmnist-alpha03}
\end{figure}

\begin{figure}[tbp]
    \centering
    \includegraphics[width=0.90\textwidth]{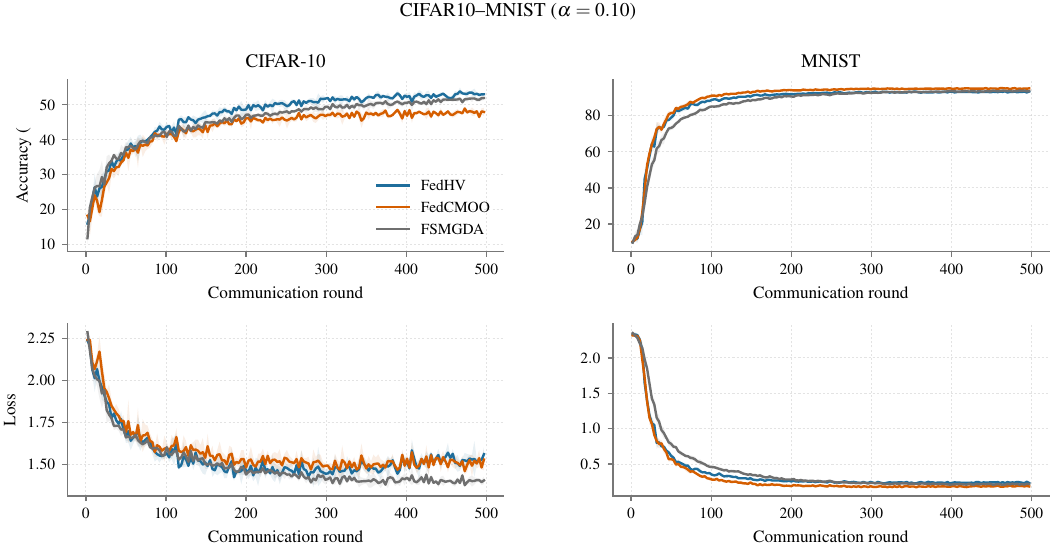}
    \caption{Per-task test accuracy and loss for \textsc{CIFAR10--MNIST} $\alpha=0.10$, mean $\pm$ standard deviation over three seeds.}
    \label{fig:per-task-cifar10-mnist-alpha01}
\end{figure}

\begin{figure}[tbp]
    \centering
    \includegraphics[width=0.90\textwidth]{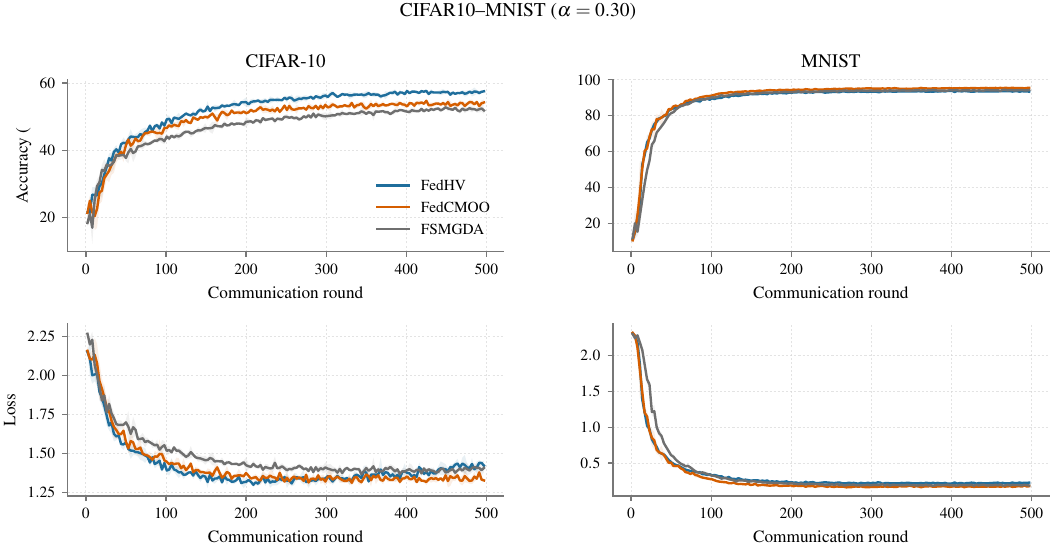}
    \caption{Per-task test accuracy and loss for \textsc{CIFAR10--MNIST} $\alpha=0.30$, mean $\pm$ standard deviation over three seeds.}
    \label{fig:per-task-cifar10-mnist-alpha03}
\end{figure}

\FloatBarrier
\subsection{FedHV Ablations}
\label{app:ablation-results}
\begin{table}[H]
\centering
\caption{FedHV ablations on \textsc{CIFAR10--MNIST} $\alpha=0.30$,
mean $\pm$ standard deviation over three seeds. Bold marks the best
value and underline the second-best value within each ablation sweep;
loss uses four decimals to resolve rounding ties.}
\label{tab:ablation-full}
\small
\begin{tabular}{llcc}
\toprule
Ablation & Variant & MeanAcc $\uparrow$ & MeanLoss $\downarrow$ \\
\midrule
Report budget & $B_{\rm rep}=1$ & $75.52\pm0.19$ & $0.8314\pm0.0078$ \\
 & $B_{\rm rep}=2$ & \underline{$75.54\pm0.45$} & \underline{$0.8279\pm0.0151$} \\
 & $B_{\rm rep}=5$ & \textbf{\boldmath$75.66\pm0.18$} & \textbf{\boldmath$0.8274\pm0.0148$} \\
 & $B_{\rm rep}=\mathrm{all}$ & $75.43\pm0.43$ & $0.8332\pm0.0203$ \\
\addlinespace[1.5pt]
Participation & $M=5/30$ & $73.96\pm0.93$ & $0.8639\pm0.0346$ \\
 & $M=10/30$ & $75.41\pm0.24$ & $0.8396\pm0.0101$ \\
 & $M=20/30$ & \underline{$76.13\pm0.13$} & \underline{$0.8166\pm0.0050$} \\
 & $M=30/30$ & \textbf{\boldmath$76.27\pm0.27$} & \textbf{\boldmath$0.8097\pm0.0112$} \\
\addlinespace[1.5pt]
Local steps & $K=1$ & $61.98\pm1.59$ & $1.0522\pm0.0373$ \\
 & $K=5$ & $74.82\pm0.26$ & \textbf{\boldmath$0.7455\pm0.0055$} \\
 & $K=10$ & \textbf{\boldmath$75.61\pm0.36$} & \underline{$0.8330\pm0.0175$} \\
 & $K=20$ & \underline{$75.18\pm0.50$} & $0.9778\pm0.0252$ \\
\bottomrule
\end{tabular}
\end{table}
The report-budget ablation demonstrates that FedHV remains effective
with inexpensive objective estimation: one to five reporting
minibatches closely track full-client reporting, supporting sparse
objective reports in the tested regime. Increasing participation
improves both accuracy and loss as broader client coverage reduces the
sampling channels represented by $\chi_N(M)$ in the analysis. The
local-step sweep exhibits the expected progress--drift trade-off:
moderate $K$ supplies enough local optimization to avoid undertraining,
whereas larger $K$ strengthens client drift after accuracy has
saturated. Together, these results motivate the default reporting,
participation, and local-computation budgets used in the main study.

\begin{figure}[tbp]
\centering
\includegraphics[width=0.90\textwidth]{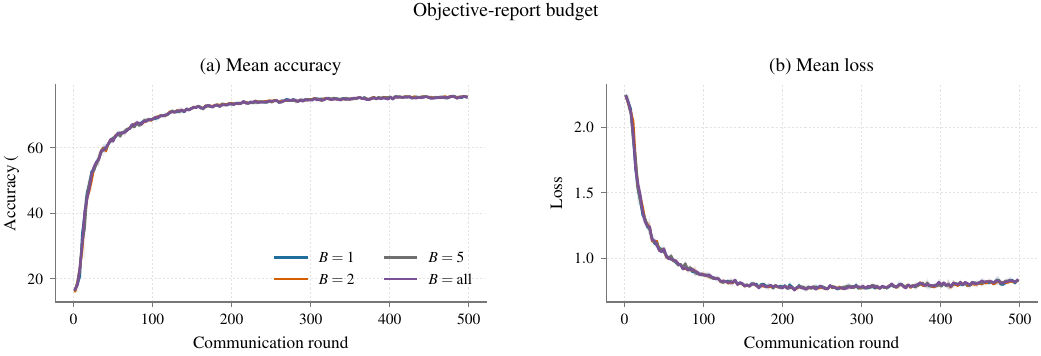}
\caption{Objective-report-budget learning curves.}
\label{fig:ablation-report-curves}
\end{figure}
\begin{figure}[tbp]
\centering
\includegraphics[width=0.90\textwidth]{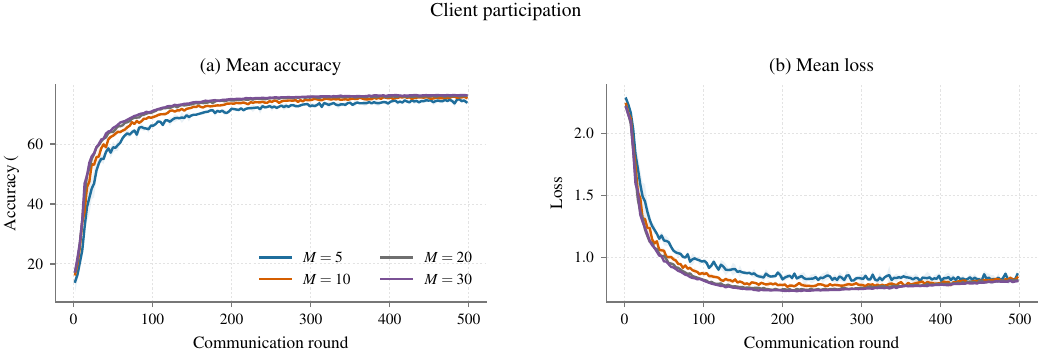}
\caption{Client-participation learning curves.}
\label{fig:ablation-participation-curves}
\end{figure}
\begin{figure}[tbp]
\centering
\includegraphics[width=0.90\textwidth]{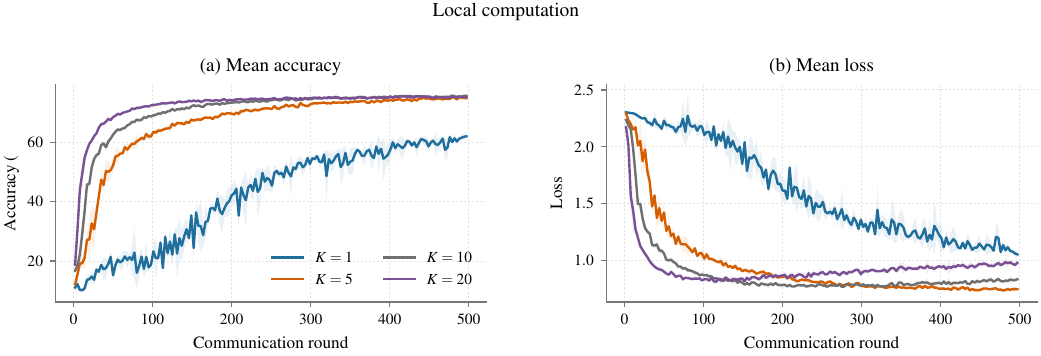}
\caption{Local-step learning curves.}
\label{fig:ablation-k-curves}
\end{figure}

\FloatBarrier
\subsection{FedHV Weight Trajectories}
\label{app:weight-trajectories}
\begin{figure}[!t]
\centering
\includegraphics[width=0.90\textwidth]{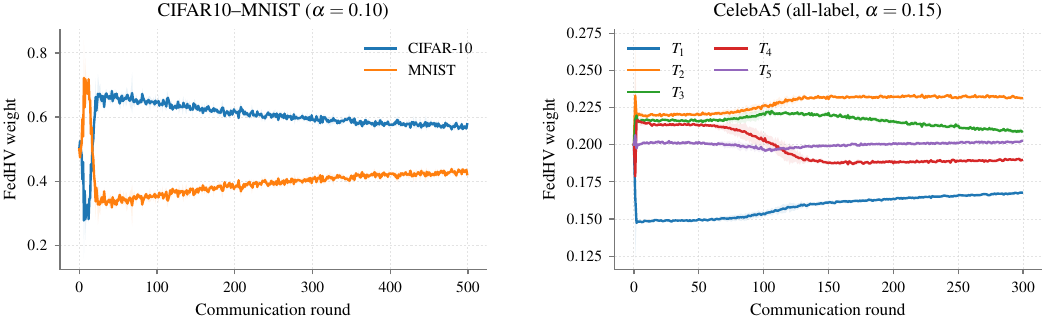}
\caption{FedHV task weights over adaptive rounds, mean $\pm$ standard
 deviation across three seeds. On \textsc{CIFAR10--MNIST}
 $\alpha=0.10$, FedHV assigns greater late-stage weight to the harder
 CIFAR-10 task; on \textsc{CelebA5}, mass remains distributed over all
 five objectives rather than collapsing to one task.}
\label{fig:fedhv-weight-trajectories}
\end{figure}

\FloatBarrier
\subsection{Additional Diagnostics}
\label{app:diagnostics}
The estimated-slack floor is inactive in the standard benchmark runs
and in the $B_{\rm rep}$ and participation ablations. The $K=1$
configuration is the exception: its diagnostic floor-activation rate is
approximately $0.2\%$, with minimum stabilized slack $10^{-6}$, and its
task weights can become nearly degenerate. This is a diagnostic of the
\emph{estimated} slack stabilization and does not establish failure of
the true safe-reference condition in Assumption~\ref{as:slack}.

\begin{table}[H]
\centering
\caption{Mean accuracy averaged over the final ten scheduled test
evaluations, then aggregated over three seeds. This diagnostic reduces
sensitivity to one final checkpoint; bold is best and underline is
second best among FSMGDA, FedCMOO, and FedHV within each setting.}
\label{tab:last10}
\small
\begin{tabular}{lccc}
\toprule
Setting & FSMGDA & FedCMOO & FedHV \\
\midrule
\textsc{MultiMNIST} $\alpha=0.15$ & $88.60\pm1.07$ & \underline{$89.66\pm0.08$} & \textbf{\boldmath$90.38\pm0.60$} \\
\textsc{MNIST--FMNIST} $\alpha=0.30$ & $84.39\pm1.01$ & \textbf{\boldmath$86.75\pm0.21$} & \underline{$86.13\pm0.17$} \\
\textsc{CelebA5} $\alpha=0.15$ & \underline{$86.75\pm0.11$} & $86.50\pm0.32$ & \textbf{\boldmath$88.05\pm0.11$} \\
\textsc{CelebA5} $\alpha=0.30$ & \underline{$85.55\pm0.18$} & $84.43\pm1.92$ & \textbf{\boldmath$87.87\pm0.06$} \\
\textsc{CIFAR10--MNIST} $\alpha=0.10$ & \underline{$72.61\pm0.30$} & $71.44\pm0.19$ & \textbf{\boldmath$73.13\pm0.22$} \\
\textsc{CIFAR10--MNIST} $\alpha=0.30$ & $73.19\pm0.35$ & \underline{$74.67\pm0.16$} & \textbf{\boldmath$75.41\pm0.29$} \\
\bottomrule
\end{tabular}
\end{table}
Averaging the last ten scheduled evaluations preserves the main
qualitative MeanAcc story: FedHV remains strongest in five of six
settings against FSMGDA and FedCMOO, while FedCMOO remains ahead on
\textsc{MNIST--FMNIST}. On \textsc{CIFAR10--MNIST}, the accuracy and
cross-entropy trajectories provide complementary views of the learned
trade-off, since the two metrics emphasize prediction quality and
surrogate optimization, respectively.

\section{Complexity Comparison}
\label{app:complexity}

Let \(d\) denote the model dimension, \(m\) the number of
objectives, \(M=|S_t|\) the number of participating clients,
and \(K\) the number of local optimization steps.

\paragraph{FedHV overhead.}
FedHV uses one server-to-client downlink and one client-to-server
uplink per communication round. Specifically, the server sends
\((\thetav^t,\wv^t)\) to each selected client in a single downlink, 
and client \(k\) returns \((\Delta\thetav_k^t,\widehat{\bm F}_k^{\,t})\)
in a single uplink. The objective estimates collected in round \(t\)
are used only to prepare \(\wv^{t+1}\), so no intermediate
loss-reporting round followed by a second weight-broadcast round is
required.

A standard model exchange communicates \(d\) model scalars in each
direction, whereas FedHV additionally communicates \(m\) task weights
in the downlink and \(m\) objective values in the uplink. Hence the total
per-participating-client payload is \(2d+2m\) scalars per round, 
compared with \(2d\) for the model exchange alone. Equivalently, the 
communication complexity is \(\Theta(d+m)=\Theta(d)\) when \(m\ll d\).

Including reference construction changes the end-to-end accounting.
Let $T_{\rm cal}$ be the number of sequential calibration-round
equivalents and $M_{\rm cal}$ its participating-client count. The total
FedHV payload over calibration and adaptive training is
\(2d\,T_{\rm cal}M_{\rm cal}+(2d+2m)TM\) scalars, before protocol-specific 
baseline auxiliaries. Here $T_{\rm cal}=20$ for each two-task benchmark family 
and $T_{\rm cal}=60$ for the robust \textsc{CelebA5} reference. Thus the
calibration cost is included in the end-to-end budget even though it is
excluded from adaptive-round learning curves.

\begin{table*}[!t]
\centering
\caption{
Dominant per-round complexity of the compared methods.
Here, \(d\) is the model dimension, \(m\) is the number of
objectives, and \(M\) is the number of participating clients.
Empirical transmission values reflect the \textsc{CIFAR10--MNIST} benchmark.
}
\label{tab:complexity_comparison}
\renewcommand{\arraystretch}{1.15}
\setlength{\tabcolsep}{5pt}
\begin{tabularx}{\textwidth}{
@{}
>{\raggedright\arraybackslash}p{1.65cm}
>{\raggedright\arraybackslash}X
>{\centering\arraybackslash}p{3.5cm}
>{\raggedright\arraybackslash}X
@{}
}
\toprule
\textbf{Method} & \textbf{Client-side procedure} & \textbf{Per-client communication} & \textbf{Additional server-side procedure} \\
\midrule
FSMGDA & Separate objective-specific local updates & \(\Theta(md)\) \newline (Empirical: $\sim 205.5$ MB) & Task-update geometry and multi-gradient optimization \\
Uniform & One uniformly weighted local trajectory & \(\Theta(d)\) \newline (Empirical: $\sim 138.1$ MB) & None beyond model aggregation \\
FedCMOO & Task-gradient estimation and weighted local training & \(\Theta(d)\) \newline (Empirical: $\sim 205.5$ MB) & Approximate Gram matrix construction and iterative weight updates \\
FedHV & Objective estimation and one weighted local trajectory & \(\Theta(d+m)=\Theta(d)\) \newline (Empirical: $\sim 138.1$ MB) & \(O(Mm)\) objective aggregation and closed-form inverse-slack weighting \\
\bottomrule
\end{tabularx}
\end{table*}

\paragraph{Comparison with FSMGDA.}
FSMGDA generates and communicates objective-specific local
updates, leading to communication that scales with \(md\)
\citep{yang2023fmol}. FedHV instead maintains one
local model trajectory and communicates only an
\(m\)-dimensional objective vector in addition to the
ordinary model update.

\paragraph{Comparison with FedCMOO.}
The communication-efficient version of FedCMOO uses
randomized low-rank estimation to obtain model-order
communication complexity \citep{askin2025fedcmoo}. However, it still requires
task-gradient information, an approximate task-gradient
Gram matrix, and iterative server-side weight optimization.
While FedCMOO successfully reduces communication from $\mathcal{O}(md)$ to $\mathcal{O}(d)$ via randomized-SVD, our empirical profiling reveals that uploading the compressed task-geometry matrix effectively doubles the actual uplink payload ($\sim 136.50$ MB compared to the $\sim 69.06$ MB base model update). FedHV avoids gradient-geometry estimation entirely, requiring only $80$ bytes of auxiliary scalar reports on the uplink, thereby achieving the theoretical $\mathcal{O}(d)$ bound with a practical multiplier of $1$.

\paragraph{Empirical Efficiency \& Synchronization.}
Protocol profiling further reveals that FedCMOO requires two distinct server-client synchronization stages per round to negotiate update geometry and weight projections. FedHV resolves weights entirely from the previous round's one-dimensional reports, operating seamlessly within a single standard synchronization stage. In high-latency federated environments, avoiding this secondary barrier is critical. 

Table~\ref{tab:empirical_efficiency} and Figure~\ref{fig:wall_clock_time} summarize the resulting end-to-end efficiency. Despite including the upfront penalty of its 20-round calibration phase, FedHV has lower measured wall-clock time and exhibits a tighter variance ($\pm 5.69$ seconds) compared to the erratic runtime fluctuations of FSMGDA ($\pm 33.31$ seconds). This confirms that calculating weights from scalar objective slacks mitigates the computational volatility inherent to projecting multi-dimensional task gradients.

\begin{table}[h]
\centering
\caption{\textbf{Empirical Efficiency Analysis (\textsc{CIFAR10--MNIST}).} Metrics are averaged over three independent seeds for 300 adaptive communication rounds. Total values for FedHV strictly incorporate the one-time 20-round uniform-weight calibration phase ($\sim 117$s, $\sim 2.76$ GB) to ensure a fair end-to-end comparison.}
\label{tab:empirical_efficiency}
\begin{tabular}{lcc}
\toprule
\textbf{Method} & \textbf{Total Wall-Clock Time (s)} & \textbf{Downlink / Uplink per Client (MB)} \\
\midrule
FSMGDA & $2824.73 \pm 33.31$ & $69.06$ / $136.50$ \\
FedCMOO & $2521.24 \pm 8.98$ & $69.06$ / $136.50$ \\
\textbf{FedHV} (Ours) & \textbf{\boldmath$1899.25 \pm 5.69$} & \textbf{\boldmath$69.06$ / $69.06$} \\
\bottomrule
\end{tabular}
\end{table}

\begin{figure}[!b]
    \centering
    \includegraphics[width=0.85\textwidth]{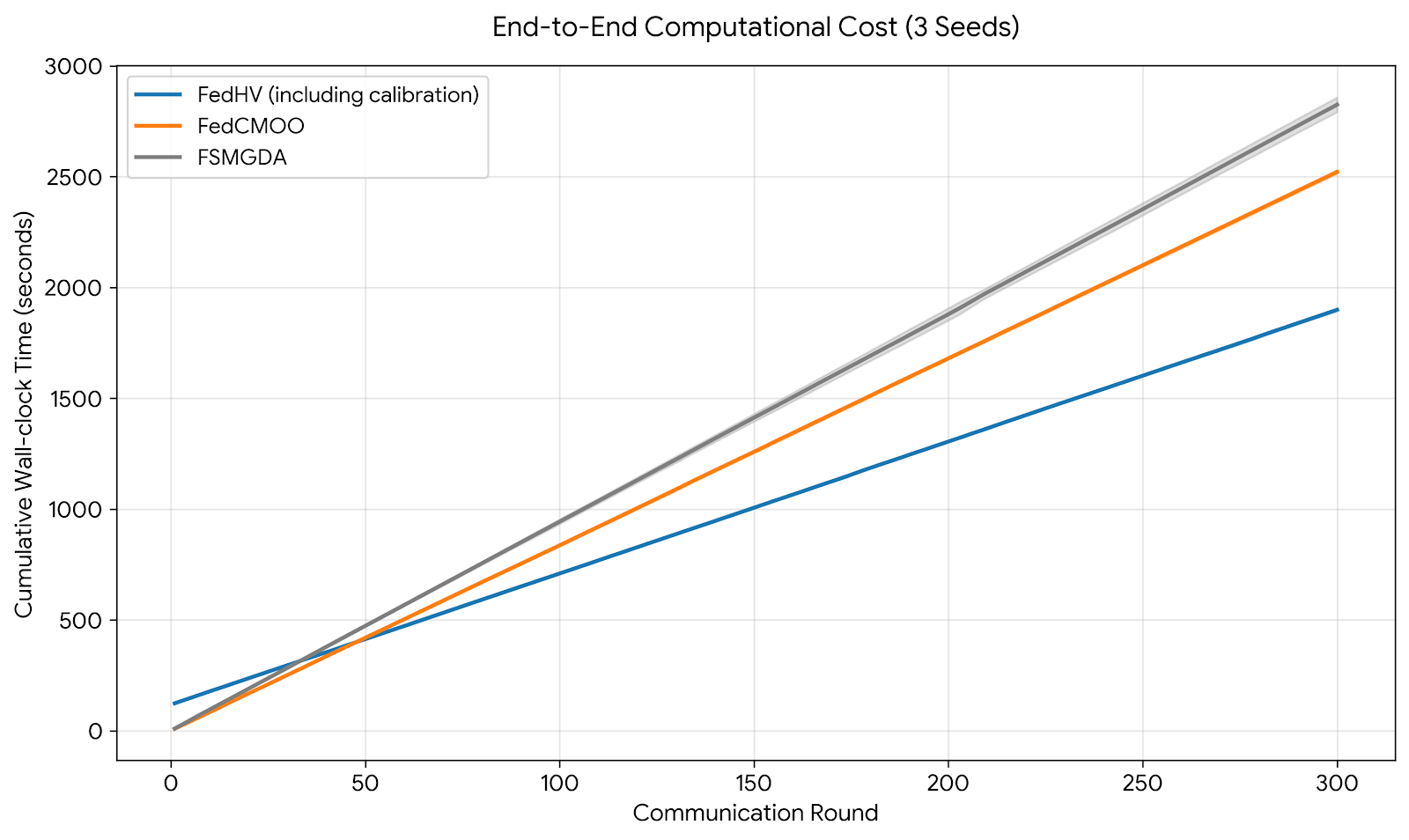}
    \caption{\textbf{End-to-End Computational Cost.} Cumulative wall-clock time over 300 communication rounds on \textsc{CIFAR10--MNIST} ($\alpha=0.30$). The FedHV trajectory strictly includes the runtime of its 20-round uniform-weight calibration phase as an initial offset. FedHV avoids iterative server-side gradient optimization, resulting in a $\sim 24.7\%$ speedup over FedCMOO and $\sim 32.9\%$ speedup over FSMGDA with significantly lower runtime variance.}
    \label{fig:wall_clock_time}
\end{figure}
\section{Complete FedHV Algorithm}
\label{app:fedhv-algorithm}

Algorithm~\ref{alg:fedhv} gives the full round-level pseudocode. The
reference vector is calibrated and frozen before this adaptive phase;
see Appendix~\ref{app:reference-selection}.

\begin{algorithm}[H]
\caption{FedHV training procedure with one-round-stale inverse-slack weights}
\label{alg:fedhv}
\footnotesize
\begin{algorithmic}[1]
\STATE \textbf{Input:} rounds $T$; local steps $K$; client fraction $p$;
report budget $B_{\rm rep}$; stepsizes $\eta_\ell,\eta_g$; fixed
reference $\bm r$; estimated-slack floor $\rho_{\mathrm{clip}}$.
\STATE \textbf{Server initialization:}
$\thetav^0\leftarrow\textsc{InitializeModel}()$ and
$\wv^0\leftarrow\frac{1}{m}\mathbf{1}_m$.
\FOR{$t=0,1,\ldots,T-1$}
  \STATE \textbf{Server selection:} sample $S_t\subseteq[N]$ with
  $|S_t|=pN$ and broadcast $(\thetav^t,\wv^t)$.
  \FORALL{$k\in S_t$ \textbf{ in parallel}}
    \STATE \textbf{Client report:} estimate $\widehat{\bm F}_k^{\,t}$
    from $B_{\rm rep}$ minibatches at the pre-update model $\thetav^t$.
    \STATE \textbf{Client initialization:}
    $\thetav_{k,0}^t\leftarrow\thetav^t$.
    \FOR{$s=0,1,\ldots,K-1$}
      \STATE Draw a fresh minibatch $B_{k,s}^t$
      \STATE $\displaystyle
      \thetav_{k,s+1}^t\leftarrow\thetav_{k,s}^t-
      \eta_\ell\nabla_\theta\!\left[
      \sum_{i=1}^m\sg{w_i^t}\,
      \ell_i(\thetav_{k,s}^t;B_{k,s}^t)\right]$
    \ENDFOR
    \STATE $\Delta\thetav_k^t\leftarrow
    \thetav_{k,K}^t-\thetav^t$
    \STATE \textbf{Client return:}
    $(\Delta\thetav_k^t,\widehat{\bm F}_k^{\,t})$.
  \ENDFOR
  \STATE \textbf{Server model aggregation:}
  $\displaystyle \thetav^{t+1}\leftarrow\thetav^t+
  \eta_g |S_t|^{-1}\sum_{k\in S_t}\Delta\thetav_k^t$
  \STATE \textbf{Server report aggregation:}
  $\displaystyle \widehat{\bm F}^{\,t}\leftarrow
  |S_t|^{-1}\sum_{k\in S_t}\widehat{\bm F}_k^{\,t}$
  \STATE \textbf{Server weight update:}
  $\widehat s_i^t\leftarrow
  \max\{r_i-\widehat f_i^{\,t},\rho_{\mathrm{clip}}\}$ for every
  $i\in[m]$.
  \STATE $\displaystyle w_i^{t+1}\leftarrow
  \frac{(\widehat s_i^t)^{-1}}
  {\sum_{j=1}^m(\widehat s_j^t)^{-1}}$
  for every $i\in[m]$.
\ENDFOR
\STATE \textbf{Output:} global model $\thetav^T$.
\end{algorithmic}
\end{algorithm}

\end{document}